\PassOptionsToPackage{dvipsnames, table, xcdraw}{xcolor}
\documentclass[]{fairmeta}
\usepackage{url}

\usepackage{amsthm}

\usepackage{minitoc}

\usepackage{array}
\definecolor{linkColor}{HTML}{E74C3C}
\definecolor{pearcomp}{HTML}{B97E29}
\definecolor{citeColor}{HTML}{2980B9}
\definecolor{urlColor}{HTML}{1D2DEC}
\definecolor{conjColor}{HTML}{9ab569}
\usepackage{pgfplots}
\pgfplotsset{compat=1.18}
\usepackage{amsmath,amssymb}
\usepackage[capitalise]{cleveref}
\usepackage{mathtools}
\usepackage{stmaryrd}
\pgfmathdeclarefunction{gauss}{3}{
  \pgfmathparse{#3/(#2*sqrt(2*pi))*exp(-((x-#1)^2)/(2*#2^2))}
}
\pgfmathdeclarefunction{mingauss}{4}{
  \pgfmathparse{#4/(#3*sqrt(2*pi))*exp(-max((x-#1)^2, (x-#2)^2)/(2*#3^2))}
}

\tikzset{
  invisible/.style={opacity=0},
  visible on/.style={alt={#1{}{invisible}}},
  alt/.code args={<#1>#2#3}{
    \alt<#1>{\pgfkeysalso{#2}}{\pgfkeysalso{#3}}
  },
}

\newtheorem{definition}{\textbf{Definition}}

\newtheorem{lemma}{\textbf{Lemma}}
\newtheorem{theorem}{\textbf{Theorem}}

\newtheorem{proposition}{\textbf{Proposition}}

\usepackage{bbold}

\usepackage[normalem]{ulem}
\usetikzlibrary{shapes, positioning}

\renewcommand{\cite}[1]{\citep{#1}}

\usepackage{color}
\definecolor{cm}{RGB}{0,0,200}
\definecolor{purple}{RGB}{200,0,200}

\usepackage[intoc]{nomencl}
\makenomenclature

\makeatletter
\newcommand{\vast}{\bBigg@{2.5}}
\newcommand{\Vast}{\bBigg@{5}}
\makeatother

\DeclareMathOperator{\E}{\mathbb{E}}

\newcommand{\Cov}{\mathbb{C}\mathrm{ov}}

\usepackage{bbm}

\usepackage[utf8]{inputenc}
\usepackage[LGR,T1]{fontenc}

\usepackage{breqn}
\usepackage{enumitem,kantlipsum}

\newcommand{\KL}{\mathrm{KL}}

\definecolor{rliableolive}{HTML}{BBCC33}
\definecolor{rliableblue}{HTML}{77AADD}
\definecolor{rliablered}{HTML}{EE8866}

\definecolor{takeawaycolor}{HTML}{FFF1E6}
\definecolor{takeawaycolor2}{HTML}{F2F6FF}
\definecolor{takeawaycolor3}{HTML}{EAF6EF}
\definecolor{takeawaycolor4}{HTML}{E6F4F1}
\definecolor{takeawaycolor5}{HTML}{FFF9E5}
\definecolor{takeawayborder}{HTML}{FFD8C2}
\definecolor{takeawayheader}{HTML}{0B3C8C}

\tcbset{
  aibox/.style={
    width=\linewidth,
    top=9pt,
    bottom=4pt,
    colback=takeawaycolor5!100!white,
    colframe=black!75!white,
    colbacktitle=black!75!white,
    boxrule=0.75pt,
    enhanced,
    center,
    attach boxed title to top left={yshift=-0.1in,xshift=0.15in},
    boxed title style={boxrule=0pt,colframe=white,},
  }
}
\newtcolorbox{AIbox}[2][]{aibox,title=#2,#1}

\tcbset{
  greenaibox/.style={
    width=\linewidth,
    top=9pt,
    bottom=4pt,
    colback=takeawaycolor3!100!white,
    colframe=black!75!white,
    colbacktitle=black!75!white,
    boxrule=0.75pt,
    enhanced,
    center,
    attach boxed title to top left={yshift=-0.1in,xshift=0.15in},
    boxed title style={boxrule=0pt,colframe=white,},
  }
}
\newtcolorbox{greenAIbox}[2][]{greenaibox,title=#2,#1}

\usepackage{algorithm}
\usepackage{algorithmicx}
\usepackage[noend]{algpseudocode}
\usepackage{enumitem,kantlipsum}

\usepackage{booktabs}
\usepackage{wrapfig}

\tcbuselibrary{skins}
\usepackage{tabularx}

\usepackage{boxedminipage}
\usepackage{lmodern}
\usepackage{makecell}

\makeatletter
\newcommand{\appendixtoc}{%
  \section*{Appendices}%
  \@starttoc{app}%
}

\newcommand{\appsection}[1]{%
  \section{#1}%
  \addcontentsline{app}{section}{\protect\numberline{\thesection}#1}%
}

\newcommand{\appsubsection}[1]{%
  \subsection{#1}%
  \addcontentsline{app}{subsection}{\protect\numberline{\thesubsection}#1}%
}

\makeatother

\title{\fontsize{20pt}{24pt}\selectfont Reinforcement Learning from Rich Feedback with Distributional DAgger}

\author{Rishabh Agrawal} \author{Jacob Fein-Ashley} \author{Paria Rashidinejad}

\affiliation{University of Southern California}

\abstract{Reasoning models have advanced rapidly, but the dominant reinforcement learning from verifiable rewards (RLVR) recipe remains surprisingly narrow: sample many responses and reward each with a single bit indicating whether the final answer is correct. Yet many settings provide \textit{rich feedback}, including execution traces, tool outputs, expert corrections, and model self-evaluations. We study how to use such feedback through a distributional variant of the classic imitation learning algorithm DAgger, where the learner has local access to an expert distribution on states visited by the current policy. This yields a simple forward cross-entropy objective that admits a blackbox expert and whose sequence-level gradient {conduct rich credit assignment by propagating} future expert-student disagreement back to earlier decisions. We show that prior RL with self-distillation objectives based on reverse KL or Jensen-Shannon fail to guarantee monotonic policy improvement: even when the expert has higher reward, their updates may increase probability on worse actions. In contrast, we show that forward cross-entropy admits monotonic policy improvement and enjoys guarantees on regret. We further show that our objective optimizes a lower bound on teacher-weighted likelihood of success, leading to improved Pass@N. Empirically, our approach, DistIL, improves over RLVR and RL with self-distillation baselines across a variety of domains: scientific reasoning, coding, and solving hard mathematical problems.}

\date{June 3, 2026}
\website{https://rishabh-1086.github.io/project-distIL}
\code{https://github.com/rishabh-1086/distIL}
\correspondence{{\email{\{rishabha, feinashl, paria.rashidinejad\}@usc.edu}}}

\begin{document}

\maketitle

\begin{figure}[htbp]
    \centering

    \begin{subfigure}{0.245\textwidth}
        \includegraphics[width=\linewidth]{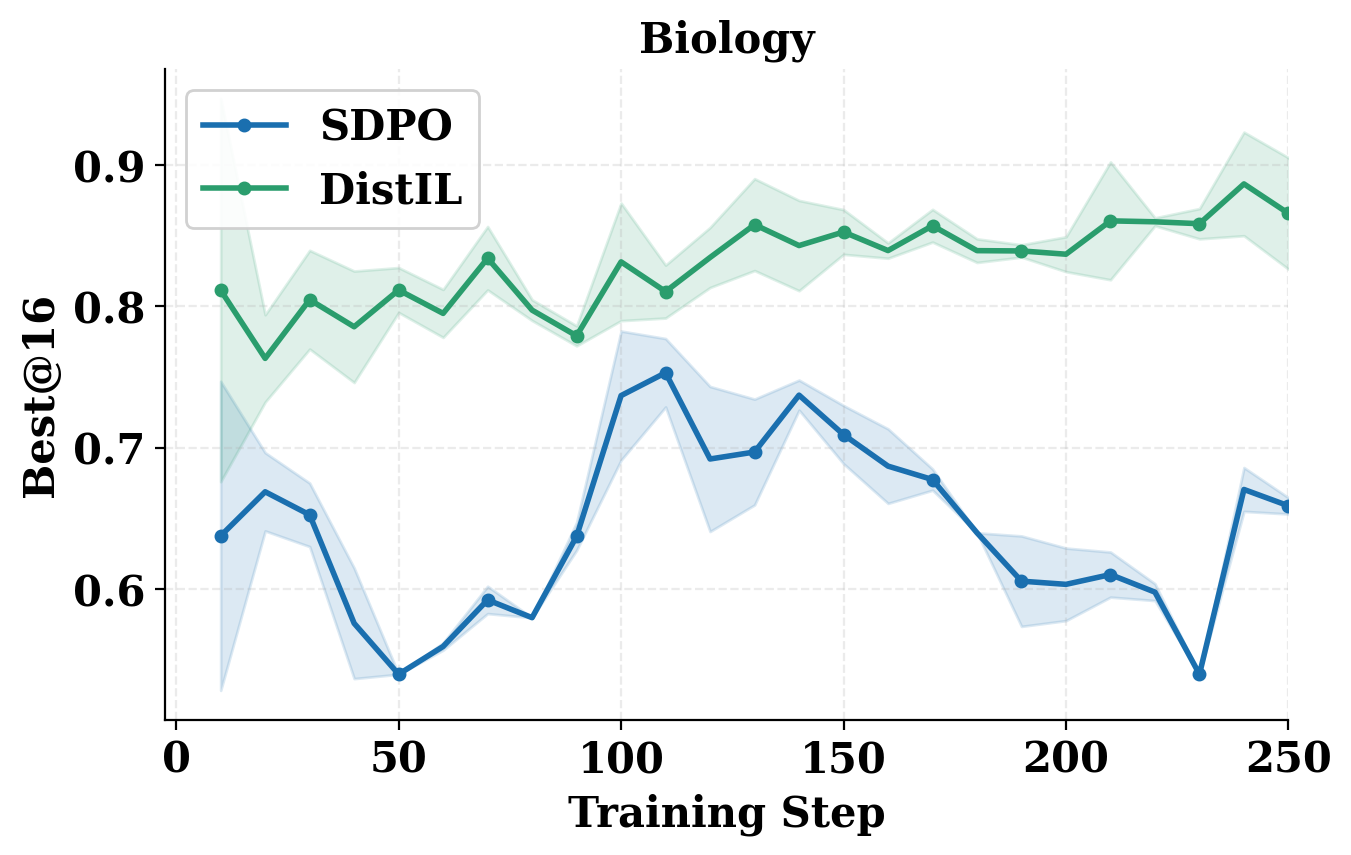}
    \end{subfigure}
    \begin{subfigure}{0.245\textwidth}
        \includegraphics[width=\linewidth]{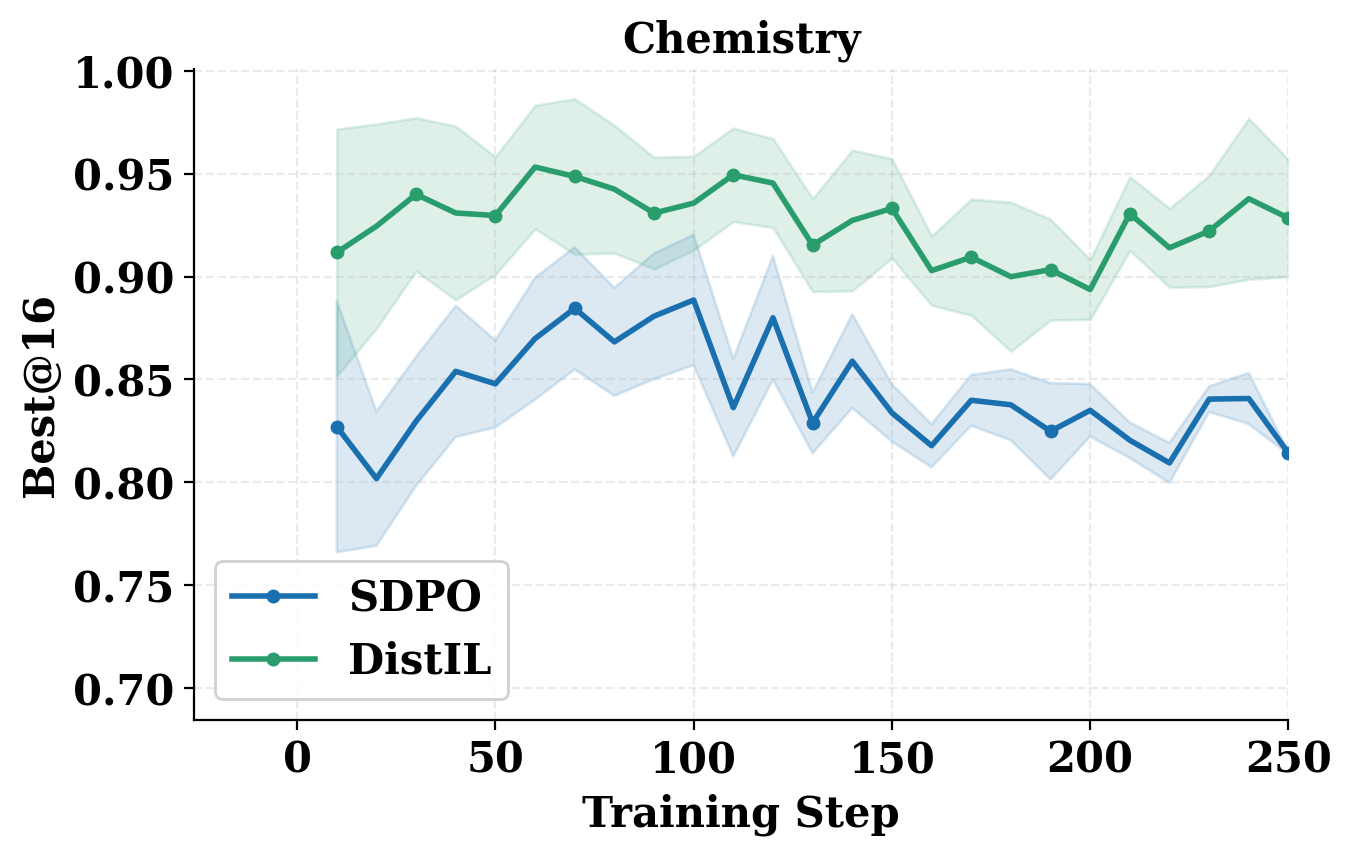}
    \end{subfigure}
    \begin{subfigure}{0.245\textwidth}
        \includegraphics[width=\linewidth]{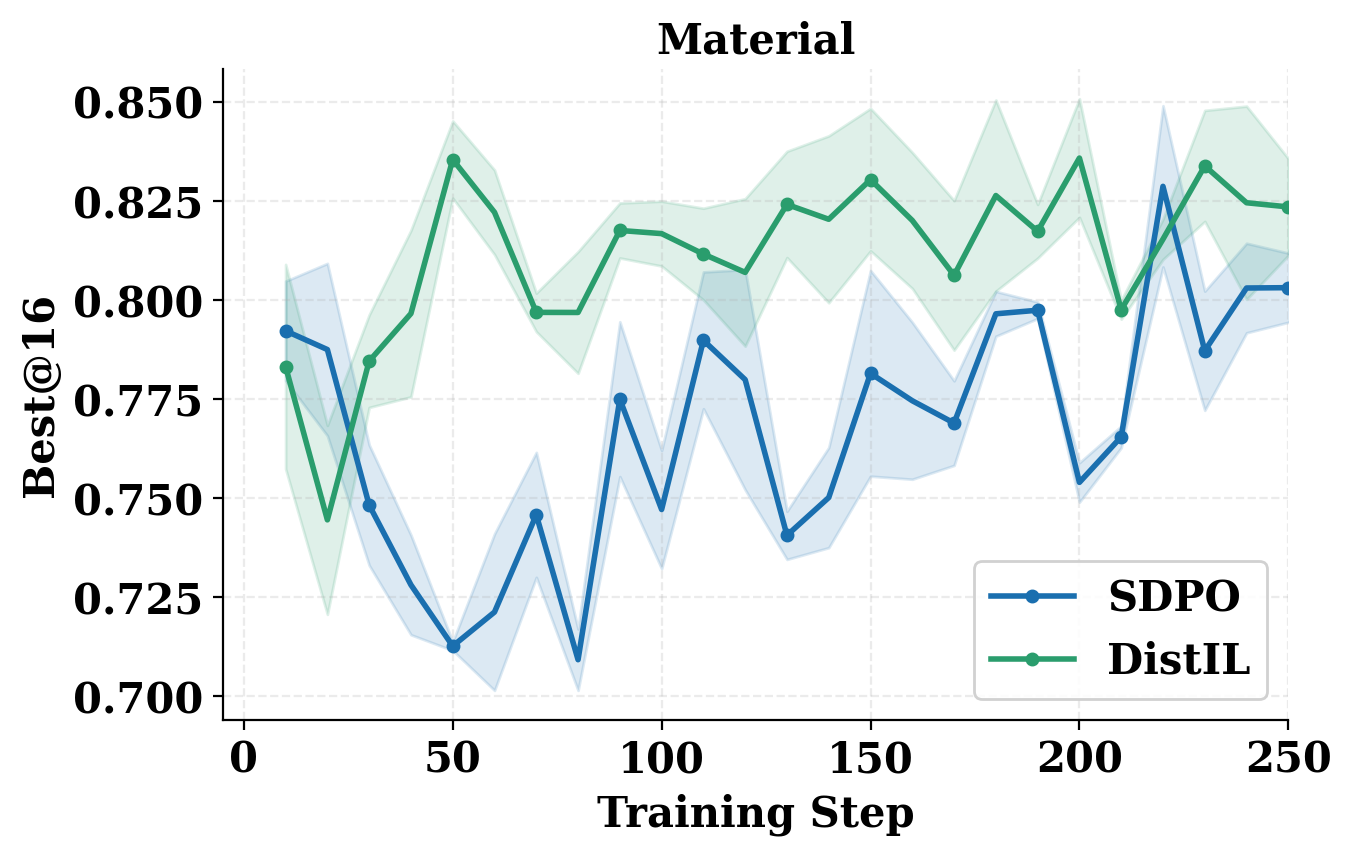}
    \end{subfigure}
    \begin{subfigure}{0.245\textwidth}
        \includegraphics[width=\linewidth]{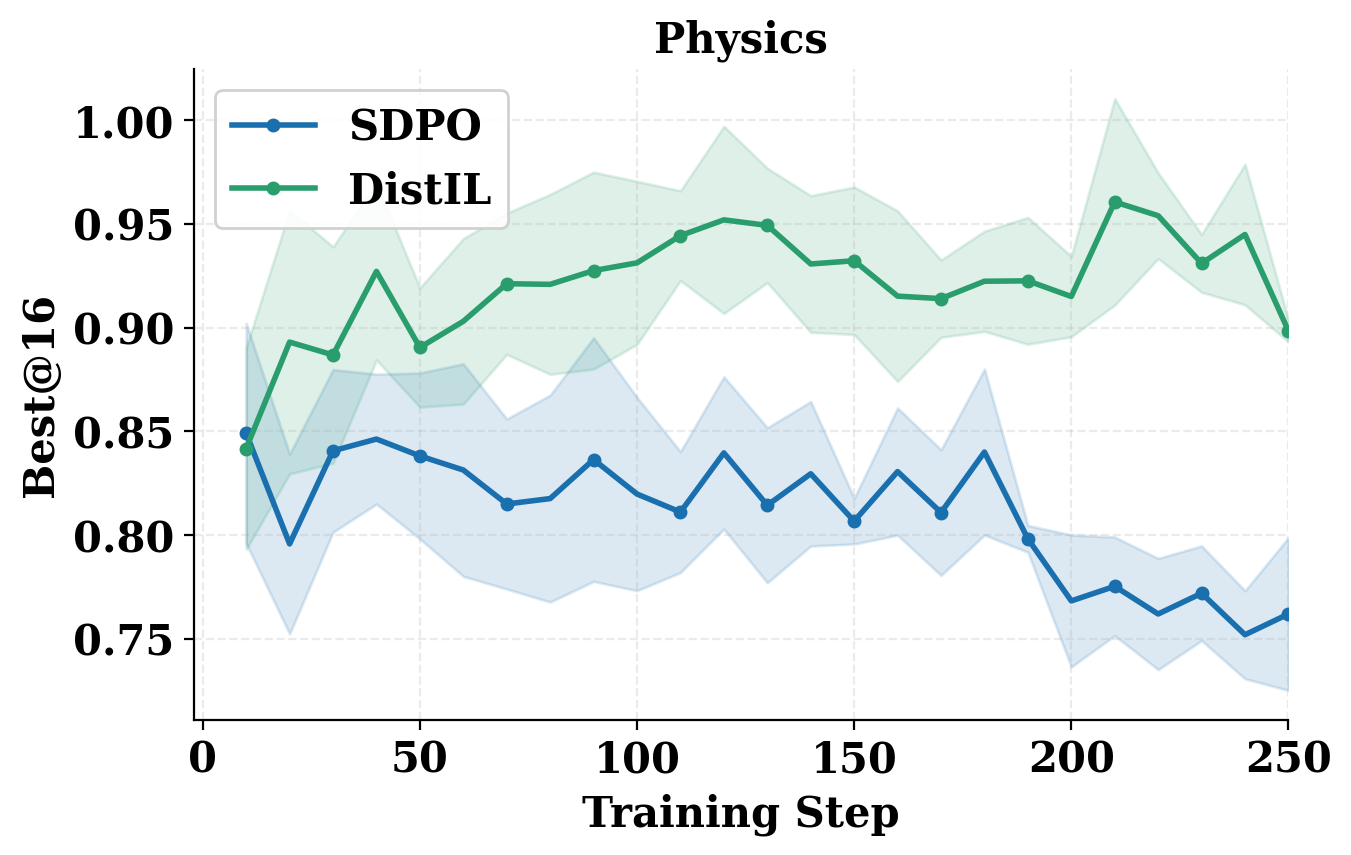}
    \end{subfigure}

    \vspace{0.5em}

    \begin{subfigure}{0.245\textwidth}
        \includegraphics[width=\linewidth]{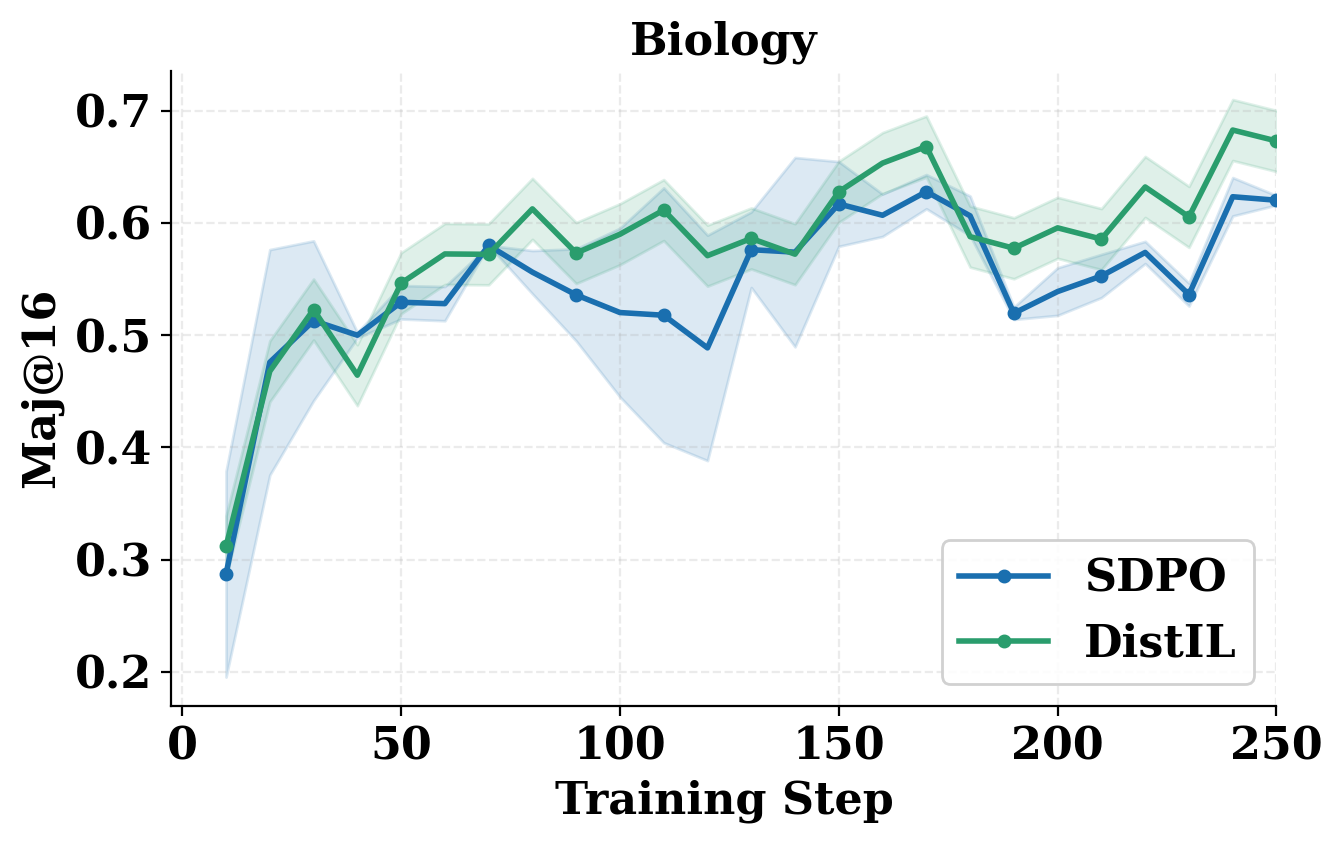}
    \end{subfigure}
    \begin{subfigure}{0.245\textwidth}
        \includegraphics[width=\linewidth]{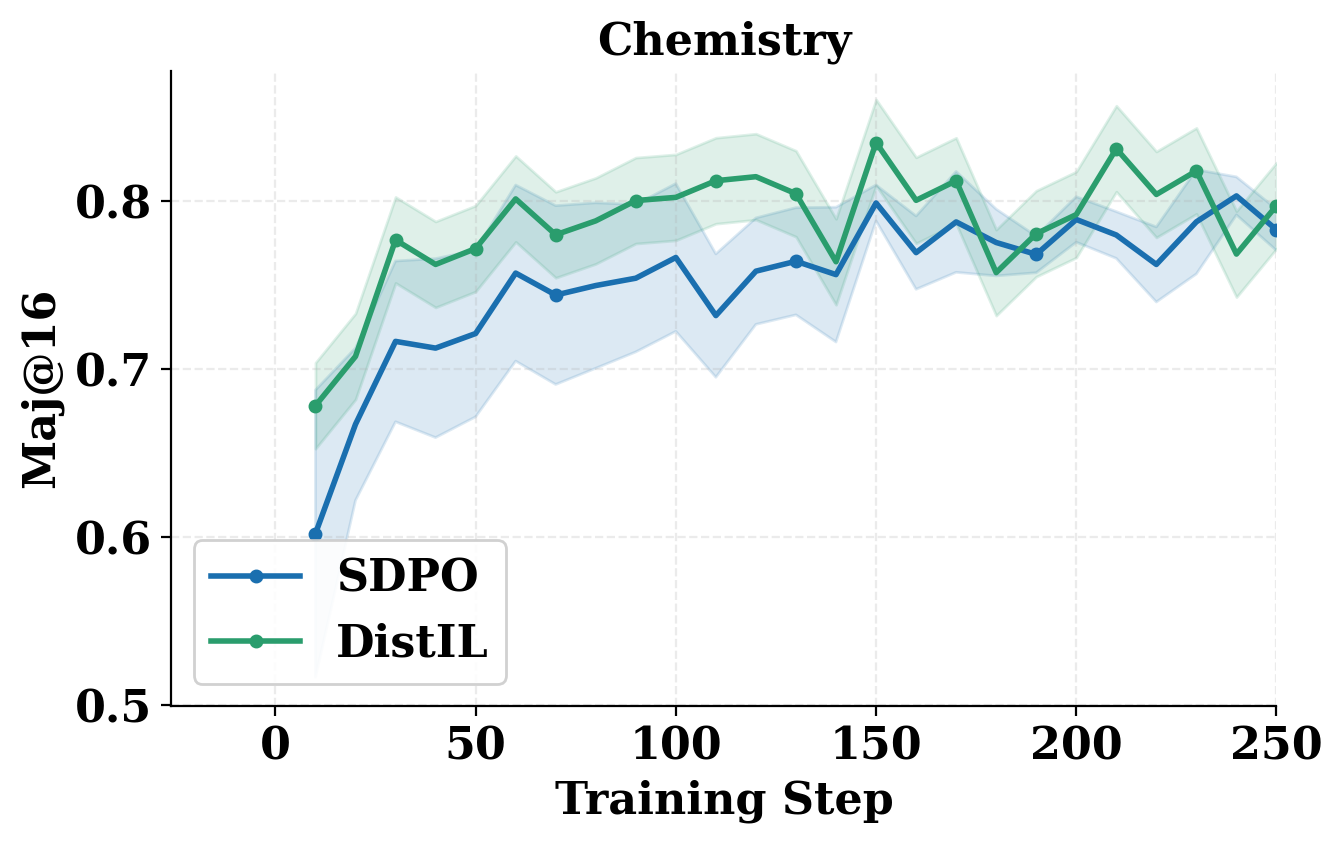}
    \end{subfigure}
    \begin{subfigure}{0.245\textwidth}
        \includegraphics[width=\linewidth]{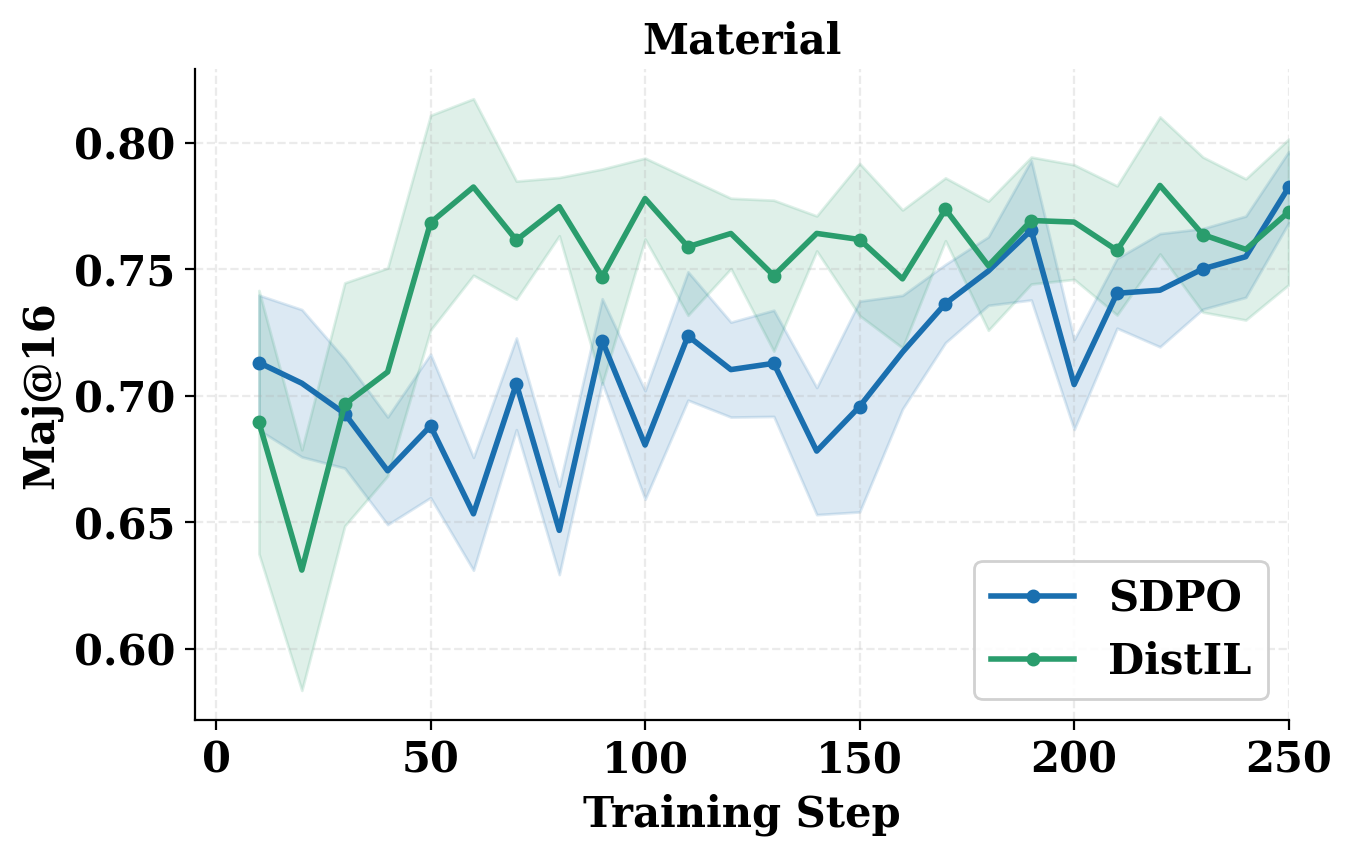}
    \end{subfigure}
    \begin{subfigure}{0.245\textwidth}
        \includegraphics[width=\linewidth]{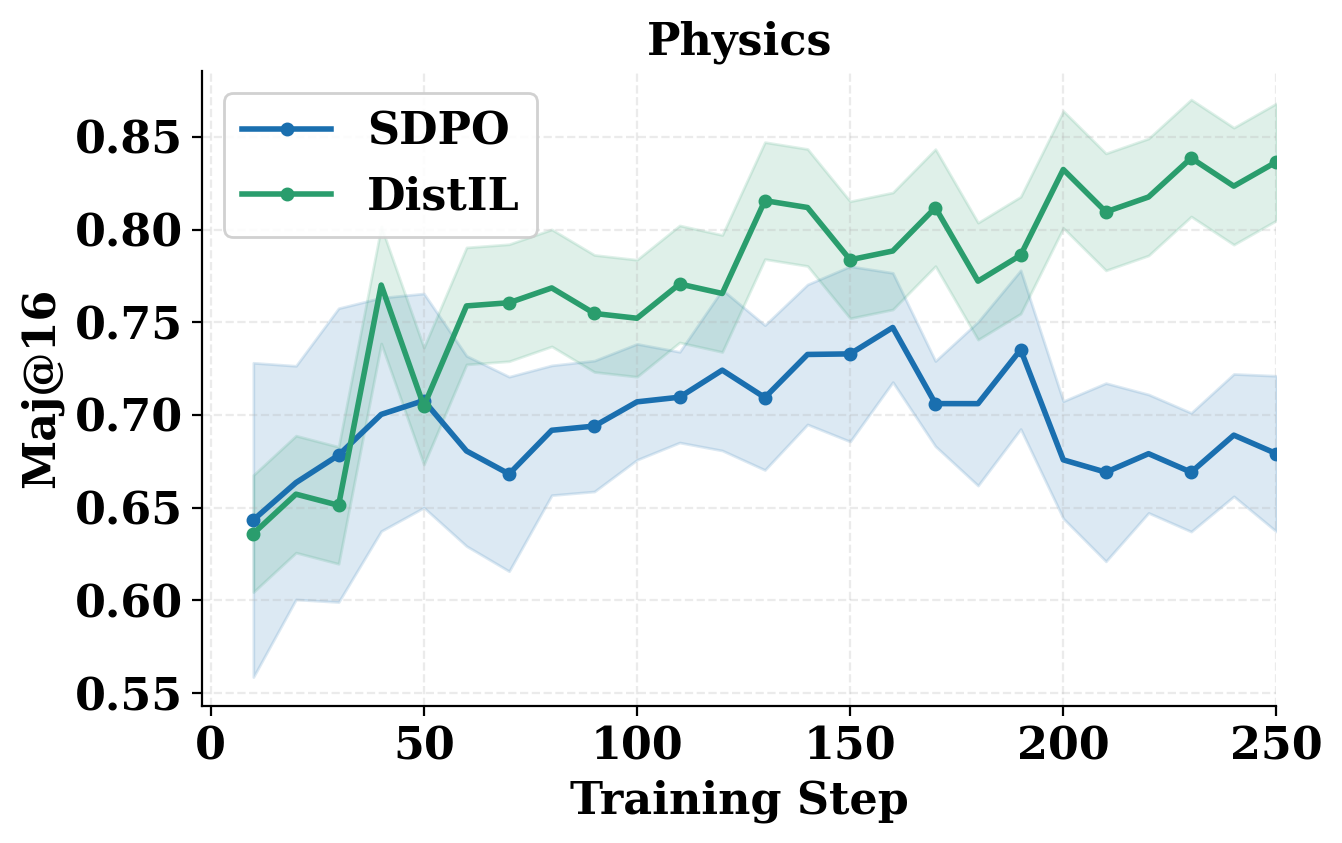}
    \end{subfigure}
    \vspace{0.5em}
    \caption{Validation Best@16 (top) and Maj@16 (bottom) over training for RL with self-distillation algorithm SDPO~\citep{hubotter2026reinforcement} and DistIL (ours) on Qwen3-8B across four scientific reasoning domains: biology, chemistry, materials, and physics. DistIL generally achieves higher validation performance than SDPO across domains and metrics, with gains often appearing early and sustained during training. SDPO exhibits greater variability with longer training, including a pronounced decline in biology Best@16 after roughly 100 steps and larger oscillations in chemistry and physics; DistIL is comparatively more stable.}
    \label{fig:science_complete_metrics}
\end{figure}

\section{Introduction}
\label{sec:intro}

Reasoning large language models have made significant recent progress \citep{jaech2024openai, guo2025deepseek, comanici2025gemini, olmo2025olmo}. The dominant paradigm, reinforcement learning from verifiable rewards (RLVR) improves reasoning by sampling many candidate solutions from the model and providing each with a single bit of feedback: whether the final answer is correct. This simple paradigm has been remarkably successful in domains such as coding and mathematical and scientific reasoning tasks where final answers can be checked automatically~\citep{tan2025deepscaler, yue2025vapo, zeng2025simplerl, pan2025metaspatial, luo2025deepcoder}. However, RLVR has important limitations. A scalar terminal reward is typically broadcast across the entire response, causing every token to receive the same weight regardless of its actual contribution. This provides little credit assignment: it does not reveal which tokens, phrases, or reasoning steps caused the success or failure. Moreover, the paradigm's reliance on automatic final-answer verification makes it difficult to apply in unverifiable domains, where final answer correctness cannot be easily checked.

Many important settings, however, provide feedback that is far richer than a single correctness bit: error logs, unit-test results, natural-language critiques, hints, or ground-truth responses. Inspired by the seminal work on knowledge distillation \citep{hinton2015distilling}, a growing line of research seeks to exploit these signals through on-policy self-distillation, in which the current model is conditioned on feedback to produce a privileged teacher, whose distribution is then distilled back into the original policy~\citep{hubotter2026reinforcement, zhao2026self}. This turns sparse outcome supervision into dense token-level guidance. In particular, viewed through the lens of policy gradients, these objectives induce an implicit form of credit assignment: different tokens receive different weights, e.g., determined by the local discrepancy between the student and feedback-conditioned teacher policies~\citep{hubotter2026reinforcement}.

\subsection{Contributions}

This motivates us to take a closer look at on-policy self-distillation. Many classical policy-optimization methods---such as conservative policy iteration~\citep{kakade2002approximately}, natural policy gradient~\citep{kakade2001natural}, and trust-region policy optimization~\citep{schulman2015trust}---are designed around \emph{monotonic policy improvement}: under an idealized update, such as a sufficiently small natural-gradient or trust-region step, each update should increase policy's average reward. This raises a natural question for on-policy self-distillation. If a feedback-conditioned teacher achieves higher average reward than the student, should an idealized update that pulls the student toward the teacher improve the student? We show that, for on-policy self-distillation objectives with $f$-divergences, the answer is in general, no. To address this, we revisit the classical DAgger framework~\citep{ross2011reduction} and propose a distributional variant that enjoys monotonic policy-improvement and regret guarantees, while empirically outperforming existing methods. Concretely, we make the following contributions:

\textbf{Provable limitations of existing on-policy self-distillation.}
We begin by analyzing on-policy self-distillation objectives based on general $f$-divergences, a class that subsumes existing objectives such as reverse-KL divergence~\citep{hubotter2026reinforcement} and Jensen--Shannon divergence~~\citep{zhao2026self}. We identify two limitations. {First,} we prove that optimizing an $f$-divergence does not, in general, guarantee monotonic policy improvement; even when the teacher is strictly better than the student and the update is an idealized natural-gradient step, the objective can increase probability on suboptimal actions (Propositions~\ref{prop:f_divergence_non_monotonic}, \ref{prop:reverse_kl_non_improvement_counterexample}). {Second,} existing methods typically rely on approximate gradients that provide only \emph{local} credit assignment, measuring teacher--student discrepancy at individual tokens. We show that this local approximation ignores how early token choices shape the future states where teacher--student disagreement appears, erasing precisely the delayed feedback needed to credit earlier decisions. We prove that this can cause learning to converge to strictly suboptimal policies (Proposition~\ref{prop:grad_issue}).

\textbf{DistIL: A new algorithm for RL with rich feedback.}
To address these limitations, we propose DistIL, a distributional variant of the DAgger imitation-learning algorithm. DistIL optimizes a forward cross-entropy loss between the feedback-conditioned teacher and the student policy, on states visited by the student (c.f. Equation \eqref{eqn:our_objective_final}). Unlike reverse-KL or Jensen--Shannon self-distillation objectives, DistIL can leverage teacher log-probabilities when available, but does not require them and its objective can be estimated using samples from the teacher alone, and therefore naturally accommodates black-box teachers such as external models or human experts. Furthermore, DistIL performs \emph{future-aware} credit assignment: teacher--student disagreement at later timesteps can be propagated back to the earlier decisions (c.f. Equation \eqref{eqn:grad_expression}).

\textbf{DistIL enjoys theoretical guarantees.}
Beyond algorithmic simplicity, DistIL comes with several theoretical guarantees that explain why the objective is well aligned with reward improvement. First, we show that DistIL's forward cross-entropy loss guarantees monotonic-improvement property missing from prior self-distillation objectives: under standard local assumptions, moving toward a better teacher improves the student's expected reward (Proposition~\ref{prop:monotonic_main}). Second, we prove that DistIL achieves sublinear regret (Theorem~\ref{thm:suboptimality_d_dagger}). Finally, we show that DistIL maximizes a teacher-weighted lower bound on the expected log-likelihood of success, drawing connections to maximum-likelihood RL~\citep{tajwar2026maximum} and providing a principled explanation for improvements in \texttt{pass@$N$} for any $N$ (Proposition~\ref{prop:weighted_connection}, c.f. Figure \ref{fig:science_complete_metrics}).

\textbf{Empirical results.} The theoretical advantages of DistIL translate into consistent empirical gains. Across scientific reasoning~\citep{feng2024sciknoweval}, coding~\citep{jain2024livecodebench}, and challenging mathematical reasoning benchmarks, DistIL outperforms strong RL and self-distillation baselines, including SDPO~\citep{hubotter2026reinforcement}, OPSD~\citep{zhao2026self}, and GRPO~\citep{shao2024deepseekmath}. The gains hold across different feedback regimes, from sparse final-answer correctness to execution traces and ground-truth solutions, highlighting the practical benefit of pairing a reward-aligned objective with future-aware credit assignment.
\section{Background and Problem Formulation}
\label{sec:prelim}

\noindent \textbf{Contextual Markov decision process and reinforcement learning.}
We model consider finite-horizon contextual Markov decision processes as the environment. At the beginning of each episode, a context (i.e., a prompt/problem) $x \sim \rho$ is sampled. The initial state is set to be the context $s_1=(x,\emptyset)$. At each timestep $t \in [H]$, the policy selects an action $y_t \sim \pi(\cdot \mid s_t)$, and the environment transitions to the next state. In the autoregressive language-model setting considered in this work, the transition is deterministic and the next state is obtained by appending the sampled action to the current prefix. Thus, we write $s_t=(x,y_{1:t-1})$. For simplicity, we sometimes write the entire trajectory as $y=(y_1,\ldots,y_H)$. 

Given per-step rewards $r_t(x,y_{1:t})$, the goal of reinforcement learning is to learn a policy maximizing expected cumulative reward:
\begin{equation}
J(\pi)
:=
\mathbb{E}_{x \sim \rho}
\mathbb{E}_{y \sim \pi(\cdot \mid x)}
\left[
\sum_{t=1}^{H} r_t(x,y_{1:t})
\right].
\label{eqn:rl_objective}
\end{equation}

\noindent \textbf{Reinforcement learning with verifiable rewards (RLVR).}
In many reasoning tasks, rewards can be obtained from automatic verifiers: code can be checked with unit tests, while in some mathematical and scientific problems final answer correctness can be checked. This gives rise to reinforcement learning with verifiable rewards (RLVR), where supervision is typically sparse and outcome-based: $r_t=0$ for $t<H$, and the terminal reward $r_H$ evaluates the final answer, often as a binary correctness signal. Methods such as GRPO~\citep{shao2024deepseekmath} and its variants~\citep{liu2025understanding, ahmadian2024back} optimize policies using such rewards. Despite its simplicity and empirical success, RLVR provides only delayed supervision. Furthermore, the terminal reward is broadcast across the response, giving every token the same weight regardless of its role in the reasoning process. This makes credit assignment difficult. Moreover, in group-relative or advantage-based updates~\citep{shao2024deepseekmath}, the learning signal can vanish when all sampled responses receive identical outcomes, such as all correct or all incorrect.

\noindent \textbf{Learning from rich feedback via distillation.}
In many settings, supervision extends beyond final-answer correctness and includes richer signals such as execution traces, error messages, expert annotations, or feedback from other models. These signals provide more informative guidance and can be leveraged to improve learning. Inspired by knowledge distillation, a recent line of work studies variants of \emph{on-policy distillation}, where a student policy $\pi_S$ is trained to match a teacher policy $\pi_T$ along trajectories sampled from the student~\citep{agarwal2024policy, yang2025qwen3, xiao2026mimo, zeng2026glm}. In on-policy \emph{self}-distillation, the teacher is derived from the same model but conditioned on additional feedback $f$:
\[
\pi_T(\cdot \mid x, y_{1:t-1}) := \pi_\theta(\cdot \mid x, f, y_{1:t-1}), \quad
\pi_S(\cdot \mid x, y_{1:t-1}) := \pi_\theta(\cdot \mid x, y_{1:t-1}),
\]
where $\pi_T$ denotes the feedback-conditioned teacher and $\pi_S$ denotes the student policy. Given trajectories $y \sim \pi_S(\cdot \mid x)$, training minimizes
\begin{equation}
\mathcal{L}_{\text{self-distill}} :=
\mathbb{E}_{x \sim \rho,\; y \sim \pi_S(\cdot \mid x)}
\left[
\sum_{t=1}^{|y|}
\mathsf{D}\big(
\pi_S(\cdot \mid x, y_{1:t-1})
\;\|\;
\pi_T(\cdot \mid x, y_{1:t-1})
\big)
\right],
\label{eqn:general_opd_loss_distillation}
\end{equation}
where $\mathsf{D}$ is a divergence measure, such as reverse KL in SDPO~\citep{hubotter2026reinforcement} or Jensen--Shannon divergence in OPSD~\citep{zhao2026self}.

\noindent \textbf{Problem formulation: RL from rich feedback.}
We now formalize the reinforcement-learning problem considered in this paper. We work with the deterministic contextual MDP introduced above, and let $\pi_\theta \in \Pi$ denote a policy parameterized by $\theta$. As in the self-distillation setting, at each training round rich feedback $f$ induces a privileged teacher policy $\pi_T$, which is held fixed during the student update. Our central assumption is that this feedback is useful: for some underlying, possibly unknown reward function, the feedback-induced teacher is at least as good as the current unconditioned student in expected reward. Formally, for the student policy $\pi_\theta$, we assume
\begin{align}\label{eq:teacher_student_gap}
\Delta \coloneqq
\E_{x \sim \rho,, y \sim \pi_T(\cdot \mid x)} \left[r(x,y) \right]
-
\E_{x \sim \rho,, y \sim \pi_\theta(\cdot \mid x)}\left[r(x,y) \right]
\geq 0.
\end{align}
For example, the above assumption says that a model conditioned on execution logs, critiques, hints, or a correct solution should, on average, achieve a higher success rate than the same model without access to that feedback. This assumption is precisely what makes the self-distillation paradigm meaningful; if the privileged teacher were not better aligned with the task reward than the student, then distilling the teacher would have no reason to improve the student. Since the reward $r$ may be unknown, the learning algorithms do not optimize the reward objective directly and they instead optimize distillation-based surrogate losses.

\noindent \textbf{Notation.}
For distributions $p$ and $q$, we write $\mathrm{KL}(p\|q) \coloneqq \mathbb{E}_{y \sim p}\left[\log {p(y)}/{q(y)}\right]$ to denote the Kullback--Leibler divergence and $\mathrm{JS}(p\|q) \coloneqq \frac{1}{2}\mathrm{KL}(p\|m)+\frac{1}{2}\mathrm{KL}(q\|m)$ for Jensen--Shannon divergence, where $m \coloneqq \frac{1}{2}(p+q)$. We also write $\mathrm{H}^{\times}(p,q) \coloneqq -\mathbb{E}_{y \sim p}[\log q(y)]$ for forward cross-entropy. The operator $\mathsf{sg}(\cdot)$ denotes stop-gradient. We write $x \lesssim y$ when there exists a  constant $c>0$ such that $x \leq c y$.
\section{Limitations of Existing On-Policy Self-Distillation Methods}
\label{sec:problem}

The formulation above raises the following question: if rich feedback induces a better teacher, does self-distillation necessarily turn this advantage into policy improvement? In this section, we start by analyzing the two consequential choices made in existing methods: the divergence measure used to distill the teacher, and the gradient estimator used to optimize it. We show that both choices can fail in principle, even in favorable, idealistic scenarios. First, we ask whether divergence minimization guarantees monotonic policy improvement when the teacher is better than the student (Section~\ref{sec:policy_improvement}). Second, we show that the local, tokenwise gradients used in prior works \citep{zhao2026self, hubotter2026reinforcement} can ignore how early decisions affect future teacher--student disagreement, leading to strictly suboptimal policies (Section~\ref{sec:local_credit_assignment}).

\subsection{Does divergence minimization guarantee monotonic policy improvement?}\label{sec:policy_improvement}

A useful property for policy optimization algorithms is \emph{monotonic policy improvement}: under an idealized update, the policy should move in a direction that increases expected reward. This principle underlies classical methods such as natural policy gradient (NPG) \citep{kakade2001natural}, and is approximately enforced by widely used trust-region and proximal methods such as TRPO \citep{schulman2015trust}, PPO \citep{schulman2017proximal}, and GRPO \citep{shao2024deepseekmath}. This motivates the analogous question for self-distillation: if the teacher has higher expected reward than the student, does an ideal update that pushes the student toward the teacher necessarily improve the student? We answer this question for general $f$-divergence self-distillation objectives, a class that includes reverse KL as used in SDPO \citep{hubotter2026reinforcement} and Jensen--Shannon divergence as used in OPSD \citep{zhao2026self}. The proposition below shows that, even in contextual bandits, minimizing such objectives does not in general guarantee monotonic policy improvement.

\begin{proposition}[\textbf{$f$-divergence self-distillation does not guarantee monotonic policy improvement}]
    \label{prop:f_divergence_non_monotonic}
    Let $\pi_T$ be a fixed teacher and $\pi_\theta$ a student policy. For the softmax parameterization, the Natural Policy Gradient (NPG) step minimizing an $f$-divergence $D_f(\pi_T \| \pi_\theta)$, i.e., $
    \theta' = \theta - \eta \, F(\theta)^{-1} \nabla_{\theta} D_f(\pi_T \| \pi_\theta)$,
    satisfies
    \begin{align*}
        J(\pi_{\theta^{\prime}})
        =
        J(\pi_{\theta})
        -
        \eta \, \Cov_{y \sim \pi_{\theta}}
        \left(
            r(y),
            g(y)
        \right)
        +
        O(\eta^2),
    \end{align*}
    where $g(y)
    =
    f\!\left(\frac{\pi_T(y)}{\pi_\theta(y)}\right)
    -
    \frac{\pi_T(y)}{\pi_\theta(y)}\, f'\!\left(\frac{\pi_T(y)}{\pi_\theta(y)}\right)$. Consequently, for sufficiently small $\eta > 0$, the NPG step provides a policy improvement if and only if $\Cov_{y \sim \pi_{\theta}} \left( r(y), g(y) \right) < 0$. In particular, a positive teacher--student reward gap $\Delta > 0$ alone does not guarantee monotonic policy improvement.
\end{proposition}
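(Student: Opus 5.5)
We work in the contextual-bandit setting with a tabular softmax policy, $\pi_\theta(y)\propto \exp(\theta_y)$; since the context $x$ only indexes independent copies, we suppress it. The plan has four steps: (i) compute the gradient of $D_f(\pi_T\|\pi_\theta)=\sum_y \pi_\theta(y) f\big(\pi_T(y)/\pi_\theta(y)\big)$ with respect to $\pi_\theta$; (ii) use the standard fact that, under softmax, the NPG direction equals the advantage of that gradient, projected on the simplex; (iii) take a first-order Taylor expansion of $J$; (iv) construct an explicit counterexample.

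\textbf{Step 1.} Write $u(y)=\pi_T(y)/\pi_\theta(y)$. Treating $\pi_\theta(y)$ as a free coordinate,
\[
\frac{\partial D_f}{\partial \pi_\theta(y)} = f(u(y)) - u(y) f'(u(y)) = g(y).
\]
By the chain rule through the softmax Jacobian $\partial \pi(y)/\partial\theta_{y'}=\pi(y)(\mathbb 1[y=y']-\pi(y'))$, this gives
\[
\nabla_{\theta_{y'}} D_f=\pi_\theta(y')\big(g(y')-\E_{\pi_\theta}[g]\big).
\]

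\textbf{Step 2.} For the softmax family, the Fisher matrix is $F=\mathrm{diag}(\pi)-\pi\pi^\top$, which is singular along the all-ones direction, so we use the pseudo-inverse. The key observation is that $F a=\nabla$ is solved by $a(y)=g(y)-\E_{\pi_\theta}[g]$, up to a constant shift, and a constant shift does not change the policy. Hence the update is $\theta'_y=\theta_y-\eta\,(g(y)-\bar g)$. To first order the policy then moves as
\[
\pi_{\theta'}(y)=\pi_\theta(y)\Big(1-\eta\big(g(y)-\E_{\pi_\theta} g\big)\Big)+O(\eta^2),
\]
the familiar multiplicative-weights form.

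\textbf{Step 3.} Since $J(\pi)=\sum_y\pi(y)r(y)$, substituting the update gives
\[
J(\pi_{\theta'})-J(\pi_\theta)=-\eta\sum_y \pi_\theta(y)\,r(y)\big(g(y)-\E g\big)+O(\eta^2)=-\eta\,\Cov_{\pi_\theta}(r,g)+O(\eta^2).
\]
The O-constant is uniform for fixed $\theta$ because the map $\theta\mapsto J$ is smooth. The if-and-only-if statement follows for small $\eta$ whenever $\Cov\neq0$; the case $\Cov=0$ is the degenerate boundary.

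\textbf{Step 4 (counterexample).} This is the step requiring the most care: we need a construction in which $\Delta>0$ yet $\Cov_{\pi_\theta}(r,g)>0$. Take reverse KL, which in the proposition's convention corresponds to $f(u)=-\log u$. Then
\[
g(y)=-\log u(y)+1,
\]
so $g$ is decreasing in the ratio $\pi_T/\pi_\theta$, and $-\Cov(r,g)=\Cov(r,\log u)$.

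Use three actions with rewards $r=(1,0.5,0)$ and student $\pi_\theta=(1/3,1/3,1/3)$. Let the teacher move mass away from the middle action and toward both extremes, but favor the best action only slightly. In log-ratio terms we want $\log u$ to be high on the worst action, low on the middle action, and moderate on the best action, so that $\Cov_{\pi_\theta}(r,\log u)<0$ while $\E_{\pi_T} r>\E_{\pi_\theta}r$. One candidate is $\pi_T=(0.5,0,0.5)$, but it gives $\Delta=0$ and an infinite log ratio, so we perturb it: for example $\pi_T=(0.45,0.05,0.5)$. This gives $J(\pi_T)=0.475$ versus $J(\pi_\theta)=0.5$, so $\Delta<0$, and we must adjust.

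A cleaner design puts the reward only on the best action, $r=(1,0,0)$, with student $(0.2,0.4,0.4)$ and teacher $(0.3,0.02,0.68)$. Then $\Delta=0.1>0$, and the log ratios are $\log 1.5\approx0.41$, $\log0.05\approx-3$, and $\log1.7\approx0.53$. The covariance of $r$ with $\log u$ under $\pi_\theta$ equals $0.2\,(0.41-\E\log u)$. Here
\[
\E\log u\approx0.2(0.41)+0.4(-3)+0.4(0.53)\approx-0.9,
\]
so the covariance is positive, i.e., the step improves. To break improvement we instead need $\log u$ on the rewarded action to lie below its $\pi_\theta$-mean. We therefore choose a teacher that raises the best action's ratio modestly while raising a zero-reward action's ratio sharply, and keeps the remaining action's ratio near one. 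For instance, student $(0.1,0.1,0.8)$ and teacher $(0.15,0.45,0.4)$ give ratios $1.5,\ 4.5,\ 0.5$. Then
\[
\E\log u=0.1(0.41)+0.1(1.50)+0.8(-0.69)\approx-0.36<0.41,
\]
which is still positive covariance. The issue is that the large-mass action dominates the mean, so the final construction should put large student mass on an action whose ratio increases.

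Take student $(0.1,0.8,0.1)$ with $r=(1,0,0)$ and teacher $(0.2,0.8\cdot3,\dots)$. This is infeasible, because the ratio on the big-mass action can be at most $1/0.8$. So we choose student $(0.1,0.6,0.3)$ and teacher $(0.12,0.88,0)$, perturbed to $(0.12,0.87,0.01)$. The ratios are $1.2,\ 1.45,\ 0.033$, so
\[
\E\log u\approx0.1(0.18)+0.6(0.37)+0.3(-3.4)<0.18,
\]
which again fails. The lesson is that the weighted mean of $\log u$ is at most $0$ by Jensen's inequality, since $\E_{\pi_\theta} u=1$. Making the covariance negative therefore requires $\log u(\text{best})<\E\log u\le0$, i.e., the teacher puts less mass on the best action than the student does. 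Positive $\Delta$ must then come from a middle-reward action. With three reward levels we take $r=(1,0.9,0)$, student $(0.5,0.1,0.4)$, and teacher $(0.45,0.55,0)$ perturbed to $(0.45,0.54,0.01)$. Then $\Delta\approx0.936-0.59>0$, and $\log u(1)=\log0.9$ lies above the mean, which is dominated by $0.4\log0.025$. This shows the remaining obstacle: we must ensure $\Cov(r,\log u)<0$ overall, not merely compare a single action to the mean. We will tune these numbers until the covariance sign is verified explicitly, and argue the general-$f$ claim by this reverse-KL instance.
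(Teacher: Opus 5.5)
Your Steps 1--3 are correct and follow the paper's argument. The paper also computes $\partial D_f/\partial \pi_\theta(y)=g(y)$, writes $\nabla_\theta D_f=\E_{\pi_\theta}[g(y)\,\nabla_\theta\log\pi_\theta(y)]$, and uses the softmax Fisher identity $s_a^\top F^{-1}s_b=\mathbf{1}\{a=b\}/\pi_\theta(b)-1$. That yields the same multiplicative first-order update and hence the term $-\eta\,\Cov_{\pi_\theta}(r,g)$. Solving $Fa=\nabla_\theta D_f$ directly with a pseudo-inverse is, if anything, cleaner than the paper, which assumes $F^{-1}$ exists even though $F\mathbf{1}=0$. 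Your caveat about the case $\Cov_{\pi_\theta}(r,g)=0$ is also warranted.

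The gap is Step 4. The ``in particular'' clause needs it (the paper supplies it through the three-armed bandit of Proposition~\ref{prop:reverse_kl_non_improvement_counterexample}), and you never complete it. All of your candidates fail, including the last. For $r=(1,0.9,0)$, $\pi_\theta=(0.5,0.1,0.4)$, $\pi_T=(0.45,0.54,0.01)$ one finds $\E_{\pi_\theta}\log u\approx-1.36$ and $\E_{\pi_\theta}[r\log u]\approx0.10$. Hence $\Cov_{\pi_\theta}(r,\log u)\approx0.90>0$, and the reverse-KL step actually \emph{improves} reward.

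Your Jensen ``lesson'' misdirects the search. The formula $\Cov_{\pi_\theta}(r,\log u)=\pi_\theta(1)\bigl(\log u(1)-\E_{\pi_\theta}\log u\bigr)$, and with it the conclusion that the teacher must put less mass on the best action, holds only when $r$ is the indicator of a single action. For graded rewards the failure mechanism is the opposite of what you tried. The student must \emph{over}-weight, relative to the teacher, an action with positive reward, so that $g=1+\log(\pi_\theta/\pi_T)$ is largest there and the step removes its mass. Meanwhile a zero-reward action holding a large share of the student's mass has $g$ below $\E_{\pi_\theta}g$, so it absorbs most of the freed probability after the multiplicative renormalization. Shifting teacher mass onto the middle action, as in your last attempt, makes $\log u$ large on a rewarded action and pushes the covariance the wrong way.

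The paper's instance is $r=(1,\tfrac12,0)$, $\pi_\theta=(\tfrac1{20},\tfrac9{20},\tfrac12)$, $\pi_T=(\tfrac3{10},\tfrac1{20},\tfrac{13}{20})$. Here $\Delta=\tfrac1{20}$ and the teacher puts six times more mass on the best action. Yet $\Cov_{\pi_\theta}\bigl(r,\log(\pi_\theta/\pi_T)\bigr)\approx0.405-0.275\cdot0.768\approx0.19>0$, so $J$ decreases at rate about $0.19\,\eta$. Binary rewards also work once two actions are rewarded: $r=(1,1,0)$, $\pi_\theta=(0.05,0.45,0.5)$, $\pi_T=(0.55,0.01,0.44)$ gives $\Delta=0.06$ and $\Cov_{\pi_\theta}(r,g)\approx0.76$. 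Plugging any such instance into your Step 3 formula finishes the proof.
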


A detailed proof can be found in Appendix \ref{subsec:app_monotonic_f_div}. Proposition~\ref{prop:f_divergence_non_monotonic} highlights a simple but important distinction: a teacher can be better on average, while the update induced by a distillation objective can still point in a reward-decreasing direction. For SDPO, which corresponds to reverse KL, the first-order change in reward is governed by
\begin{align*}
\Cov_{y\sim\pi_\theta} \left(r(y), \log\frac{\pi_\theta(y)}{\pi(y)}\right).
\end{align*}
This covariance measures how the student--teacher mismatch aligns with reward. Importantly, the covariance is \emph{not} determined by the teacher's average advantage $\Delta$, defined in \eqref{eq:teacher_student_gap}. Instead, it depends on \emph{where} the student and teacher differ. Reverse KL improves the student only when the trajectories over-weighted by the student relative to the teacher are, on average, low-reward. If the student over-weights trajectories that are actually high-reward, the same update may suppress good behavior and decrease expected reward.

The failure can already occur in a three-action bandit. Imagine three trajectories: excellent ($r=1$), mediocre ($r=0.5$), and bad ($r=0$). The teacher may be better overall because it puts more probability on the excellent trajectory, but it also puts far less probability than the student on the mediocre trajectory. A reverse-KL update then treats the student's extra mass on the mediocre trajectory as an error to be corrected; after normalization, this can increase probability on the bad trajectory even though it has zero reward. Thus, being better on average is not enough and the distillation direction must also be aligned with reward. The next proposition makes this 
construction concrete and formally proves the failure result for reverse-KL distillation.

\begin{proposition}[\textbf{Reverse-KL distillation can decrease reward}]
\label{prop:reverse_kl_non_improvement_counterexample}
There exist universal constants $c_0,c_1>0$ and a three-armed bandit with rewards in $[0,1]$, a fixed teacher policy $\pi_T$, and a student policy $\pi_\theta$ such that the teacher is better than the student by a constant margin $\Delta \geq c_0 > 0$ but the natural policy gradient step minimizing the reverse KL objective, decreases the student's expected reward. In particular, for all sufficiently small $\eta>0$, we have $J(\pi_{\theta'}) \leq J(\pi_\theta)-c_1\eta.$
\end{proposition}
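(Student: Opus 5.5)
The plan is to specialize Proposition~\ref{prop:f_divergence_non_monotonic} to reverse KL and then exhibit an explicit three-armed bandit in which the governing covariance is strictly positive.

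\textbf{Step 1: reduce reverse KL to a covariance.} Write $\mathrm{KL}(\pi_\theta\|\pi_T)=\sum_y \pi_\theta(y)\,f\!\big(\pi_T(y)/\pi_\theta(y)\big)$ with $f(u)=-\log u$, so that it is the $f$-divergence $D_f(\pi_T\|\pi_\theta)$ of Proposition~\ref{prop:f_divergence_non_monotonic}. With $u=\pi_T(y)/\pi_\theta(y)$ we get $g(y)=-\log u+u\cdot\tfrac1u=\log\frac{\pi_\theta(y)}{\pi_T(y)}+1$. Additive constants do not change a covariance, so the first-order change in reward is
\[
J(\pi_{\theta'})-J(\pi_\theta)=-\eta\,\Cov_{y\sim\pi_\theta}\Big(r(y),\log\tfrac{\pi_\theta(y)}{\pi_T(y)}\Big)+O(\eta^2).
\]
This matches the display following Proposition~\ref{prop:f_divergence_non_monotonic}. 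As a sanity check one can verify it directly: for the softmax parameterization the NPG step moves the logits by $-\eta\log(\pi_\theta/\pi_T)$ up to a constant, and differentiating $J$ along that direction gives the same covariance. It therefore suffices to construct an instance where $\Delta\ge c_0>0$ and this covariance is at least some constant $C>0$.

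\textbf{Step 2: the construction.} I follow the intuition in the text: rewards $r=(1,\tfrac12,0)$ for the excellent, mediocre and bad arms. The main obstacle is picking the numbers. My first attempts, with the student symmetric on the excellent and bad arms, gave a negative covariance. Two things are needed:
\begin{itemize}
\item the mediocre arm must lie \emph{above} the student's mean reward, so that its large positive log-ratio contributes positively to the covariance;
\item the teacher must still beat the student, by putting much more mass on the excellent arm.
\end{itemize}
I take
\[
\pi_\theta=(0.05,\,0.5,\,0.45),\qquad \pi_T=(0.5,\,0.01,\,0.49).
\]
Then $J(\pi_\theta)=0.3$ and $J(\pi_T)=0.505$, so $\Delta=0.205$ and we may take $c_0=0.2$.

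\textbf{Step 3: check the sign.} The centered rewards under the student are $(0.7,\,0.2,\,-0.3)$. The log-ratios $\log(\pi_\theta/\pi_T)$ are $(\log 0.1,\ \log 50,\ \log\tfrac{45}{49})\approx(-2.30,\ 3.91,\ -0.085)$. Hence
\[
\Cov\approx 0.05(0.7)(-2.30)+0.5(0.2)(3.91)+0.45(-0.3)(-0.085)\approx 0.32>0.
\]
The dominant term is the mediocre arm: it is over-weighted by the student relative to the teacher and it is above-average reward under the student, so the update suppresses good behavior. The reverse-KL NPG step therefore moves mass off the mediocre arm and, after normalization, partly onto the bad arm.

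\textbf{Step 4: conclude.} Combining Steps 1 and 3, $J(\pi_{\theta'})\le J(\pi_\theta)-0.32\,\eta+O(\eta^2)$. The $O(\eta^2)$ remainder has a bounded constant, because the softmax map is smooth and the instance is fixed. So for all sufficiently small $\eta$ the remainder is absorbed, giving $J(\pi_{\theta'})\le J(\pi_\theta)-c_1\eta$ with, for instance, $c_1=0.3$. If an explicit threshold on $\eta$ is wanted, bound the second derivative of $\eta\mapsto J(\mathrm{softmax}(\theta-\eta\log(\pi_\theta/\pi_T)))$ by $\max_y|\log(\pi_\theta(y)/\pi_T(y))|^2\le 4^2$, since rewards lie in $[0,1]$; this gives an explicit admissible range for $\eta$.
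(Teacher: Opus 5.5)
Your proof is correct and follows the paper's route: you specialize the $f$-divergence covariance formula to reverse KL, where $g=\log(\pi_\theta/\pi_T)+1$, and exhibit a three-armed bandit with rewards $(1,\tfrac12,0)$ in which the student over-weights the above-average mediocre arm, so that $\mathrm{Cov}_{\pi_\theta}\bigl(r,\log(\pi_\theta/\pi_T)\bigr)>0$. Only the numbers differ: the paper uses $\pi_\theta=(\tfrac1{20},\tfrac9{20},\tfrac12)$ and $\pi_T=(\tfrac3{10},\tfrac1{20},\tfrac{13}{20})$, giving $\Delta=\tfrac1{20}$ and covariance $\approx 0.19$; your explicit control of the $O(\eta^2)$ remainder to extract $c_1$ is more complete than the paper, which stops at $J(\pi_{\theta'})\le J(\pi_\theta)$.
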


The detailed construction is given in Appendix~\ref{subsec:reverse_kl_failure_example}. It uses a three-action bandit where the teacher has a constant reward advantage over the student. Nevertheless, due to the normalization in the reverse-KL NPG update, the update can increase probability on a low-reward action whose student--teacher log-ratio is below average. Thus, expected reward can decrease even though the teacher is better overall.

\begin{AIbox}{Takeaway: Self-distillation with $f$-divergence does not guarantee monotonic policy improvement.}
For $f$-divergence distillation, policy improvement depends on whether the induced update direction is aligned with reward, not only on whether the teacher has higher expected reward. Thus, even when the teacher is strictly superior to the students, updates induced by reverse KL in SDPO and Jensen--Shannon divergence in OPSD can degrade the student.
\end{AIbox}

\subsection{Local credit assignment leads to suboptimal policies}\label{sec:local_credit_assignment}

Next, we examine the gradient approximations used in existing self-distillation methods such as SDPO and OPSD. In the self-distillation objective \eqref{eqn:general_opd_loss_distillation}, the student policy $\pi_\theta$ appears in two places: inside the divergence, and in the trajectory distribution that determines which states are visited. Keeping the teacher $\pi_T$ fixed, the full gradient of this objective decomposes into a local term and a future-credit term:
\begin{align*}
    \nabla_{\theta}\mathcal{L}_{\text{self-distill}}
    =
    \underbrace{
    \E \left[
    \sum_{t=1}^{|y|}
    \nabla_\theta D(\pi_\theta(\cdot \mid s_t) \| \pi_T(\cdot \mid s_t))
    \right]
    }_{\coloneqq \nabla_{\text{local}}}
    +
    \underbrace{
    \E \left[
    \sum_{t=1}^{|y|}
    \nabla_\theta \log \pi_\theta(y_t \mid s_t)
    \left(
    \sum_{\tau \geq t}
    D(\pi_\theta(\cdot \mid s_\tau) \| \pi_T(\cdot \mid s_\tau))
    \right)
    \right]
    }_{\coloneqq \nabla_{\text{future}}},
\end{align*}
where $s_t = (x, y_{1:t-1})$. The first term is the tokenwise distillation gradient, which updates the policy using only the local mismatch between the student and teacher at the current state. For example, under reverse KL, this term weights $\nabla_\theta \log \pi_\theta(y \mid s_t)$ by the local log-ratio $\log \frac{\pi_T(y \mid s_t)}{\pi_\theta(y \mid s_t)}$, yielding an implicit form of token-level credit assignment.

SDPO and OPSD effectively approximate the full gradient by retaining only this local term, $\nabla_{\theta}\mathcal{L}_{\text{self-distill}} \approx \nabla_{\text{local}}$. This drops the second term, which is precisely a credit-to-go term: it assigns credit to an action at step $t$ according to the teacher--student mismatch that appears at future states reached after taking that action. Without this term, the update can miss delayed feedback entirely, assigning no credit to early choices whose consequences appear only downstream. The next proposition proves that this local approximation can lead to strictly suboptimal policies.

\begin{proposition}[\textbf{Local credit assignment can be strictly suboptimal}]
\label{prop:grad_issue}
There exist a two-step contextual MDP with action space $\mathcal{A} = \{a,b,c\}$, a parametric student policy class $\{\pi_{u,v}\}_{u,v\in\mathbb{R}}$, a teacher policy $\pi_T$, and a reward function $r$ such that, starting from initialization $(u,v)=(0,0)$, the local credit assignment gradient converges to a policy $\pi_{\mathrm{local}}$ with $J(\pi_{\mathrm{local}})=\frac13$, whereas the full sequence-level gradient converges to a policy $\pi_{\mathrm{seq}}$ with
$J(\pi_{\mathrm{seq}})=\frac25$. Consequently, $J(\pi_{\mathrm{seq}}) > J(\pi_{\mathrm{local}})$, and local credit assignment is strictly suboptimal.
\end{proposition}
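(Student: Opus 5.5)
The approach is an explicit construction in which the first-step student distribution already matches the teacher, while the cost of a first-step choice appears only at the second step. The local gradient then has no reason to move the first-step parameter, whereas the full sequence-level gradient does. Concretely, I would take a deterministic two-step MDP with a single context. At step $1$ the student chooses $y_1\in\{a,b,c\}$ with logits $(u,0,0)$, so $(u,v)=(0,0)$ gives the uniform distribution. At step $2$, from each state $(x,y_1)$, the student chooses $y_2$ with logits depending only on a shared parameter $v$, identical across the three branches.

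The teacher is uniform at step $1$ and branch-dependent at step $2$. On branch $a$ it agrees with the student family, for example uniform, so branch $a$ has zero downstream divergence at $v=0$. On branches $b,c$ it concentrates on a single action, which the shared-$v$ student cannot match everywhere. The reward is terminal and depends on $y_1$ only through branch value: branch $a$ has high reward and branches $b,c$ low reward. The constants would be tuned so that the uniform first step yields $J=\tfrac13$, for instance reward $1$ on branch $a$ and $0$ otherwise.

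\textbf{Local dynamics.} The $u$-component of $\nabla_{\mathrm{local}}$ is $\nabla_u \mathsf{D}(\pi_u\|\pi_T)$ at step $1$. This is the gradient of a function minimized exactly at $u=0$, so $u\equiv 0$ along the local dynamics from initialization. The $v$-dynamics are a gradient flow on $\tfrac13\sum_{y_1}\mathsf{D}(\pi_v(\cdot\mid y_1)\|\pi_T(\cdot\mid y_1))$. Since the reward does not depend on $y_2$, $J$ remains $\tfrac13$ whatever $v$ converges to. This step should be routine.

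\textbf{Full sequence-level dynamics.} Now the $u$-component also includes $\nabla_{\mathrm{future}}$, which weights $\nabla_u\log\pi_u(y_1)$ by the step-2 divergence on branch $y_1$. This is exactly the derivative of $\sum_{y_1}\pi_u(y_1)\,\mathsf{D}_2(y_1;v)$ with respect to $u$. Since branch $a$ has the smallest downstream divergence, this term pushes $u$ upward, while the step-1 divergence pulls it back toward $0$. The full objective is therefore $L(u,v)=\mathsf{D}(\pi_u\|\pi_T)+\sum_{y_1}\pi_u(y_1)\mathsf{D}_2(y_1;v)$. I would locate its stationary point $(u^\star,v^\star)$ and show $\pi_{u^\star}(a)=\tfrac25$, adjusting the teacher's step-2 concentration so this value comes out exactly. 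Then $J(\pi_{\mathrm{seq}})=\tfrac25>\tfrac13$.

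\textbf{Main obstacle.} The hardest step is proving that gradient flow or descent on $L$ from $(0,0)$ actually converges to this particular stationary point, not merely that the point exists. I would first try to simplify the coupling by choosing the step-2 teacher on branches $b,c$ so that $\mathsf{D}_2$ is minimized at $v=0$ for every branch simultaneously. One way is to make $b$ and $c$ skewed in opposite directions, so $v$ stays at $0$ by symmetry and the problem reduces to one-dimensional convex-type dynamics in $u$ with a unique root. If that fails, I would verify that $L$ is coercive with a unique stationary point and invoke standard convergence of gradient descent for small step sizes.
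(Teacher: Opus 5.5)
Your construction is correct and is essentially the paper's argument in mirror image. The paper puts the step-2 mismatch only on branch $a$, with $v$ acting only there and branches $b,c$ already matched, and rewards $y_1=b$, so the full gradient pushes $\Pr(y_1=a)$ down to $\tfrac15$; you instead put the mismatch on $b,c$, reward $a$, and push $\Pr(y_1=a)$ up to $\tfrac25$. Your tuning is achievable under reverse KL: with $p=\pi_u(a)$ and $\bar D$ the step-2 divergence on branches $b,c$ at $v=0$, the reduced objective $\KL(\pi_u\|U)+(1-p)\bar D$ is minimized at $p=\tfrac25$ exactly when $\bar D=\log\tfrac43$, which $\pi_T(\cdot\mid b)=\pi_T(\cdot\mid c)=\bigl(\tfrac14,\tfrac{3+\sqrt5}{8},\tfrac{3-\sqrt5}{8}\bigr)$ realizes against step-2 logits $(v,0,0)$ while keeping every branch stationary at $v=0$, so $v\equiv 0$ is invariant and your one-dimensional convergence argument is tighter than the paper's, which only identifies the minimizer of $F$ (though ``concentrates on a single action'' must mean skewed rather than deterministic, or the reverse KL against a full-support softmax student is infinite).
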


\noindent
Proposition~\ref{prop:grad_issue} highlights a fundamental limitation of local tokenwise
updates: they may fail to assign credit to earlier decisions, leading to strictly
suboptimal solutions. Consequently, methods such as SDPO and OPSD, which rely on such
local gradient estimators, can fail to learn from delayed feedback.

The construction in Appendix~\ref{subsec:app_local_gradient_issue} considers a two-step problem in which the teacher--student mismatch is encountered only along one branch of the decision tree. Under the local objective, the contribution of this mismatch is differentiated while treating the prefix distribution as fixed, giving zero gradient with respect to the first-step decision. As a result, optimization remains at the symmetric initialization and produces a policy with expected reward $1/3$. In contrast, the full sequence-level objective accounts for how the first-step decision affects the distribution of future states. This induces a nonzero gradient on the first-step parameter, allowing optimization to change the first-step decision and shift probability mass toward higher-reward actions and attain expected reward $2/5$. Thus, local credit assignment can converge to a strictly suboptimal policy even in a simple two-step problem.

\begin{AIbox}{Takeaway: Tokenwise local gradient approximation can fail to credit early decisions.}
SDPO and OPSD apply a stop-gradient on the sampling process, so gradients pass only through the per-token divergence and may ignore how current token choices affect future token distributions. This can yield zero gradient for consequential early decisions while the full sequence-level gradient is nonzero, causing convergence to strictly suboptimal policies.
\end{AIbox}

Taken together, these results highlight two distinct limitations of existing self-distillation approaches: the objective may induce updates that are not monotonically reward-improving, and local tokenwise gradient approximations may miss the delayed consequences of early decisions. This suggests that learning from rich feedback requires both a more reward-aligned objective and a gradient estimator that preserves sequence-level credit assignment. We develop such an approach next.
\section{RL from Rich Feedback with Distributional Imitation Learning}
\label{sec:method}

\noindent \textbf{Distributional imitation learning formulation.} We view self-distillation through the lens of \emph{on-policy imitation learning}, in the spirit of DAgger \citep{ross2011reduction}: the learner trains on states induced by its own policy, while an expert provides guidance on those visited states. This perspective is especially natural for learning from rich feedback, where the student generates trajectories, and feedback induces a privileged teacher that can be queried on the states the student encounters.

We develop a distributional variant of DAgger. In contrast to the standard setup where the expert supplies a single action at each visited state, we allow the supervision to take the form of a local teacher distribution: when available, we use $\pi_T(\cdot \mid s_t)$; when only samples are available, we estimate the objective using one or more samples $y_t \sim \pi_T(\cdot \mid s_t)$. Concretely, we propose the following distributional imitation-learning (DistIL) objective:
\begin{equation}
\mathcal{L}_{\mathrm{DistIL}}(\theta)
:=
\mathbb{E}_{x \sim \rho,\; y \sim \pi_\theta(\cdot \mid x)}
\left[
\sum_{t=1}^{H}
H^{\times}\big(
\pi_T(\cdot \mid x, y_{1:t-1}),
\;\pi_\theta(\cdot \mid x, y_{1:t-1})
\big)
\right]
\label{eqn:our_objective_final}
\end{equation}
where we use $\pi_T(\cdot \mid x, y_{1:t-1}) = \mathsf{sg}\left(\pi_{\theta}(\cdot \mid x, y_{1:t-1}, f)\right)$. Importantly, this objective does not \emph{require} access to the teacher probabilities. It admits unbiased sample-based estimates, while also allowing teacher probabilities to be used when available. This flexibility allows DistIL to accommodate both black-box teachers, such as external models or human experts, as well as feedback-conditioned privileged teachers. This gives DistIL a practical advantage over SDPO, whose objective requires access to teacher probabilities. 

\noindent \textbf{Full gradients enabling future-aware credit assignment.} Motivated by the failure of local gradient approximations established in Section \ref{sec:local_credit_assignment}, we optimize $\mathcal{L}_{\mathrm{DistIL}}(\theta)$ defined in \eqref{eqn:our_objective_final} using full gradients rather than treating the student-induced state distribution as frozen. In Appendix \ref{subsec:app_grad_derivation}, we show that the gradients of DistIL objectives are given by:
\begin{equation}
\begin{split}
\nabla_\theta \mathcal{L}_{\mathrm{DistIL}}
&=
\underbrace{
\mathbb{E}_{x \sim \rho, y \sim \pi_\theta(\cdot \mid x)}
\left[
\sum_{t=1}^{H}
\nabla_\theta 
H^{\times}\Big(
\pi_T(\cdot \mid s_t),
\;\pi_\theta(\cdot \mid s_t)
\Big)
\right]
}_{\text{local credit assignment}}
\\
&\quad+
\underbrace{
\mathbb{E}_{x \sim \rho, y \sim \pi_\theta(\cdot \mid x)}
\left[
\sum_{t=1}^{H}
\nabla_\theta \log \pi_\theta(a_t \mid s_t)
\left(
\sum_{\tau > t}
H^{\times}\Big(
\pi_T(\cdot \mid s_\tau),
\;\pi_\theta(\cdot \mid s_\tau)
\Big)
\right)
\right]
}_{\text{future-credit assignment}}.
\end{split}
\label{eqn:grad_expression}
\end{equation}
The DistIL gradient decomposes into two complementary terms: a teacher-weighted local imitation term and a future-aware credit-assignment term.
\begin{itemize}
\item The first term provides \emph{teacher-weighted local imitation}. At each visited state $s_t$, its inner gradient is $\nabla_\theta H^{\times}\big(\pi_T(\cdot \mid s_t), \pi_\theta(\cdot \mid s_t)\big) = -\sum_y \pi_T(y \mid s_t)\nabla_\theta \log \pi_\theta(y \mid s_t)$, so the student increases token log-probabilities in proportion to the teacher’s probabilities. This teacher-weighted form is central to DistIL: unlike SDPO-style local updates, which weight student log-probabilities by a student--teacher mismatch $\log \frac{\pi_T(y \mid s_t)}{\pi_\theta(y \mid s_t)}$, DistIL imitates the teacher distribution directly. This direct imitation is what yields the reward-aligned update direction behind DistIL's monotonic-improvement guarantee (c.f. Proposition \ref{prop:monotonic_main}). The same design also allows the gradient to be estimated from teacher samples when only black-box access is available.
\item The second term provides \emph{future-aware credit assignment}. It carries teacher--student disagreement at later prefixes, measured by cross-entropy, back through $\nabla_\theta \log \pi_\theta(y_t \mid s_t)$ to the earlier tokens that made those prefixes likely.
\end{itemize}

\noindent \textbf{DistIL algorithm.}
Algorithm~\ref{alg:distil_practical} provides a practical variant of DistIL, used in our experiments. The algorithm instantiates the distributional imitation objective in~\eqref{eqn:our_objective_final} with the future-aware credit-assignment term in~\eqref{eqn:grad_expression}. In practice, we normalize the cumulative future loss to avoid length bias in long generations \citep{gu2024minillm,rashidinejad2025sail}. We optimize the objective with a PPO-style trust-region update. Following prior self-distillation methods, the teacher is obtained through an exponential moving average copy of the student \citep{hubotter2026reinforcement} or fixed as the initial student parameters \citep{zhao2026self}, conditioned on the rich feedback.

\begin{algorithm}[t]
\caption{Distributional Imitation Learning (DistIL) with Rich Feedback}
\label{alg:distil_practical}
\begin{algorithmic}[1]
\Require Student policy $\pi_{\theta_1}$, teacher policy $\pi_\phi$ with parameters initialized at $\phi:=\theta_1$, prompt 
distribution $\rho$, learning rate $\eta$, number of iterations $N$, group size $G$, update parameter $\beta \in [0,1]$.
\For{$i = 1, \dots, N$}
    \State Sample a prompt $x \sim \rho$
        \For{$j = 1, \dots, G$}
            \State Sample rollout $y_j \sim \pi_{\theta_i}(\cdot \mid x)$ and collect feedback $f_j $.
            \State Set state $s_t \gets (x, y_{j,<t})$. 
            \For{$t = 1, \dots, |y_j|$}
                \State Compute future credit $C^{\mathrm{fut}}_{j,t} \gets \frac{1}{|y_j| - t - 1}
                    \sum_{\tau > t}
                    H^{\times}\!\Bigl(
                    \mathsf{sg}\bigl(\pi_{\phi}(\cdot \mid s_\tau, f_j)\bigr),\;
                    \pi_{\theta_i}(\cdot \mid s_\tau)
                    \Bigr)$
            \EndFor
    \EndFor
    \State Compute gradient \vspace{-2mm}
    \begin{align*}
        \nabla_\theta \mathcal{L}_{\mathrm{DistIL}} \gets\; \frac{1}{G}\sum_{j=1}^G\sum_{t=1}^{|y_j|} C^{\mathrm{fut}}_{j,t} \cdot \nabla_\theta \log \pi_{\theta_i}(y_{j,t} \mid s_t) - 
         \sum_a  \underbrace{\mathsf{sg}\bigl(\pi_{\phi}(a \mid s_t, f_j)\bigr)}_{=\text{Local credit}} \cdot \nabla_\theta \log \pi_{\theta_i}(a \mid s_t)
    \end{align*}
    \State Update $\theta_i$ using PPO-style to get $\theta_{i+1}$
    \State Update teacher parameters: $\phi \gets \beta\phi + (1-\beta)\theta_{i+1}$.
\EndFor
\State \Return $\hat{\pi} = \pi_{\theta_{N+1}}$
\end{algorithmic}
\end{algorithm}

\section{Theoretical Properties of DistIL}
We now turn to the theoretical properties of DistIL. Our analysis shows that the same design choices that make DistIL natural for rich feedback also align it with reward improvement. The forward cross-entropy objective yields a monotonic policy-improvement guarantee; the resulting online imitation-learning procedure admits a sublinear regret bound; and the objective optimizes a teacher-weighted lower bound on the maximum likelihood of success, explaining why DistIL can improve pass@$n$ in practice.

\subsection{DistIL enjoys monotonic policy improvement}

Section~\ref{sec:policy_improvement} showed that distilling a better teacher via $f$-divergences is not enough to guarantee policy improvement and the surrogate objective must also point in a reward-improving direction. We now show that DistIL has precisely this property. Under a fixed state distribution and a mild local realizability condition, the natural-gradient step induced by the forward cross-entropy objective improves reward whenever the teacher is on average better than the student. The proof is deferred to Appendix~\ref{subsec:app_monotonic}.

\begin{proposition}[\textbf{DistIL enjoys monotonic policy improvement}]
\label{prop:monotonic_main} Assume the map $\theta \mapsto \pi_\theta$ is twice continuously differentiable. Let $\Delta$ denote the teacher--student gap as defined in \eqref{eq:teacher_student_gap} and assume teacher--student tangent space realizability: there exists $u \in \mathbb{R}^d$ such that for all states $s$ and actions $y$, $\log \pi_T(a \mid s) - \log \pi_\theta(y \mid s) = u^\top \nabla_\theta \log \pi_\theta(y \mid s).$ Then, under a fixed state distribution, the natural policy gradient step applied to DistIL $\theta' = \theta - \eta \, F(\theta)^{-1} \nabla_{\theta} H^{\times}(\pi_T \| \pi_\theta)$ with sufficiently small $\eta > 0$ satisfies:
\begin{align*}
    J(\pi_{\theta'}) = J(\pi_\theta) + \eta \Delta + O(\eta^2).
\end{align*}
Consequently, if $\Delta > 0$, the update gives monotonic policy improvement.
\end{proposition}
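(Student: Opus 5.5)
Under the fixed state distribution, the NPG step on the DistIL objective should move $\log\pi_\theta$ exactly along the teacher--student log-ratio; the first-order change in $J$ is then the covariance of reward with that log-ratio, which we will identify with $\Delta$ (up to second-order terms).

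\textbf{Step 1: gradient and natural direction.} Write $d$ for the fixed state distribution and $F(\theta)=\E_{s\sim d,\,y\sim\pi_\theta(\cdot\mid s)}[\nabla\log\pi_\theta\,\nabla\log\pi_\theta^\top]$. Then
\[
\nabla_\theta H^{\times}(\pi_T\|\pi_\theta)=-\E_{s\sim d}\sum_y \pi_T(y\mid s)\nabla_\theta\log\pi_\theta(y\mid s)=-\E_{s\sim d,\,y\sim\pi_\theta}\Bigl[\tfrac{\pi_T(y\mid s)}{\pi_\theta(y\mid s)}\nabla_\theta\log\pi_\theta(y\mid s)\Bigr].
\]
Since $\E_{y\sim\pi_\theta}[\nabla\log\pi_\theta]=0$, we may replace the ratio $\rho=\pi_T/\pi_\theta$ by $\rho-1$. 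We then linearize $\rho-1=\exp(u^\top\nabla\log\pi_\theta)-1$ using the tangent-space realizability assumption, reading the statement's $\log\pi_T(a\mid s)$ as $\log\pi_T(y\mid s)$.

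In the small-discrepancy regime, $\rho-1\approx u^\top\nabla\log\pi_\theta$. This gives $\nabla_\theta H^{\times}\approx -F(\theta)u$, and hence $F^{-1}\nabla H^{\times}\approx -u$, so that $\theta'=\theta+\eta u$. This is the standard compatible-function-approximation argument of \citet{kakade2001natural}. Exactness is cleaner with a pseudo-inverse: $F^{-1}\nabla$ is the least-squares projection of $\rho-1$ onto the tangent features. I would treat this projection as $u$ directly, interpreting realizability as holding for the centered ratio. Pinning down exactly which realizability form gives an exact identity is \textbf{the main obstacle}. If forced to be rigorous, I would state the identity up to the residual of the projection.

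\textbf{Step 2: Taylor expansion of $J$.} Because $\theta\mapsto\pi_\theta$ is $C^2$,
\[
J(\pi_{\theta'})=J(\pi_\theta)+\eta\,\nabla J(\theta)^\top u+O(\eta^2).
\]
In the bandit/fixed-state setting, $\nabla J=\E_{s\sim d,\,y\sim\pi_\theta}[r(y)\nabla\log\pi_\theta(y\mid s)]$, so
\[
\nabla J^\top u=\E[r\,u^\top\nabla\log\pi_\theta]=\E_{y\sim\pi_\theta}\Bigl[r(y)\,\bigl(\tfrac{\pi_T}{\pi_\theta}-1\bigr)\Bigr]=\E_{\pi_T}[r]-\E_{\pi_\theta}[r]=\Delta.
\]
This uses the same identification of $u^\top\nabla\log\pi_\theta$ with the centered ratio as in Step 1. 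In the log form, $\E_{\pi_\theta}[r\log(\pi_T/\pi_\theta)]$ equals $\Delta$ to first order in the teacher--student discrepancy.

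The key conceptual point, contrasting with Proposition~\ref{prop:f_divergence_non_monotonic}, is where the direction comes from. Forward cross-entropy yields the direction $\rho-1$, i.e.\ $g(y)$ affine in the ratio, and its covariance with $r$ under $\pi_\theta$ is exactly the reward gap. By contrast, reverse KL yields the direction $\log\rho$, whose covariance with $r$ is unrelated to $\Delta$.

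\textbf{Step 3: conclude.} The remainder is $O(\eta^2)$ by the $C^2$ assumption and boundedness of $u$, $F^{-1}$ on a neighborhood of $\theta$. Therefore $\Delta>0$ implies $J(\pi_{\theta'})>J(\pi_\theta)$ for sufficiently small $\eta$. The main care point remains Step~1: reconciling the log-ratio realizability with the ratio-minus-one direction that forward cross-entropy produces. I would handle this either by reinterpreting the assumption for the centered ratio, or by absorbing the discrepancy as a higher-order term in the teacher--student gap.
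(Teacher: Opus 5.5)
Your proposal is essentially the paper's proof. The paper also subtracts the zero-mean score to write the gradient as $-\E_x\bigl[\sum_y(\pi_T-\pi_\theta)\nabla_\theta\log\pi_\theta\bigr]$, uses realizability to identify this with $-F(\theta)u$ so that $\theta'=\theta+\eta u$, and then Taylor-expands the policy (equivalent to your expansion of $J$) to get $\pi_{\theta'}=\pi_\theta+\eta(\pi_T-\pi_\theta)+O(\eta^2)$, which gives $\eta\Delta$.

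The obstacle you flag is resolved exactly by your first option. The paper's proof actually invokes realizability in the ratio form $\pi_T-\pi_\theta=\pi_\theta\,\nabla_\theta\log\pi_\theta^\top u$ (the form verified for tabular softmax in the appendix), not the stated log form. Taken literally, the log form would force $\mathrm{KL}(\pi_\theta\|\pi_T)=0$, because the score has zero mean under $\pi_\theta$, so the proposition would be vacuous. You should therefore commit to the ratio reading and drop the linearization option. That option would leave an $\eta\cdot O(\text{discrepancy}^2)$ error, which is not absorbed by the $O(\eta^2)$ remainder.
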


Proposition~\ref{prop:monotonic_main} explains why the forward cross-entropy objective is aligned with reward improvement. Under a fixed state distribution, DistIL's natural-gradient update moves the student policy toward the teacher policy itself. In first order, $\pi_{\theta'}(\cdot \mid s)$ becomes an interpolation between $\pi_\theta(\cdot \mid s)$ and $\pi_T(\cdot \mid s)$. Thus, when the teacher has higher expected reward, small movement toward the teacher improves the updated policy.

Reverse-KL objective in SDPO does not have this property. Its update is weighted by the log-ratio $\log(\pi_\theta/\pi_T)$, so student does not necessarily move toward the teacher. As Section~\ref{sec:policy_improvement} shows, this weighting can suppress actions that are good but overrepresented by the student relative to the teacher, and can even increase probability on worse actions after normalization. This distinction is not related to the usual mode-seeking versus mode-covering behavior of forward and reverse KL. The argument here is on first-order reward alignment: forward cross-entropy preserves the direction toward a better teacher, while reverse KL can distort it.

\textit{Remark.} The tangent-space realizability assumption in Proposition~\ref{prop:monotonic_main} requires the teacher direction to be locally representable by the student policy class. In other words, the mismatch from the student toward the teacher must lie in the span of the student's score functions. This is a mild condition for expressive policy classes and holds exactly for tabular softmax policies; see Appendix~\ref{subsec:app_tangent_space_realizability} for more details.

\begin{AIbox}{Takeaway: DistIL's update preserves the teacher direction}
Under the natural-gradient update, DistIL's forward cross-entropy objective moves the student toward the teacher policy itself. This yields first-order policy improvement whenever the teacher is better on average. Reverse-KL objectives can distort this direction through log-ratio weighting, and therefore may not improve reward even when the teacher is better.
\end{AIbox}

\subsection{DistIL regret guarantee}
We next analyze DistIL as an online learning algorithm. Unlike the monotonic-improvement result, which studies a single local update, the regret analysis accounts for the fact that each student update changes the state distribution on which the teacher is queried. The theorem below shows that the policy returned by DistIL achieves a decaying suboptimality gap in terms of the expected reward relative to the teacher policy. The proof can be found in Appendix \ref{app:regret_proof}.

\begin{theorem}[\textbf{DistIL regret guarantee}]
\label{thm:suboptimality_d_dagger}
Assume finite ratio-based concentrability coefficient $C < \infty$ (Definition~\ref{def:concentrability}) and finite KL concentrability coefficient $C_0 < \infty$ (Definition~\ref{def:kl_concentrability}). Let $\bar{\sigma}_{\pi_T}^2$ denote the worst-case teacher-policy variance (Definition~\ref{def:teacher_variance}) and let $\mu_T$ denote the signed teacher recoverability parameter (Definition~\ref{def:teacher_recoverability}). Then, the policy $\hat \pi$ returned by the NPG-DistIL Algorithm \ref{alg:distil_npg} with learning rate $\eta = \frac{1}{C}\sqrt{\frac{2C_0}{n}}$ satisfies:
\begin{align*}
    J(\pi_T)-J(\hat \pi)
    \;\lesssim\;
    \sqrt{\bar{\sigma}_{\pi_T}^2 \frac{HC}{2}}
    \left(\frac{2C_0}{n}\right)^{1/4}
    +
    \frac{\mu_T HC}{2}
    \sqrt{\frac{2C_0}{n}}.
\end{align*}
\end{theorem}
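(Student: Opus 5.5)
The plan combines three ingredients: a DAgger-style performance-difference decomposition, an online mirror-descent (NPG) regret bound in KL geometry, and a conversion from KL to a value difference via a variance term. This conversion is what produces the square-root term with $\bar\sigma_{\pi_T}^2$. The rate has the shape $\sqrt{\text{regret}}$ plus a linear regret term, and $\eta=\frac1C\sqrt{2C_0/n}$ is exactly the tuning that balances $C_0/\eta$ against $\eta C^2 n$.

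\textbf{Step 1: performance difference.} By the performance-difference lemma, applied with the teacher as reference policy,
\[
J(\pi_T)-J(\pi_i)=\sum_{t=1}^H \E_{s_t\sim d^{\pi_i}_t}\Big[\E_{a\sim\pi_T}Q^{\pi_T}_t(s_t,a)-\E_{a\sim\pi_i}Q^{\pi_T}_t(s_t,a)\Big].
\]
States are drawn from the student's own distribution, which is the DAgger structure. The inner difference is $\langle \pi_T-\pi_i, Q^{\pi_T}_t(s_t,\cdot)\rangle$. I would split it into two parts.

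The first part is a centered fluctuation, controlled by change of measure. Writing it as $\E_{\pi_T}\big[(1-\pi_i/\pi_T)(Q^{\pi_T}-V^{\pi_T})\big]$ and applying Cauchy--Schwarz bounds it by
\[
\sqrt{\bar\sigma^2_{\pi_T}}\,\sqrt{\chi^2\text{-type divergence}}.
\]
The ratio concentrability $C$ (Definition~\ref{def:concentrability}) then lets me compare this $\chi^2$ quantity with the KL, or equivalently with the excess cross-entropy $H^\times(\pi_T,\pi_i)-H(\pi_T)$, at the cost of a factor $C$.

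The second part is a drift term, bounded by the signed recoverability parameter $\mu_T$ (Definition~\ref{def:teacher_recoverability}) times the same KL-type quantity, to first order.

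Summing over $t$ and averaging over $i=1,\dots,n$ gives
\[
\frac1n\sum_i \big(J(\pi_T)-J(\pi_i)\big)\lesssim \sqrt{\bar\sigma^2 HC\,\bar\ell}+\mu_T HC\,\bar\ell,
\]
where $\bar\ell$ is the average per-round excess loss $\frac1n\sum_i \ell_i(\pi_i)-\ell_i(\pi_T)$. The square root comes from Jensen's inequality over rounds.

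\textbf{Step 2: online regret of NPG-DistIL.} Each round's loss $\ell_i(\pi)=\E_{s\sim d^{\pi_i}}\big[H^\times(\pi_T(\cdot\mid s),\pi(\cdot\mid s))\big]$ is linear in $\log\pi$. The NPG step on softmax is a multiplicative-weights / mirror-descent update per state, so the standard analysis applies:
\[
\sum_i\big(\ell_i(\pi_i)-\ell_i(\pi_T)\big)\le \frac{\KL(\pi_T\|\pi_1)}{\eta}+\eta\sum_i\|g_i\|_\infty^2.
\]
Here $\KL(\pi_T\|\pi_1)$ is bounded by $C_0$ (Definition~\ref{def:kl_concentrability}), and the gradient magnitude (the ratio $\pi_T/\pi$) is bounded via $C$. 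This gives regret at most $C_0/\eta+\eta C^2 n/2$. Plugging in the stated $\eta$ yields $\bar\ell\lesssim C\sqrt{2C_0/n}$. Substituting into Step 1 produces both terms of the theorem, with $(2C_0/n)^{1/4}$ arising from the square root.

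\textbf{Step 3: the returned policy.} For $\hat\pi$ taken uniformly among the iterates (the mixture, as in DAgger), the average suboptimality bound transfers directly.

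The hardest step is the conversion in Step 1. Concentrability has to be used carefully to pass from $\chi^2$, or the ratio, to the cross-entropy regret, and the state-distribution shift has to be absorbed exactly into $C$ and $\mu_T$. I would first try the pointwise bound $\chi^2\le C\cdot\KL$, valid under bounded ratios, and accept the constant losses hidden in $\lesssim$.
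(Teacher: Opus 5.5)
Your overall architecture matches the paper's: an online bound on the excess cross-entropy from the per-state mirror-descent update, followed by a variance/recoverability change of measure along the student's roll-in. However, two steps do not go through as written.

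\textbf{Step~2 (the decisive gap).} The ``standard'' mirror-descent regret bound does not apply here. The round-$i$ loss $\ell_i$ averages the statewise cross-entropy over the current roll-in $d^{\pi_{\theta_i}}_t$, which changes every round. NPG-DistIL, by contrast, updates each state with the unweighted gradient $\nabla\ell_s(\pi_{\theta_i})$. The one-step inequality at a state $s$ telescopes in $D_i(s)=\mathrm{KL}(\pi_T(\cdot\mid s)\,\|\,\pi_{\theta_i}(\cdot\mid s))$. Once you weight it by $d^{\pi_{\theta_i}}_t(s)$, however, $\sum_i d^{\pi_{\theta_i}}_t(s)\,(D_i(s)-D_{i+1}(s))$ no longer telescopes. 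A per-state cumulative regret bound also cannot be converted into a bound on $\sum_i \E_{s\sim d^{\pi_{\theta_i}}_t}[D_i(s)]$: nothing prevents the weights from sitting, round after round, on whichever states currently have large $D_i$. This is exactly the obstacle the paper's proof is built to handle.

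The missing idea is self-bounding. For cross-entropy, the excess loss \emph{is} the Bregman potential: $\ell_s(\pi_{\theta_i})-\ell_s(\pi_T)=D_i(s)$. Combine this with convexity, the three-point identity, $|\partial\ell_s/\partial\pi(a\mid s)|=\pi_T(a\mid s)/\pi_{\theta_i}(a\mid s)\le C$, Young and Pinsker. The result is the per-state contraction $D_{i+1}(s)\le(1-\eta)D_i(s)+\eta^2C^2/2$, hence $D_i(s)\le(1-\eta)^{i-1}C_0+\eta C^2/2$ uniformly in $s$. A uniform bound can be integrated against any roll-in distribution, giving $\frac1n\sum_i(\ell_i(\pi_{\theta_i})-\ell_i(\pi_T))\le\frac{\eta HC^2}{2}+\frac{HC_0}{\eta n}=HC\sqrt{2C_0/n}$. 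The same uniform bound at $i=n+1$ also controls the last iterate $\hat\pi=\pi_{\theta_{n+1}}$ that the theorem concerns. Your Step~3 instead quietly replaces the output with a uniform mixture.

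\textbf{Step~1.} Cauchy--Schwarz under $\pi_T$ produces $\E_{\pi_T}[(1-\pi_{\theta_i}/\pi_T)^2]=\chi^2(\pi_{\theta_i}\|\pi_T)$. This needs $\pi_{\theta_i}/\pi_T$ bounded, whereas $C$ bounds the reverse ratio $\pi_T/\pi_{\theta_i}$. For instance, $\pi_T=(1-\epsilon,\epsilon)$ and $\pi_{\theta_i}=(\tfrac12,\tfrac12)$ satisfy $\pi_T/\pi_{\theta_i}\le 2$ and $\mathrm{KL}(\pi_T\|\pi_{\theta_i})\le\log 2$, yet $\chi^2(\pi_{\theta_i}\|\pi_T)\to\infty$ as $\epsilon\to 0$. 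Changing measure under $\pi_{\theta_i}$ instead gives a $\chi^2$ that $C$ does control. But then the variance is taken over the student's actions, not the teacher's as in Definition~\ref{def:teacher_variance}.

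Even granting $\chi^2\lesssim C\cdot\mathrm{KL}$, you pay that $C$ on top of the $C$ already in the online loss bound. Your final bound would then carry an extra $\sqrt{C}$ on the variance term and an extra $C$ on the recoverability term relative to the statement.

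The paper never goes through $\chi^2$. It uses $D_{\mathrm H}^2\le\frac12\mathrm{KL}$ pointwise and then Proposition~C.1 of \citet{foster2024is}. That result is a symmetric Hellinger change of measure along the student's roll-in, structurally the same as your performance-difference step. It yields $\sqrt{\bar\sigma^2_{\pi_T}\mathcal{E}_n}+\mu_T\mathcal{E}_n$ with no density-ratio factor. The $\mu_T\mathcal{E}_n$ term arises inside that change of measure, from $|Q^{\pi_T}_h-V^{\pi_T}_h|\le\mu_T$, not from a separate first-order drift term.
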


Theorem~\ref{thm:suboptimality_d_dagger} makes explicit what controls the difficulty of learning from a privileged teacher. The concentrability coefficients $C$ and $C_0$ reflect the student--teacher coverage. The regret bound also contains two quantities that reflect the problem-dependent difficulty of conducting imitation learning in the environment. The variance term $\bar{\sigma}_{\pi_T}^2$ measures the stochasticity of the teacher, i.e., how much the value of the teacher’s sampled actions can fluctuate. The recoverability parameter $\mu_T$ measures the worst-case value loss from taking one arbitrary action at a state and then following the teacher thereafter. When the teacher is nearly deterministic, the second term dominates and yields a faster $O(n^{-1/2})$ rate. In contrast, for highly stochastic teachers, the variance term leads to a slower $O(n^{-1/4})$ rate.

\textit{Remark (Proof technique).} The proof builds on the per-state mirror descent analysis of \citet{chang2024dataset}, but requires a non-trivial extension to the online setting. In particular, their analysis relies on expectations with respect to a fixed distribution, which does not hold in our setting because the state distribution evolves after every student update. We handle this by deriving a recurrence relation for the intermediate error terms and showing that the KL divergence contracts as the student is updated, allowing us to extend the analysis to the online regime. See Appendix~\ref{subsec:app_regret} for further details.

\subsection{DistIL maximizes a teacher-weighted likelihood of success}
\label{subsec:connection_to_maxrl}

In this section, we show that DistIL can also be understood from a sequence-level success perspective. While the previous results explain why the forward cross-entropy objective induces reward-aligned updates and admits monotonic policy improvement and regret guarantees, they do not yet explain why the method is particularly effective in sampling-based test-time metrics such as Pass@$N$ and Majority@$N$ (c.f., Figure \ref{fig:science_complete_metrics}). In the proposition below, we show that when rewards are binary indicators of success, optimizing DistIL's distributional imitation objective also maximizes a lower bound on a teacher-weighted log-likelihood of success.

\begin{proposition}[\textbf{DistIL objective is a lower bound on the teacher-weighted likelihood of success}]
\label{prop:weighted_connection}
For a fixed prompt $x$ and any policy $\pi$, denote the success probability $p_\pi(x) := \mathbb{E}_{y \sim \pi(\cdot \mid x)}[r(x,y)]$, where $r(x,y)\in\{0,1\}$ indicates correctness. Then
\[
- H^{\times}\!\big(\pi_T(\cdot \mid x), \pi_\theta(\cdot \mid x)\big)
\le
p_{\pi_T}(x)\log p_{\pi_\theta}(x).
\]
\end{proposition}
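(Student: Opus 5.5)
The plan is to work at the sequence level for the fixed prompt $x$. Write $q(y) := \pi_T(y \mid x)$, $p(y) := \pi_\theta(y \mid x)$ and $\mathcal{S} := \{y : r(x,y)=1\}$, so that $p_{\pi_T}(x) = q(\mathcal{S})$ and $p_{\pi_\theta}(x) = p(\mathcal{S})$. The left-hand side is then $-H^{\times}(q,p) = \sum_y q(y)\log p(y)$. The proof is a two-step chain: first discard the failure set, then apply the log-sum inequality on the success set, using that the entropy-like remainder it produces is nonpositive.

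\textbf{Step 1 (drop failures).} Every term $q(y)\log p(y)$ is $\le 0$ because $p(y)\le 1$. Hence
\[
\sum_y q(y)\log p(y) \;\le\; \sum_{y\in\mathcal{S}} q(y)\log p(y).
\]
This is the only place where binarity of $r$ is used: it is what lets us restrict to a set on which $q$ has total mass exactly $p_{\pi_T}(x)$.

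\textbf{Step 2 (log-sum inequality on $\mathcal{S}$).} I will apply $\sum_i a_i \log(b_i/a_i) \le A\log(B/A)$ with $a_y = q(y)$, $b_y = p(y)$ over $y\in\mathcal{S}$, so that $A = p_{\pi_T}(x)$ and $B = p_{\pi_\theta}(x)$. This inequality is just Jensen's inequality for the concave $\log$ under the conditional distribution $\tilde q = q/A$ on $\mathcal{S}$. Rearranging gives
\[
\sum_{y\in\mathcal{S}} q(y)\log p(y) \;\le\; A\log B \;+\; \Big(\sum_{y\in\mathcal{S}} q(y)\log q(y) - A\log A\Big).
\]
The bracketed remainder equals $A\sum_{y\in\mathcal{S}}\tilde q(y)\log\tilde q(y) = -A\,\mathrm{H}(\tilde q) \le 0$. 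Dropping it yields $p_{\pi_T}(x)\log p_{\pi_\theta}(x)$, which is the claim.

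\textbf{Main obstacle.} The main care point is the degenerate cases rather than any hard estimate.
\begin{itemize}
\item If $p_{\pi_T}(x)=0$, I use the convention $0\log(\cdot)=0$. The right-hand side is then $0$, and the left-hand side is $\le 0$ by Step 1.
\item If $p_{\pi_\theta}(x)=0$ but $p_{\pi_T}(x)>0$, the right-hand side is $-\infty$. In this case some $y\in\mathcal{S}$ has $q(y)>0$ and $p(y)=0$, so the left-hand side is also $-\infty$, and the inequality holds in the extended reals.
\item Terms with $q(y)=0$ contribute nothing to either sum.
\end{itemize}
I will also remark that the cross-entropy here is the sequence-level one, with $\pi(\cdot\mid x)$ the trajectory distribution. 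Its link to the per-token objective \eqref{eqn:our_objective_final} holds only when teacher trajectories are plugged in: by the autoregressive factorization, $\E_{y\sim\pi_T}[\log\pi_\theta(y\mid x)] = \E_{y\sim\pi_T}\big[\sum_t \log\pi_\theta(y_t\mid s_t)\big]$, which is a sum of per-token cross-entropies evaluated on teacher-visited prefixes. This holds with no further approximation.
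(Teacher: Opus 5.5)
Your proof is correct, and its skeleton matches the paper's: split the sum over $\mathcal{S}=\{y : r(x,y)=1\}$, then discard the failure contribution because $-\log \pi_\theta(y\mid x)\ge 0$. The difference is in how you bound the success part.

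The paper factors $\pi_\theta(y\mid x)=p_{\pi_\theta}(x)\,\pi_\theta(y\mid x,r=1)$ on $\mathcal{S}$. This turns the success part exactly into $-p_{\pi_T}(x)\log p_{\pi_\theta}(x)$ plus the remainder $p_{\pi_T}(x)\,H^{\times}\big(\pi_T(\cdot\mid x,r=1),\pi_\theta(\cdot\mid x,r=1)\big)\ge 0$. So on $\mathcal{S}$ the only inequality it needs is the pointwise fact $p(y)\le p(\mathcal{S})=B$.

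Your log-sum (Jensen) step is therefore more than necessary, since $\sum_{y\in\mathcal{S}}q(y)\log p(y)\le A\log B$ follows at once from that pointwise bound. It does buy a sharper intermediate inequality, $-H^{\times}(q,p)\le A\log B - A\,\mathrm{H}(\tilde q)$. In the exact identity the success-set slack is $A\,\mathrm{H}(\tilde q)+A\,\mathrm{KL}\big(\tilde q\,\|\,p|_{\mathcal{S}}/B\big)$; Jensen discards only the KL part, and you then drop the entropy.

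Your write-up also improves on the paper in two respects.
\begin{enumerate}
\item You treat the degenerate cases $p_{\pi_T}(x)=0$ and $p_{\pi_\theta}(x)=0$ explicitly. The paper's conditioning on $r=1$ and $r=0$ tacitly assumes $p_{\pi_\theta}(x)>0$ and $0<p_{\pi_T}(x)<1$.
\item Your closing remark is accurate. The proposition concerns the sequence-level cross-entropy, which decomposes into per-token cross-entropies on teacher-visited prefixes. The DistIL objective \eqref{eqn:our_objective_final} is instead defined on student-visited prefixes, so the link to DistIL that the paper asserts is only heuristic, and the paper does not address this.
\end{enumerate}
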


In Proposition~\ref{prop:weighted_connection}, the right-hand side is the student's log probability of success on prompt $x$, weighted by the teacher's probability of producing a successful response on the same prompt. Thus, minimizing forward cross-entropy maximizes a lower bound on a teacher-weighted log-likelihood of success, placing more weight on prompts where the feedback-conditioned teacher is likely to be correct. The proof, given in Appendix~\ref{subsec:distil_max_rl_connection}, follows by decomposing the teacher cross-entropy into correct and incorrect responses and lower bounding the contribution of correct responses, while noting that the remaining terms are nonnegative.

This connection is specific to the teacher-weighted forward cross-entropy structure of DistIL. In contrast, objectives such as SDPO need not yield the same lower bound on the student's probability of success. See Appendix~\ref{subsec:sdpo_max_rl_non_connection} for a formal statement and analysis. Recent work on maximum-likelihood RL shows that maximizing the likelihood of successful trajectories improves Pass@$N$ for all $N$ \citep{tajwar2026maximum}. Proposition~\ref{prop:weighted_connection} therefore provides a principled explanation for DistIL's strong Pass@$N$ performance.

\section{Experimental results}\label{sec:experiments}

We evaluate our method across three representative settings that capture different feedback regimes and levels of supervision: (i) science reasoning benchmarks without rich environment feedback; (ii) coding tasks with rich execution-based feedback; and (iii) mathematical reasoning on challenging problems with access to ground-truth solutions.

\subsection{Bootstrapping on model generated correct responses as feedback}

We evaluate in the standard RLVR setting, where only a binary success signal is observed per rollout. Following SDPO, we sample multiple rollouts per input and use successful trajectories as feedback $f$, training the model to increase their likelihood and converting sparse rewards into a dense, self-supervised signal.

\noindent\textbf{Setup.} We compare against SDPO, on-policy GRPO (trajectories from the current policy), and off-policy GRPO (past rollouts reweighted via importance sampling). For all baselines, we adopt the same hyperparameter settings as in SDPO. Scientific reasoning is assessed on undergraduate-level problems in chemistry, physics, biology, and materials science using the L3 reasoning subset of SciKnowEval~\citep{feng2024sciknoweval}, with a train--test split for in-domain generalization. We initialize from Qwen38B~\citep{yang2025qwen3} and Olmo3-7B-Instruct~\citep{olmo2025olmo},
and report Best@16 and Maj@16 (Figure~\ref{fig:science_complete_metrics}) and Avg@16
(Table~\ref{tab:results_sparse_feedback_science}) against wall-clock training time using the \texttt{verl} framework~\citep{sheng2025hybridflow}.

\begin{table}[h]
  \centering
  \caption{Comparison on scientific reasoning benchmarks (SciKnowEval L3). We report best
  avg$@16$ within 1h and 5h of wall-clock training on $4\times$ NVIDIA H200 GPUs. Average
  is computed over all four domains. \textbf{Bold} denotes the best result per column;
  \underline{underline} denotes the second best.}
  \resizebox{0.8\textwidth}{!}{%
  \begin{tabular}{l cccccccccc}
    \toprule
    & \multicolumn{2}{c}{Chemistry}
    & \multicolumn{2}{c}{Physics}
    & \multicolumn{2}{c}{Biology}
    & \multicolumn{2}{c}{Materials}
    & \multicolumn{2}{c}{Average} \\
    \cmidrule(lr){2-3}\cmidrule(lr){4-5}\cmidrule(lr){6-7}\cmidrule(lr){8-9}\cmidrule(lr){10-11}
    Method & 1h & 5h & 1h & 5h & 1h & 5h & 1h & 5h & 1h & 5h \\
    \midrule
    \textbf{Qwen3-8B}
      & \multicolumn{2}{c}{$41.2$}
      & \multicolumn{2}{c}{$59.2$}
      & \multicolumn{2}{c}{$30.8$}
      & \multicolumn{2}{c}{$58.9$}
      & \multicolumn{2}{c}{$47.5$} \\
    + GRPO (off-policy)
      & $65.9$ & $74.5$
      & $63.8$ & $\underline{72.7}$
      & $35.1$ & $59.9$
      & $\underline{74.3}$ & $\mathbf{77.1}$
      & $59.8$ & $71.1$ \\
    + GRPO (on-policy)
      & $63.3$ & $63.4$
      & $63.6$ & $63.6$
      & $49.8$ & $49.8$
      & $73.9$ & $74.1$
      & $62.7$ & $62.7$ \\
    + SDPO
      & $\underline{73.0}$ & $\underline{80.2}$
      & $\underline{68.0}$ & $72.4$
      & $\underline{52.9}$ & $\underline{63.6}$
      & $72.2$ & $75.9$
      & $\underline{66.5}$ & $\underline{73.0}$ \\
    + DistIL \textit{(Ours)}
      & $\mathbf{75.8}$ & $\mathbf{80.8}$
      & $\mathbf{72.7}$ & $\mathbf{80.8}$
      & $\mathbf{53.3}$ & $\mathbf{66.6}$
      & $\mathbf{74.9}$ & $\underline{76.2}$
      & $\mathbf{69.2}$ & $\mathbf{76.1}$ \\
    \midrule
    \textbf{Olmo3-7B-Instruct}
      & \multicolumn{2}{c}{$22.8$}
      & \multicolumn{2}{c}{$37.7$}
      & \multicolumn{2}{c}{$16.2$}
      & \multicolumn{2}{c}{$36.7$}
      & \multicolumn{2}{c}{$28.4$} \\
    + GRPO (off-policy)
      & $39.7$ & $56.7$
      & $55.3$ & $63.3$
      & $35.6$ & $\mathbf{55.8}$
      & $70.9$ & $75.0$
      & $50.4$ & $62.7$ \\
    + GRPO (on-policy)
      & $51.4$ & $57.5$
      & $\underline{62.7}$ & $62.7$
      & $\mathbf{49.8}$ & $49.8$
      & $\underline{73.3}$ & $73.5$
      & $59.3$ & $60.9$ \\
    + SDPO
      & $\underline{70.2}$ & $\underline{79.2}$
      & $59.8$ & $\underline{64.9}$
      & $\underline{49.5}$ & $52.9$
      & $71.8$ & $\mathbf{78.1}$
      & $\underline{62.8}$ & $\underline{68.8}$ \\
    + DistIL \textit{(Ours)}
      & $\mathbf{72.1}$ & $\mathbf{81.0}$
      & $\mathbf{67.4}$ & $\mathbf{74.5}$
      & $47.8$ & $\underline{55.3}$
      & $\mathbf{73.5}$ & $\underline{76.9}$
      & $\mathbf{65.2}$ & $\mathbf{71.9}$ \\
    \bottomrule
  \end{tabular}%
  }
  \label{tab:results_sparse_feedback_science}
\end{table}

\noindent\textbf{Empirical Findings.}
Table~\ref{tab:results_sparse_feedback_science} and Figure~\ref{fig:science_complete_metrics} summarize our results. DistIL achieves the best avg@16 performance in the large majority of columns across both model families, with the largest gains in physics and chemistry domains; physics gains reach $8.1$/$9.6$ points at $5$h over next-best on Qwen3-8B/OLMo3. Beyond accuracy, DistIL's lead emerges within ${\sim}20$ steps and is largely monotonically sustained (Proposition~\ref{prop:monotonic_main}), while SDPO exhibits instability: Best@16 collapses on biology after step $100$ and oscillates on chemistry and physics. DistIL's Best@16 advantage is grounded in Proposition~\ref{prop:weighted_connection}: our forward cross-entropy objective maximizes a teacher-weighted lower bound on success probability, implicitly optimizing \texttt{Pass@N} for all N \citep{tajwar2026maximum}. On-policy GRPO saturates within one hour; off-policy GRPO lags
DistIL by $5{+}$ points on reasoning-heavy domains at $5$h.

\subsection{Learning with rich environment feedback}

We evaluate in environments with rich feedback, where signals go beyond binary success, such as execution errors, which standard RLVR cannot exploit. Following SDPO, we sample multiple rollouts per input and use execution-based feedback to construct supervision. Concretely, rollouts are evaluated against public unit tests, and feedback $f$ is derived from the resulting code execution logs.

\noindent \textbf{Setup.} We compare against SDPO and GRPO with identical hyperparameters as in SDPO on LiveCodeBench (LCBv6)~\citep{jain2024livecodebench}, using public tests for training feedback and private tests for evaluation, initialized from Qwen3-8B~\citep{yang2025qwen3}. Following \citep{jain2024livecodebench}, we report exact correctness by determining whether all test cases are passed (Score/Avg@$k$, Score/Maj@$k$, Score/Best@$k$) and partial correctness: fraction of test cases passed (Accuracy/Avg@$k$, Accuracy/Maj@$k$, Accuracy/Best@$k$), at temperature $0.2$, top\_p $0.95$.

\noindent\textbf{Empirical Findings.}
DistIL achieves Accuracy/Avg@16$\,{=}\,0.656$ and Score/Avg@16$\,{=}\,0.482$, outperforming
SDPO ($0.643$, $0.467$) and GRPO ($0.600$, $0.405$). The large gap over GRPO reflects its
fundamental inability to exploit execution feedback. Figure~\ref{fig:lcb_eval_temp_0.2} further shows consistent gains across all $k$ on both Best@$k$ and Maj@$k$ for score and accuracy, with the advantage most pronounced at small $k$. The gap between DistIL and SDPO could be attributed to our theoretically grounded-objective and squence-level credit assignment.

\begin{figure}[h]
    \centering
    \includegraphics[width=0.24\linewidth]{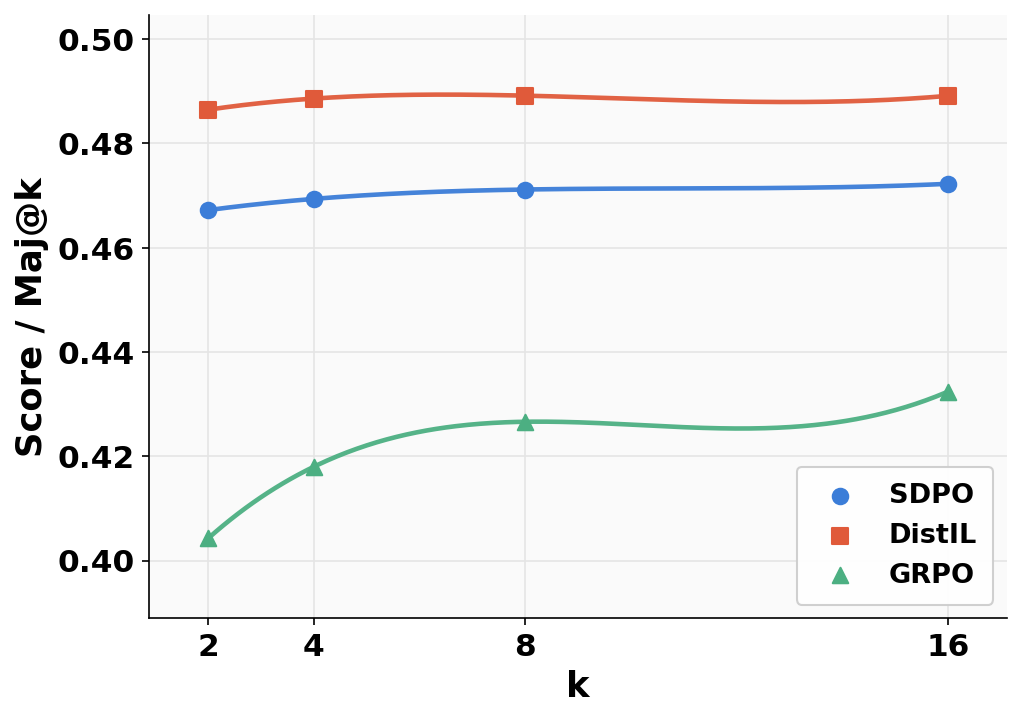}
    \includegraphics[width=0.24\linewidth]{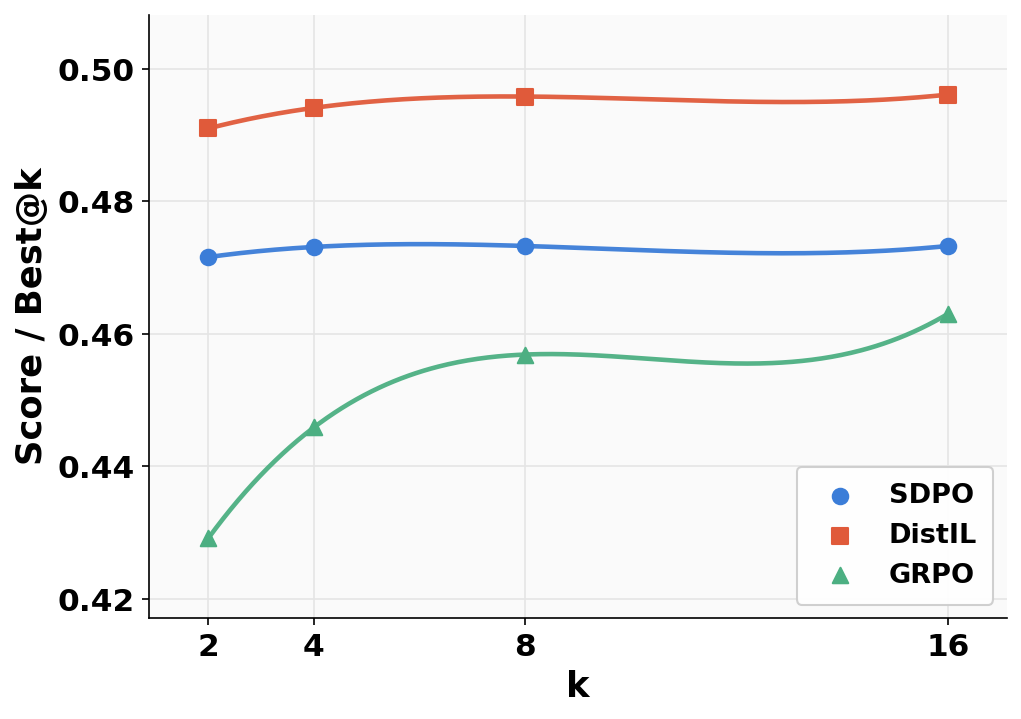}
    \includegraphics[width=0.24\linewidth]{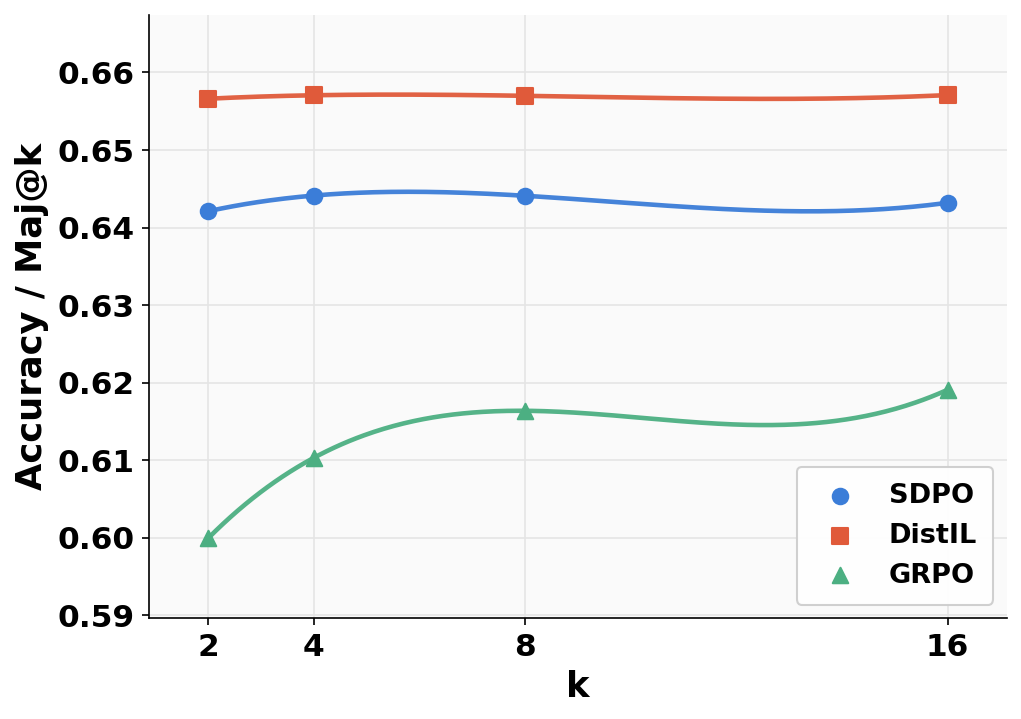}
    \includegraphics[width=0.24\linewidth]{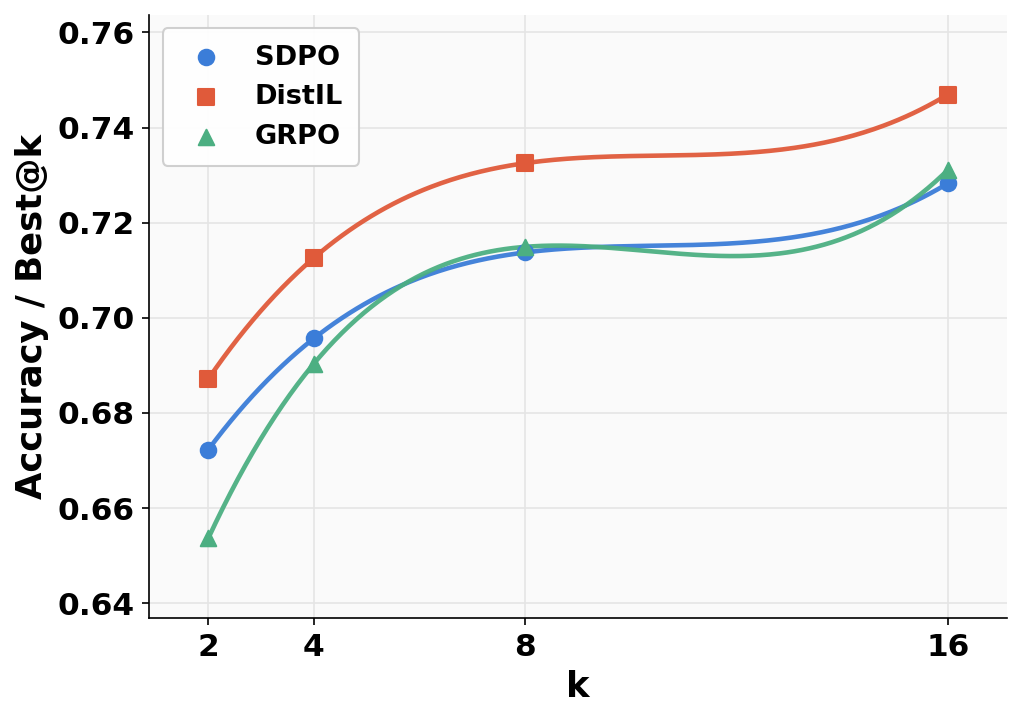}
    \caption{LCBv6 evaluation at $\tau{=}0.2$ for checkpoint at step-80 (following SDPO), reporting Score and Accuracy at Best@$k$ and Maj@$k$ for $k \in \{2,4,8,16\}$.}
    \label{fig:lcb_eval_temp_0.2}
\end{figure}

\subsection{Learning to solve hard mathematical reasoning problems}

We evaluate with access to ground-truth solutions as feedback $f$ on very hard mathematical reasoning problems, enabling learning on problems where the base model never produces a correct rollout. In this regime, RLVR methods such as GRPO receive zero advantage and cannot improve, while distillation-based approaches directly leverage ground-truth solutions to make progress.

\noindent\textbf{Setup.} We construct a challenging training set of $738$ hard mathematical problems sourced from \citep{cmu_aire_pope_hard_oracle_2026} and OmniMath (problems with pass@512$\,{=}\,0$ for Qwen3-4B-Instruct), following \citep{setlur2026reuse,qu2026pope}. We compare against OPSD (forward-KL), SDPO (reverse-KL), GRPO, and SFT, initialized from Qwen3-8B and Qwen3-4B-Instruct-2507 models. Evaluation covers AIME24, AIME25, HMMT25, AMC23, and Minerva.

\noindent\textbf{Empirical Findings.} Table~\ref{tab:qwen3_math_results} summarizes results. GRPO exactly matches the base model on both scales, confirming the expected failure: zero pass rate yields zero advantage. SFT degrades below the base model on most benchmarks, indicating overfitting rather than generalizable reasoning. DistIL achieves the best results across the large majority of columns at both model scales. Gains are most substantial on AIME25, where DistIL leads the next-best method on Avg@16 by $3.8$ points on Qwen3-4B and $1.4$ points on
Qwen3-8B. On the stronger Qwen3-8B, where SDPO and OPSD are competitive, DistIL
consistently ranks first or second, with no degradation relative to the base model.

\begin{table}[h]
  \centering
  \caption{Performance comparison on mathematical reasoning benchmarks for Qwen3 models. Sampling configuration is given in Appendix \ref{app:eval_hyper_math}. \textbf{Bold} denotes the best result per column;
           \underline{underline} denotes the second best. Following OPSD \citep{zhao2026self}, we report the best checkpoint up to step $100$.}
  \resizebox{0.8\textwidth}{!}{%
  \begin{tabular}{ll cccccccccc}
    \toprule
    & & \multicolumn{2}{c}{AIME24}
      & \multicolumn{2}{c}{AIME25}
      & \multicolumn{2}{c}{HMMT25}
      & \multicolumn{2}{c}{AMC23}
      & \multicolumn{2}{c}{Minerva} \\
    \cmidrule(lr){3-4}\cmidrule(lr){5-6}\cmidrule(lr){7-8}\cmidrule(lr){9-10}\cmidrule(lr){11-12}
    Model & Method & Avg & Pass & Avg & Pass & Avg & Pass & Avg & Pass & Avg & Pass \\
    \midrule
    \multirow{6}{*}{Qwen3-4B}
      & Base
        & $61.4$ & $82.0$ & $50.3$ & $69.5$ & $30.3$ & $48.8$ & $93.8$ & $98.8$  & $43.2$ & $48.6$ \\
      & $+$ SFT
        & $55.8$ & $80.9$ & $43.1$ & $67.4$ & $29.7$ & $46.0$ & $91.7$ & $97.0$  & $41.9$ & $50.4$ \\
      & $+$ GRPO
        & $61.4$ & $82.0$ & $50.3$ & $69.5$ & $30.3$ & $48.8$ & $93.8$ & $98.8$  & $43.2$ & $48.6$ \\
      & $+$ SDPO
        & $60.9$ & $\underline{85.4}$ & $49.6$ & $74.1$ & $\underline{32.8}$ & $50.9$ & $93.9$ & $\underline{99.8}$ & $\underline{43.4}$ & $\underline{52.0}$ \\
      & $+$ OPSD
        & $\underline{63.2}$ & $\underline{85.4}$ & $\underline{51.5}$ & $\underline{76.1}$ & $\mathbf{33.0}$ & $\underline{54.2}$ & $\underline{94.5}$ & $99.6$ & $\underline{43.4}$ & $51.8$ \\
      & $+$ DistIL \textit{(Ours)}
        & $\mathbf{65.3}$ & $\mathbf{87.5}$ & $\mathbf{55.3}$ & $\mathbf{77.6}$ & $\mathbf{33.0}$ & $\mathbf{56.6}$ & $\mathbf{94.8}$ & $\mathbf{100.0}$ & $\mathbf{44.2}$ & $\mathbf{52.9}$ \\
    \midrule
    \multirow{6}{*}{Qwen3-8B}
      & Base
        & $75.9$ & $86.8$ & $66.9$ & $79.6$ & $44.7$ & $65.6$ & $95.6$ & $\mathbf{100.0}$ & $48.9$ & $56.3$ \\
      & $+$ SFT
        & $74.2$ & $85.3$ & $65.3$ & $80.7$ & $42.2$ & $64.4$ & $96.3$ & $\mathbf{100.0}$ & $48.2$ & $56.2$ \\
      & $+$ GRPO
        & $75.9$ & $86.8$ & $66.9$ & $79.6$ & $44.7$ & $65.6$ & $95.6$ & $\mathbf{100.0}$ & $48.9$ & $56.3$ \\
      & $+$ SDPO
        & $\mathbf{76.9}$ & $89.5$ & $68.3$ & $\underline{84.0}$ & $\underline{45.6}$ & $\underline{71.2}$ & $\underline{96.5}$ & $\mathbf{100.0}$ & $\underline{49.2}$ & $57.7$ \\
      & $+$ OPSD
        & $\underline{76.5}$ & $\mathbf{91.3}$ & $\underline{69.7}$ & $83.3$ & $45.5$ & $68.2$ & $96.2$ & $\mathbf{100.0}$ & $\underline{49.2}$ & $\underline{57.8}$ \\
      & $+$ DistIL \textit{(Ours)}
        & $76.4$ & $\underline{90.7}$ & $\mathbf{71.1}$ & $\mathbf{85.0}$ & $\mathbf{46.4}$ & $\mathbf{71.4}$ & $\mathbf{96.6}$ & $\mathbf{100.0}$ & $\mathbf{49.5}$ & $\mathbf{58.4}$ \\
    \bottomrule
  \end{tabular}%
  }
  \label{tab:qwen3_math_results}
\end{table}

\begin{greenAIbox}{Result: DistIL outperforms RLVR and self-distillation baselines.}
On scientific reasoning, DistIL achieves the best Avg@16 on both model families, with gains up to 9.6 points. On coding, DistIL leads on all Best@$k$ and Maj@$k$ metrics; GRPO's large gap reflects its inability to exploit execution feedback. On hard mathematical reasoning problems, where GRPO receives zero gradients and fails to improve the base model, DistIL surpasses on-policy distillation baselines in most settings, with gains up to 3.8 points on AIME25 Avg@16. These results verify the advantages of reward alignment (Proposition~\ref{prop:monotonic_main}) and future-aware credit assignment of DistIL for learning from rich feedback.
\end{greenAIbox}

\subsection{Ablation Study}
\label{app:ablation}

\subsubsection{Credit Assignment}
We compare DistIL against a CE (cross-entropy) baseline that retains only the cross-entropy term of the gradient, performing local credit assignment analogous to SDPO and OPSD. Figure~\ref{fig:credt_comparison} shows results on the Material domain. DistIL consistently outperforms CE throughout training, demonstrating that the full credit assignment across the entire response distribution rather than assigning it locally, is the key driver of performance gains. The CE baseline exhibits higher variance and fails to improve to the level of DistIL's performance, suggesting that local credit assignment is insufficient for learning robust scientific reasoning.

\begin{AIbox}{Takeaway: Future credit assignment is advantageous in DistIL.}
Replacing full sequence-level credit assignment with local token-wise gradients (CE 
baseline) leads to higher variance and strictly worse performance, confirming 
Proposition~\ref{prop:grad_issue}.
\end{AIbox}

\begin{figure}[t]
    \centering
    \begin{minipage}[t]{0.49\textwidth}
        \centering
        \includegraphics[width=0.8\textwidth]{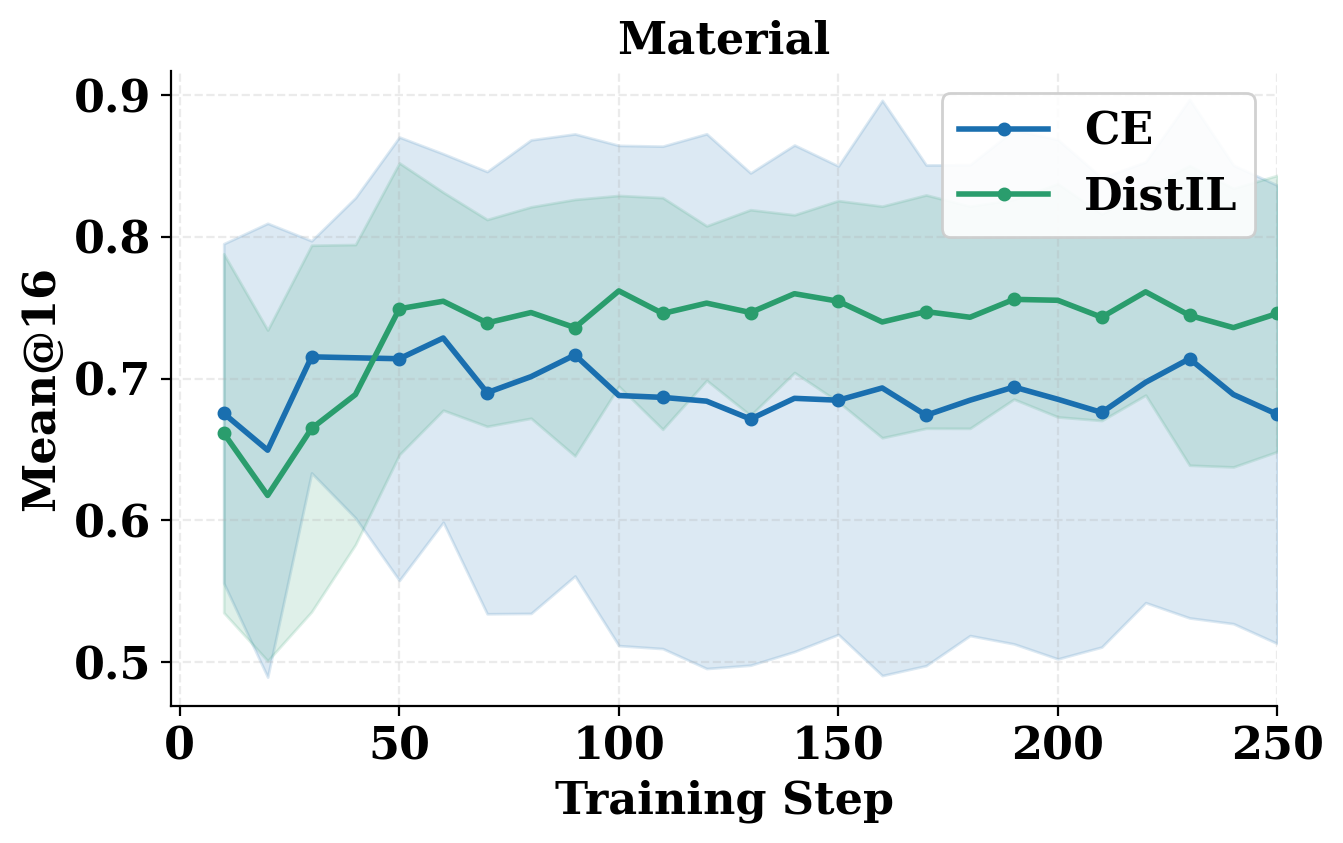}
        \caption{Comparison between DistIL (Ours) and CE. The difference between these methods lies in the credit assignment. CE only does local token-wise creadit assignment similar to SDPO and OPSD, whereas DistIL does full credit assignment as given in Eq. \eqref{eqn:grad_expression}.}
        \label{fig:credt_comparison}
    \end{minipage}
    \hfill
    \begin{minipage}[t]{0.49\textwidth}
        \centering
        \includegraphics[width=0.8\textwidth]{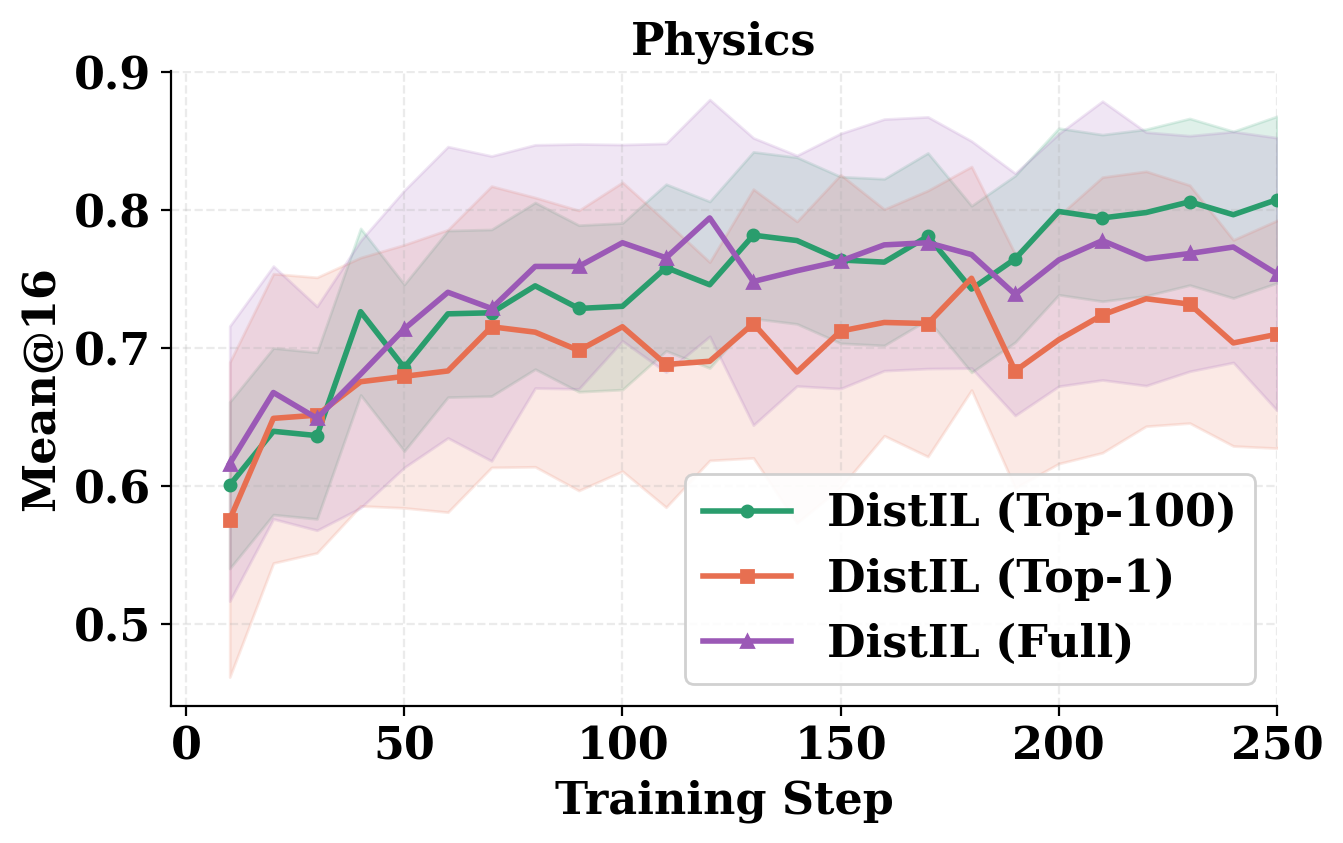}
        \caption{omparison of Avg@16 between different number of top-k tokens used for training DistIL on Material benchmark.}
        \label{fig:topk_ablation}
    \end{minipage}
\end{figure}

\subsubsection{Top-1 v/s Top k v/s Full distribution}
 We ablate the number of teacher tokens used in the distillation objective across $K \in \{1, 100, \text{Full}\}$ on the Physics domain (Figure~\ref{fig:topk_ablation}). Top-1 distillation, which aligns the student only to the single highest-probability teacher token, underperforms throughout training, reflecting insufficient signal from a single token. Top-100 achieves the best final performance and most stable training trajectory, outperforming both Top-1 and Full distillation. Full distillation, while competitive, introduces noise from low-probability tokens that are unlikely to be informative, slightly degrading performance relative to Top-100. Based on this analysis, we use K=100 in all experiments.
\section{Related Work}
\label{sec:related_work}

\textbf{Credit Assignment in RLVR.}
RLVR enables training from automatically verifiable outcomes without human preference annotations \citep{shao2024deepseekmath, guo2025deepseek}. However, supervision is applied at the trajectory level, assigning identical signals to all tokens despite their differing roles, making credit assignment especially challenging for long-horizon reasoning. One line of work addresses this by introducing intermediate supervision via process-reward models or step-wise evaluators \citep{lightman2023let, zhang2025lessons, wang2024math, yang2026test, luo2024improve, chen2024step, zhang2024generative, yang2025dynamic, dai2025s, cui2025process}. While effective, these approaches require additional annotation or auxiliary models. An alternative direction avoids explicit supervision by reweighting updates using internal signals such as entropy or uncertainty, but these rely on heuristic notions of token importance \citep{xie2025unlocking, li2026outcome, cheng2026reasoning, wang2025beyond, chen2025seed, sun2025ktae, li2025attention, chen2025beyond}. In our work, instead of estimating token importance heuristically or learning separate reward models, we construct token-level supervision directly through privileged feedback. This yields structured credit assignment while preserving the simplicity of the verifier-only training pipeline.

\textbf{On-Policy Distillation and On-Policy Self-Distillation.} On-policy distillation trains a student on its own rollouts while a teacher provides token-level supervision via divergence-based objectives \citep{agarwal2024policy, xu2024speculative, lu2025onpolicydistillation, xiao2026mimo, yang2025qwen3}. Aligning training with the student’s visitation distribution reduces the distribution mismatch inherent in off-policy approaches, but it typically relies on a stronger external teacher. Recent work extends this paradigm to self-distillation, where a single model provides both supervision and updates. OPSD \citep{zhao2026self} formulates this as minimizing the divergence between a privileged teacher, which is conditioned on ground-truth reasoning traces, and a student, thereby improving token efficiency. SDPO \citep{hubotter2026reinforcement} generalizes this setup to incorporate arbitrary feedback signals (e.g., code execution error logs), while RLSD \citep{yang2026self} integrates self-distillation with the GRPO methodologically to stabilize training. However, self-distillation introduces new challenges. It can degrade reasoning by suppressing epistemic uncertainty \citep{kim2026does}, and its success depends critically on alignment between teacher and student distributions as well as the quality of self-generated targets \citep{li2026rethinking}. Recent methods address these issues through improved supervision and routing strategies, including competence-aware weighting (PACED), iterative self-revision (SD-Zero), and hybrid routing between distillation and RL updates (SRPO) \citep{xu2026paced, he2026self, li2026unifying}. In contrast, our approach revisits the objective underlying self-distillation. We adopt a cross-entropy-based formulation with well-behaved optimization properties, resulting in more stable training dynamics and improved gradient estimation. This is related to methods such as DAgger \citep{ross2011reduction} in imitation learning that query the expert for supervision on the learner’s induced state distribution.
\section{Conclusion}
\label{sec:conclusion}

This work studies reinforcement learning from rich feedback. We connect learning from rich feedback via distilling a privileged feedback-conditioned teacher, to the standard expected-reward maximization objective in RL. This perspective raises a natural question: if the feedback-conditioned privileged teacher is better than the student on average, can we design a self-distillation algorithm whose updates monotonically improve the student toward the better teacher and admit regret guarantees? We show that existing self-distillation approaches can fail in two distinct ways: their objectives do not guarantee monotonic reward improvement and their local tokenwise gradient approximation can miss the delayed consequences of early decisions. In response, we introduce DistIL, a distributional variant of DAgger that optimizes a forward cross-entropy objective and performs future-aware credit assignment. We prove that DistIL enjoys monotonic policy improvement, admits a regret guarantee, and implicitly optimizes a teacher-weighted lower bound on the probability of success, explaining improvements in Pass@$N$ for every $N$. Empirically, we show that DistIL achieves strong performance across scientific reasoning, coding, and solving hard mathematical reasoning problems.

\bibliographystyle{assets/plainnat}
\bibliography{paper}

\clearpage

\newpage 

\appendix
\appendixtoc

\newpage

\appsection{Proofs for theoretical limitations of prior work}\label{sec:appendix_prior}

\appsubsection{\texorpdfstring{Proof of Proposition \ref{prop:f_divergence_non_monotonic}: $f$-divergence self-distillation does not guarantee monotonic policy improvement}{Proof of Proposition \ref{prop:f_divergence_non_monotonic}: f-divergence self-distillation does not guarantee monotonic policy improvement}}\label{subsec:app_monotonic_f_div}

Let us define $
p_a := \pi_\theta(a)$, $q_a := \pi_T(a)$, and $u_a := \frac{q_a}{p_a}$. Then, we have
\[
\mathcal{L}(\theta) := D_f(\pi_T\|\pi_\theta)
= \sum_{a\in\mathcal A} p_a\, f(u_a).
\]
Next, we compute the gradient of $\mathcal{L}(\theta)$. By the chain rule,
\[
\frac{\partial L(\theta)}{\partial p_a}
=
\frac{\partial}{\partial p_a}
\left[
p_a f\!\left(\frac{q_a}{p_a}\right)
\right]
=
f(u_a) - u_a f'(u_a)=g(a),
\]
where we used the definition of $g(a)$ from the statement in the Proposition. Next, we define the score vector $s_a := \nabla_\theta \log \pi_\theta(a)$. Using $\nabla_\theta p_a = p_a s_a$ and linearity, we have
\[
\nabla_\theta \mathcal{L}(\theta)
=
\sum_{b\in\mathcal A} \frac{\partial \mathcal{L}(\theta)}{\partial p_b}\,\nabla_\theta p_b
=
\sum_{b\in\mathcal A} g(b)\,p_b\,s_b
=
\mathbb{E}_{b\sim\pi_\theta}\!\left[g(b)s_b\right].
\]

Now, the natural gradient update is given as $
\theta' - \theta
=
-\eta F(\theta)^{-1}\nabla_\theta \mathcal{L}(\theta)$. Upon expanding the policy map to first order gives
\[
\pi_{\theta'}(a)
=
\pi_\theta(a)
+
\nabla_\theta \pi_\theta(a)^\top(\theta'-\theta)
+
O(\eta^2).
\]
Since $\nabla_\theta \pi_\theta(a)=\pi_\theta(a)s_a$ and using the gradient update, we get
\[
\pi_{\theta'}(a)
=
\pi_\theta(a)
-
\eta\,\pi_\theta(a)\,
s_a^\top F(\theta)^{-1}\nabla_\theta \mathcal{L}(\theta)
+
O(\eta^2).
\]
Now, we substitute the expression for $\nabla_\theta \mathcal{L}(\theta)$ to get $s_a^\top F(\theta)^{-1}\nabla_\theta \mathcal{L}(\theta)
=
\mathbb{E}_{b\sim\pi_\theta}
\!\left[
g(b)\,s_a^\top F(\theta)^{-1} s_b
\right]$. By Lemma~\ref{lem:fisher_identity},
\[
s_a^\top F(\theta)^{-1} s_b
=
\frac{\mathbf{1}\{a=b\}}{\pi_\theta(b)} - 1.
\]
Therefore,
\begin{align*}
s_a^\top F(\theta)^{-1}\nabla_\theta \mathcal{L}(\theta)
&=
\mathbb{E}_{b\sim\pi_\theta}
\left[
g(b)\left(\frac{\mathbf{1}\{a=b\}}{\pi_\theta(b)} - 1\right)
\right] \\
&=
\mathbb{E}_{b\sim\pi_\theta}
\left[
g(b)\frac{\mathbf{1}\{a=b\}}{\pi_\theta(b)}
\right]
-
\mathbb{E}_{b\sim\pi_\theta}[g(b)] \\
&=
g(a) - \mathbb{E}_{b\sim\pi_\theta}[g(b)].
\end{align*}
Thus, we have
\[
\pi_{\theta'}(a)
=
\pi_\theta(a)
-
\eta\,\pi_\theta(a)
\Big(
g(a)-\mathbb{E}_{b\sim\pi_\theta}[g(b)]
\Big)
+
O(\eta^2)
.
\]
Next, we consider that change in value of the policy following gradient update. Upon using the definition of $J(\pi)$, we have
\begin{align*}
J(\pi_{\theta'}) - J(\pi_\theta)
&=
\sum_{a\in\mathcal A} r(a)\big(\pi_{\theta'}(a)-\pi_\theta(a)\big) \\
&=
-\eta\sum_{a\in\mathcal A}\pi_\theta(a)\,r(a)
\Big(
g(a)-\mathbb{E}_{b\sim\pi_\theta}[g(b)]
\Big)
+
O(\eta^2) \\
&=
-\eta\left(
\mathbb{E}_{a\sim\pi_\theta}[r(a)g(a)]
-
\mathbb{E}_{a\sim\pi_\theta}[r(a)]\,
\mathbb{E}_{a\sim\pi_\theta}[g(a)]
\right)
+
O(\eta^2) \\
&=
-\eta\,\Cov_{a\sim\pi_\theta}\!\big(r(a),g(a)\big)
+
O(\eta^2).
\end{align*}
Hence, we have $J(\pi_{\theta'}) - J(\pi_\theta) = -\eta\,\Cov_{y\sim\pi_\theta}\!\big(r(y),g(y)\big)
+
O(\eta^2).$

\appsubsection{Proof of Proposition~\ref{prop:reverse_kl_non_improvement_counterexample}: Reverse-KL distillation can decrease reward}
\label{subsec:reverse_kl_failure_example}

We consider a single-context three-armed bandit with rewards
\[
r(y_1) = 1, \quad r(y_2) = \tfrac{1}{2}, \quad r(y_3) = 0,
\]
and student and teacher policies
\[
\pi_\theta = \!\left(\tfrac{1}{20},\, \tfrac{9}{20},\, \tfrac{1}{2}\right),
\qquad
\pi_T = \!\left(\tfrac{3}{10},\, \tfrac{1}{20},\, \tfrac{13}{20}\right).
\]
A direct calculation gives
\[
J(\pi_\theta) = \tfrac{1}{20}\cdot 1 + \tfrac{9}{20}\cdot\tfrac{1}{2} 
= \tfrac{11}{40} = 0.275,
\qquad
J(\pi_T) = \tfrac{3}{10}\cdot 1 + \tfrac{1}{20}\cdot\tfrac{1}{2} 
= \tfrac{13}{40} = 0.325,
\]
so $\Delta = J(\pi_T) - J(\pi_\theta) = \tfrac{1}{20} > 0$.

The bad action probability increases: The reverse-KL NPG update takes the form
\[
\pi_\theta'(y)
=
\pi_\theta(y)
-
\eta\,\pi_\theta(y)
\left(
\log\frac{\pi_\theta(y)}{\pi_T(y)}
-
D_{\mathrm{KL}}(\pi_\theta\|\pi_T)
\right).
\]
We compute the individual log-ratios:
\[
\log\frac{\pi_\theta(y_1)}{\pi_T(y_1)} = \log\tfrac{1}{6} \approx -1.792,
\quad
\log\frac{\pi_\theta(y_2)}{\pi_T(y_2)} = \log 9 \approx 2.197,
\quad
\log\frac{\pi_\theta(y_3)}{\pi_T(y_3)} = \log\tfrac{10}{13} \approx -0.262,
\]
and the KL divergence:
\[
D_{\mathrm{KL}}(\pi_\theta\|\pi_T)
=
\tfrac{1}{20}\log\tfrac{1}{6}
+
\tfrac{9}{20}\log 9
+
\tfrac{1}{2}\log\tfrac{10}{13}
\approx 0.768.
\]
For the bad action $y_3$, the update coefficient is
\[
\log\frac{\pi_\theta(y_3)}{\pi_T(y_3)} - D_{\mathrm{KL}}(\pi_\theta\|\pi_T)
\approx -0.262 - 0.768 = -1.030 < 0,
\]
so $\pi_\theta'(y_3) = 0.50 + \eta \cdot 0.515 > \pi_\theta(y_3)$. Therefore, the probability of the suboptimal action increases after the update

From Proposition~\ref{prop:f_divergence_non_monotonic}, the first-order 
change in expected reward is $J(\pi_\theta') - J(\pi_\theta) = 
-\eta\,\mathrm{Cov}_{y\sim\pi_\theta}(r(y),\log(\pi_\theta(y)/\pi_T(y))) 
+ O(\eta^2)$. We compute:
\[
\mathrm{Cov}_{y\sim\pi_\theta}\!\left(r(y),\log\frac{\pi_\theta(y)}{\pi_T(y)}\right)
=
\mathbb{E}_{\pi_\theta}\!\left[r\log\frac{\pi_\theta}{\pi_T}\right]
-
\mathbb{E}_{\pi_\theta}[r]\cdot\mathbb{E}_{\pi_\theta}\!\left[\log\frac{\pi_\theta}{\pi_T}\right]
\approx 0.4048 - 0.275\times 0.768 \approx 0.1936 > 0.
\]
Therefore, for all sufficiently small $\eta > 0$,
\[
J(\pi_\theta') \leq J(\pi_\theta).
\]
Hence, even when $\Delta > 0$, reverse KL does not guarantee monotonic improvement.

\appsubsection{Proof of Proposition \ref{prop:grad_issue}: Local credit assignment can be strictly suboptimal}\label{subsec:app_local_gradient_issue}

Let $\mathcal A=\{a,b,c\}$ and consider a two-step decision process $(y_1,y_2)\in\mathcal A^2$.
We define the student policy $\pi_{u,v}$ by
\begin{align*}
\pi_{u,v}(y_1=\cdot)&=\mathrm{softmax}(u,0,0),\\
\pi_{u,v}(y_2=\cdot\mid y_1=a)&=\mathrm{softmax}(v,0,0),\\
\pi_{u,v}(y_2=\cdot\mid y_1=b)&=U,\\
\pi_{u,v}(y_2=\cdot\mid y_1=c)&=U,
\end{align*}
where $U$ is the uniform distribution over actions. Let the teacher policy $\pi_T$ be
\[
\pi_T(y_1)=U, \qquad
\pi_T(\cdot\mid y_1=a)=R, \qquad
\pi_T(\cdot\mid y_1=b)=\pi_T(\cdot\mid y_1=c)=U,
\]
where
\[
R=\left(\frac14,\frac{3+2\sqrt2}{8},\frac{3-2\sqrt2}{8}\right).
\]
Define the sequence-level objective
\[
\mathcal L_{\mathrm{seq}}(u,v)=\KL\bigl(\pi_{u,v}(y_1,y_2)\,\|\,\pi_T(y_1,y_2)\bigr),
\]
and let $\mathcal L_{\mathrm{curr}}$ denote the tokenwise local approximation that sums
per-step KL terms but treats the prefix distribution as fixed when differentiating
(as in SDPO/OPSD).

Define the reward by
\[
r_1(x,y_1)=\mathbf 1\{y_1=b\}, \qquad r_2(x,y_{1:2})=0.
\]
Since the context is deterministic, we suppress $x$ below.

Write
\[
p(u):=\Pr(y_1=a)=\frac{e^u}{e^u+2},
\qquad
\Pr(y_1=b)=\Pr(y_1=c)=\frac{1-p(u)}{2},
\]
and
\[
q(v):=\Pr(y_2=a\mid y_1=a)=\frac{e^v}{e^v+2}.
\]

By the chain rule for KL divergence,
\[
\mathcal L_{\mathrm{seq}}(u,v)
=
\KL\bigl(\pi_{u,v}(y_1)\|U\bigr)
+
\Pr(y_1=a)\,\KL\bigl(\pi_{u,v}(y_2\mid y_1=a)\|R\bigr).
\]
The first term equals
\[
K(p(u)), \qquad
K(p)=p\log(3p)+(1-p)\log\!\left(\frac{3(1-p)}{2}\right).
\]

Now define
\begin{align*}
D(v)
&:=\KL\bigl(\pi_{u,v}(y_2\mid y_1=a)\|R\bigr)\\
&=
q(v)\log\frac{q(v)}{1/4}
+\frac{1-q(v)}{2}\log\frac{(1-q(v))/2}{(3+2\sqrt2)/8}
+\frac{1-q(v)}{2}\log\frac{(1-q(v))/2}{(3-2\sqrt2)/8}.
\end{align*}
Since
\[
\left(\frac{3+2\sqrt2}{8}\right)\left(\frac{3-2\sqrt2}{8}\right)=\frac{1}{64},
\]
the last two terms combine to
\[
(1-q(v))\log\frac{(1-q(v))/2}{1/8}.
\]
Hence
\[
D(v)
=
q(v)\log(4q(v))+(1-q(v))\log\!\bigl(4(1-q(v))\bigr).
\]
Differentiating with respect to $q$ gives
\[
\frac{dD(v)}{dq}=\log\frac{q}{1-q},
\]
so the unique minimizer is $q=\tfrac12$, i.e. $v=\log 2$, and at this point
\[
\inf_v D(v)=\log 2.
\]

Substituting this into the sequence-level loss yields the effective one-dimensional objective
\[
F(p)=K(p)+p\log 2.
\]
Now
\[
K'(p)=\log\frac{2p}{1-p},
\]
so
\[
F'(p)=K'(p)+\log 2=\log\frac{2p}{1-p}+\log 2=\log\frac{4p}{1-p}.
\]
Also,
\[
F''(p)=\frac1p+\frac1{1-p}>0,
\]
so $F$ is strictly convex and has a unique minimizer at
\[
F'(p)=0
\quad\Longleftrightarrow\quad
\frac{4p}{1-p}=1
\quad\Longleftrightarrow\quad
p=\frac15.
\]
Since $p(u)=e^u/(e^u+2)$, this corresponds to $u=-\log 2$.

We now compare the local and full gradients at the initialization $(u,v)=(0,0)$.
Since
\[
p(0)=q(0)=\frac13,
\qquad
K'\!\left(\frac13\right)=0,
\]
the local approximation gives
\[
\frac{\partial \mathcal L_{\mathrm{curr}}}{\partial u}(u,v)
=
\frac{d}{du}K(p(u)),
\]
and therefore
\[
\frac{\partial \mathcal L_{\mathrm{curr}}}{\partial u}(0,0)=0.
\]
Hence local updates do not move $u$, so the resulting policy satisfies $p=\frac13$ and
\[
J(\pi_{\mathrm{local}})
=
\Pr(y_1=b)
=
\frac{1-p}{2}
=
\frac13.
\]

For the full sequence-level objective,
\[
\mathcal L_{\mathrm{seq}}(u,v)=K(p(u))+p(u)D(v),
\]
so
\[
\frac{\partial \mathcal L_{\mathrm{seq}}}{\partial u}(u,v)
=
p'(u)\bigl(K'(p(u))+D(v)\bigr),
\qquad
p'(u)=\frac{2e^u}{(e^u+2)^2}.
\]
At $(u,v)=(0,0)$, the term $K'(p(0))$ vanishes and $D(0)>0$, so
\[
\frac{\partial \mathcal L_{\mathrm{seq}}}{\partial u}(0,0)
=
p'(0)D(0)
=
\frac{2}{9}D(0)
>
0.
\]
Thus the full sequence-level gradient gives a strict update signal on the first-step parameter, while the local approximation does not.

Finally, since the unique minimizer of $F$ is at $p=\frac15$, the full sequence-level optimization converges to a policy with
\[
J(\pi_{\mathrm{seq}})
=
\Pr(y_1=b)
=
\frac{1-\frac15}{2}
=
\frac25.
\]
Therefore,
\[
J(\pi_{\mathrm{seq}})=\frac25>\frac13=J(\pi_{\mathrm{local}}),
\]
which proves the claim in Proposition \ref{prop:grad_issue}.

\appsection{Proofs for theoretical properties of DistIL}\label{sec:appendix_distil}

\appsubsection{Derivation for gradients of the DistIL objective \texorpdfstring{\eqref{eqn:grad_expression}}{(gradient expression)}}\label{subsec:app_grad_derivation}

We start by defining the per-step loss
\[
\ell_t(y,\theta)
:=
H^{\times}\Big(
\mathsf{sg}\big(\pi_{\theta}(\cdot \mid x, y_{1:t-1}, f)\big),
\;\pi_\theta(\cdot \mid x, y_{1:t-1})
\Big),
\]
so that 
\[
\mathcal{L}_{\mathrm{DistIL}}(\theta)
=
\mathbb{E}_{y \sim \pi_\theta}
\left[
\sum_{t=1}^{H} \ell_t(y,\theta)
\right].
\]

By Lemma~\ref{lem:trajectory_score_identity}, applied to the trajectory distribution $y \sim \pi_\theta$, we have
\begin{equation}
\nabla_\theta \mathcal{L}_{\mathrm{DistIL}}(\theta)
=
\mathbb{E}_{y \sim \pi_\theta}
\left[
\sum_{t=1}^{H}\nabla_\theta \ell_t(y,\theta)
\right]
+
\mathbb{E}_{\tau \sim \pi_\theta}
\left[
\left(\sum_{t=1}^{H} \ell_t(\tau,\theta)\right)
\nabla_\theta \log \pi_\theta(\tau)
\right].
\label{eqn:intermediate_grad_loss}
\end{equation}
Since the policy is autoregressive, we have 
$\log \pi_\theta(\tau)=\sum_{t=1}^{H}\log \pi_\theta(a_t\mid s_t)$ and $\nabla_\theta \log \pi_\theta(\tau)
=
\sum_{t=1}^{H}\nabla_\theta \log \pi_\theta(a_t\mid s_t)$. Upon using this in Eq. \ref{eqn:intermediate_grad_loss} gives
\[
\nabla_\theta \mathcal{L}_{\mathrm{DistIL}}(\theta)
=
\mathbb{E}_{\tau \sim \pi_\theta}
\left[
\sum_{t=1}^{H}\nabla_\theta \ell_t(\tau,\theta)
\right]
+
\mathbb{E}_{\tau \sim \pi_\theta}
\left[
\sum_{t=1}^{H}\sum_{i=1}^{H}
\ell_i(\tau,\theta)\,\nabla_\theta \log \pi_\theta(a_t\mid s_t)
\right].
\]

We now show that only future terms $i>t$ survive in the second expectation. Let us fix $t$ and consider any $i\le t$. The quantity $\ell_i(\tau,\theta)$ depends only on the prefix up to time $i$, and therefore is independent of the sampled action $a_t$ once the state $s_t$ is fixed. Consequently,
\[
\mathbb{E}_{a_t\sim \pi_\theta(\cdot\mid s_t)}
\Big[
\nabla_\theta \log \pi_\theta(a_t\mid s_t)\,\ell_i(\tau,\theta)
\Big]
=
\ell_i(\tau,\theta)\,
\mathbb{E}_{a_t\sim \pi_\theta(\cdot\mid s_t)}
\Big[
\nabla_\theta \log \pi_\theta(a_t\mid s_t)
\Big].
\]
By Lemma~\ref{lem:trajectory_score_identity},
\[
\mathbb{E}_{a_t\sim \pi_\theta(\cdot\mid s_t)}
\Big[
\nabla_\theta \log \pi_\theta(a_t\mid s_t)
\Big]
=0,
\]
so the contribution from all $i\le t$ vanishes. Therefore,
\[
\mathbb{E}_{\tau \sim \pi_\theta}
\left[
\left(\sum_{t=1}^{H} \ell_t(\tau,\theta)\right)
\nabla_\theta \log \pi_\theta(\tau)
\right]
=
\mathbb{E}_{\tau \sim \pi_\theta}
\left[
\sum_{t=1}^{H}
\nabla_\theta \log \pi_\theta(a_t\mid s_t)
\left(
\sum_{i>t}\ell_i(\tau,\theta)
\right)
\right].
\]
Substituting back the definition of $\ell_i$ gives
\begin{equation*}
\begin{split}
\nabla_\theta \mathcal{L}_{\mathrm{DistIL}}
= 
& \mathbb{E}_{\tau \sim \pi_\theta}
\left[
\sum_{t=1}^{H}
\nabla_\theta 
H^{\times}\Big(
\mathsf{sg}\big(\pi_{\theta}(\cdot \mid s_t, f)\big),
\;\pi_\theta(\cdot \mid s_t)
\Big)
\right]\\
&+
\mathbb{E}_{\tau \sim \pi_\theta}
\left[
\sum_{t=1}^{H}
\nabla_\theta \log \pi_\theta(a_t \mid s_t)
\left(
\sum_{i > t}
H^{\times}\Big(
\mathsf{sg}\big(\pi_{\theta}(\cdot \mid s_i, f)\big),
\;\pi_\theta(\cdot \mid s_i)
\Big)
\right)
\right].
\end{split}
\end{equation*}

\appsubsection{Soft-max tabular classes satisfy tangent-space realizability}\label{subsec:app_tangent_space_realizability}
\begin{proposition}
\label{prop:tangent_softmax}
Let $\mathcal{S}$ and $\mathcal{A}$ be finite sets, and consider a tabular softmax policy
\[
\pi_\theta(a \mid s)
=
\frac{\exp(\theta(s,a))}{\sum_{a' \in \mathcal{A}} \exp(\theta(s,a'))}.
\]
Then, for any fixed $\theta$ and any teacher policy $\pi_T$, there exists a vector
$u : \mathcal{S} \times \mathcal{A} \to \mathbb{R}$ such that for all
$s \in \mathcal{S}$ and $a \in \mathcal{A}$,
\[
\pi_T(a \mid s) - \pi_\theta(a \mid s)
=
\pi_\theta(a \mid s)\,
\nabla_\theta \log \pi_\theta(a \mid s)^\top u.
\]
\end{proposition}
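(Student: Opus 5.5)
The plan is to compute the score function of the tabular softmax explicitly and then choose $u$ state by state. For the parameterization $\pi_\theta(a\mid s)=\exp(\theta(s,a))/\sum_{a'}\exp(\theta(s,a'))$, differentiating with respect to the coordinate $\theta(s',a')$ gives
\[
\frac{\partial \log \pi_\theta(a\mid s)}{\partial \theta(s',a')}
=
\mathbf{1}\{s'=s\}\bigl(\mathbf{1}\{a'=a\}-\pi_\theta(a'\mid s)\bigr).
\]
Consequently, for any $u:\mathcal S\times\mathcal A\to\mathbb R$, only the block $u(s,\cdot)$ enters, and
\[
\nabla_\theta \log \pi_\theta(a\mid s)^\top u = u(s,a)-\sum_{a'}\pi_\theta(a'\mid s)\,u(s,a').
\]
The required identity therefore decouples across states. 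At each $s$ we need
\[
\pi_\theta(a\mid s)\Bigl(u(s,a)-\mathbb{E}_{a'\sim\pi_\theta(\cdot\mid s)}[u(s,a')]\Bigr)=\pi_T(a\mid s)-\pi_\theta(a\mid s)
\]
for every $a$.

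For the construction, softmax probabilities are strictly positive, so we may take
\[
u(s,a):=\frac{\pi_T(a\mid s)}{\pi_\theta(a\mid s)}-1.
\]
Its $\pi_\theta(\cdot\mid s)$-mean is $\sum_a\pi_T(a\mid s)-1=0$. The centered quantity is then just $u(s,a)$ itself, and multiplying by $\pi_\theta(a\mid s)$ returns $\pi_T(a\mid s)-\pi_\theta(a\mid s)$ exactly. Any shift of $u(s,\cdot)$ by a state-dependent constant also works, since it is annihilated by the centering. This reflects the non-identifiability of softmax logits and explains why $u$ is not unique.

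There is no real obstacle here. The only points to state carefully are, first, that positivity of $\pi_\theta$ under the softmax makes the ratio well defined. Second, the result is the first-order (probability-difference) form of realizability that is used in Proposition~\ref{prop:monotonic_main}'s argument. If one also wants the log-ratio form from that proposition, the same construction applies with $u(s,a):=\log\pi_T(a\mid s)-\log\pi_\theta(a\mid s)$ plus a per-state constant. This requires $\pi_T>0$, and it matches the log-ratio only up to the centering constant, which is harmless because the score has zero mean.
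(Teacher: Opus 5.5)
Your proof is correct and takes essentially the same route as the paper: you compute the tabular softmax score $\mathbf{1}\{a'=a\}-\pi_\theta(a'\mid s)$, reduce the identity to a centered condition at each state, and take $u$ to be the likelihood ratio $\pi_T/\pi_\theta$. The only cosmetic difference is that the paper uses $u(s,a)=\pi_T(a\mid s)/\pi_\theta(a\mid s)$, whose $\pi_\theta$-mean is $1$, instead of your version shifted by $-1$; as you note, this does not matter because per-state constants are annihilated by the centering.
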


\begin{proof}
We start by fixing $s \in \mathcal{S}$. Since $\pi_\theta(\cdot \mid s)$ is a softmax distribution,
$\pi_\theta(a \mid s) > 0$ for all $a \in \mathcal{A}$, so the ratio
$\pi_T(a \mid s) / \pi_\theta(a \mid s)$ is well-defined. For tabular softmax, the score function satisfies
\[
\frac{\partial \log \pi_\theta(a \mid s)}{\partial \theta(s,a')}
=
\mathbf{1}\{a'=a\} - \pi_\theta(a' \mid s).
\]
Thus, for any vector $u$, we have
\[
\nabla_\theta \log \pi_\theta(a \mid s)^\top u
=
u(s,a) - \sum_{a' \in \mathcal{A}} \pi_\theta(a' \mid s)\,u(s,a').
\]

Now, let us define
\[
u(s,a) := \frac{\pi_T(a \mid s)}{\pi_\theta(a \mid s)}.
\]
Then, we have
\[
\sum_{a'} \pi_\theta(a' \mid s)\,u(s,a')
=
\sum_{a'} \pi_T(a' \mid s)
=
1.
\]
Hence,
\[
\nabla_\theta \log \pi_\theta(a \mid s)^\top u
=
\frac{\pi_T(a \mid s)}{\pi_\theta(a \mid s)} - 1.
\]
Multiplying both sides by $\pi_\theta(a \mid s)$ gives
\[
\pi_\theta(a \mid s)\,\nabla_\theta \log \pi_\theta(a \mid s)^\top u
=
\pi_T(a \mid s) - \pi_\theta(a \mid s).
\]
\end{proof}

\appsubsection{Proof of Proposition \ref{prop:monotonic_main}: DistIL enjoys monotonic policy improvement}\label{subsec:app_monotonic}

Under the frozen state distribution, the gradient of $\mathcal{L}_{\mathrm{DistIL}}$ simplifies to
\[
\nabla_\theta \mathcal{L}_{\mathrm{DistIL}}(\theta)
=
- \mathbb{E}_{x \sim \rho,\; y \sim \pi_T(\cdot \mid x)}
\left[
\nabla_\theta \log \pi_\theta(y \mid x)
\right].
\]

From Lemma \ref{lem:trajectory_score_identity}, $
\mathbb{E}_{y \sim \pi_\theta(\cdot \mid x)}
[\nabla_\theta \log \pi_\theta(y \mid x)] = 0$. Therefore, the gradient can also be written as
\begin{align*}
    \nabla_\theta \mathcal{L}_{\mathrm{DistIL}}(\theta) = - \mathbb{E}_{x \sim \rho} \left[\sum_y \left( \pi_T(y \mid x) - \pi_\theta(y \mid x) \right) \nabla \log \pi_\theta(y \mid x)\right].
\end{align*}

Next, we apply the tangent assumption $
\pi_T(y \mid x) - \pi_\theta(y \mid x)
=
\pi_\theta(y \mid x)\,\nabla_\theta \log \pi_\theta(y \mid x)^\top u$  to obtain
\[
\nabla_\theta \mathcal{L}_{\mathrm{DistIL}}(\theta)
=
- \mathbb{E}_{x \sim \rho}
\left[
\sum_y
\pi_\theta(y \mid x)
\nabla_\theta \log \pi_\theta(y \mid x)
\nabla_\theta \log \pi_\theta(y \mid x)^\top
\right] u.
\]

We now recognize the Fisher information matrix, 
\[F(\theta)
=
\mathbb{E}_{x \sim \rho,\; y \sim \pi_\theta(\cdot \mid x)}
\left[
\nabla_\theta \log \pi_\theta(y \mid x)
\nabla_\theta \log \pi_\theta(y \mid x)^\top
\right]
\]
to conclude
\[
\nabla_\theta \mathcal{L}_{\mathrm{DistIL}}(\theta) = -F(\theta) u.
\]

Thus, the natural gradient update becomes
\[
\theta' = \theta + \eta u.
\]

We now analyze the change in the policy using a first-order Taylor expansion
\[
\pi_{\theta'}(y \mid x)
=
\pi_\theta(y \mid x)
+
\nabla_\theta \pi_\theta(y \mid x)^\top (\theta' - \theta)
+ O(\eta^2).
\]

Substituting $\theta' - \theta = \eta u$ gives
\[
\pi_{\theta'}(y \mid x)
=
\pi_\theta(y \mid x)
+
\eta \nabla_\theta \pi_\theta(y \mid x)^\top u
+ O(\eta^2).
\]

Using $
\nabla_\theta \pi_\theta(y \mid x)
=
\pi_\theta(y \mid x)\nabla_\theta \log \pi_\theta(y \mid x)$, we obtain
\[
\pi_{\theta'}(y \mid x)
=
\pi_\theta(y \mid x)
+
\eta \pi_\theta(y \mid x)\nabla_\theta \log \pi_\theta(y \mid x)^\top u
+ O(\eta^2).
\]

Next, we again apply the tangent assumption to obtain
\[
\pi_{\theta'}(y \mid x)
=
\pi_\theta(y \mid x)
+
\eta \big(\pi_T(y \mid x) - \pi_\theta(y \mid x)\big)
+ O(\eta^2).
\]

Multiplying by $r(x,y)$, summing over $y$, and taking expectation over $x \sim \rho$, we obtain
\[
\mathbb{E}_{x,y \sim \pi_{\theta'}}[r(x,y)]
=
\mathbb{E}_{x,y \sim \pi_\theta}[r(x,y)]
+
\eta
\left(
\mathbb{E}_{x,y \sim \pi_T}[r(x,y)]
-
\mathbb{E}_{x,y \sim \pi_\theta}[r(x,y)]
\right)
+ O(\eta^2).
\]

Finally, using the definition of $\Delta$ and $J(\pi)$, this becomes
\[
J(\pi_{\theta'})
=
J(\pi_\theta)
+ \eta \Delta + O(\eta^2),
\]

Since $\Delta > 0$, for sufficiently small $\eta > 0$, the update gives strict improvement.

\appsubsection{DistIL regret analysis}\label{subsec:app_regret}

This section proves the regret guarantee for DistIL. We begin by stating a natural-policy-gradient variant of DistIL in Algorithm \ref{alg:distil_npg}. We then provide definitions for quantities that appear in the analysis: ratio-based and KL-based concentrability (coverage) coefficients between the teacher and student policies \citep{rashidinejad2021bridging, chang2024dataset} as well as two imitation-learning quantities that measure the difficulty of matching the teacher, namely teacher-policy variance and teacher recoverability parameter \citep{foster2024is}. With these definitions in place, Appendix~\ref{app:regret_proof} proves the regret bound for DistIL.

\subsubsection{Natural-policy gradient variant of DistIL}\label{app:NPG_distil_algorithm}

NPG-DistIL instantiates DistIL with a natural policy gradient update rule. Rather than  applying a single global gradient step, the algorithm performs a \emph{per-state} mirror descent update following \citep{chang2024dataset}, where the mirror map is chosen to be the negative entropy, yielding the KL divergence as the associated Bregman divergence. Concretely, at each round $i$, the algorithm rolls out trajectories under the current student policy $\pi_{\theta_i}$, collects the set of visited state-action prefixes, and for each visited state $s$ minimizes a linearization of the statewise cross-entropy loss $\ell_s$ subject to a KL proximity penalty to the previous iterate $\pi_{\theta_i}(\cdot \mid s)$. The full procedure is detailed in Algorithm~\ref{alg:distil_npg}.

\begin{algorithm}[h]
\caption{NPG-DistIL}
\label{alg:distil_npg}
\begin{algorithmic}[1]
\Require Initial student policy $\pi_{\theta_1}$, teacher policy $\pi_T$, prompt distribution $\rho$, step size $\eta$, number of rounds $n$
\For{$i=1,\dots,n$}
    \State Sample a prompt batch $\mathcal{B}_i \sim \rho$
    \State Roll out trajectories $\mathcal{Y}_i(x) \sim \pi_{\theta_i}(\cdot \mid x)$ for each $x \in \mathcal{B}_i$
    \State Collect the visited prefixes $\mathcal{S}_i
        \gets
        \left\{
        (x, y_{1:t-1})
        \;:\;
        x \in \mathcal{B}_i,\;
        y \in \mathcal{Y}_i(x),\;
        t=1,\dots,|y|
        \right\}.$
    \For{each $s \in \mathcal{S}_i$}
        \State Define $\ell_s(\pi_{\theta_i}) \gets -\sum_{a\in\mathcal A}\pi_T(a\mid s)\log \pi_{\theta_i}(a\mid s)$
        \State Update the local policy by mirror descent
        \begin{equation}
            \pi_{\theta_{i+1}}(\cdot\mid s)
        \gets
        \arg\min_{q\in\Delta(\mathcal A)}
        \left\{
        \left\langle \nabla \ell_s(\pi_{\theta_i}), q \right\rangle
        +
        \frac{1}{\eta}D_{\mathrm{KL}}\!\left(q\|\pi_{\theta_i}(\cdot\mid s)\right)
        \right\}
        \label{eqn:mirror_descent_policy_update}
        \end{equation}
    \EndFor
\EndFor
\State \Return $\hat{\pi} = \pi_{\theta_{n+1}}$
\end{algorithmic}
\end{algorithm}

\subsubsection{Concentrability, variance, and recoverability coefficients}\label{app:regret_defs}

We begin by defining the ratio-based and KL-based concentrability coefficients used in our analysis, which are standard definitions in reinforcement learning \citep{rashidinejad2021bridging, chang2024dataset}.

\begin{definition}[\textbf{Ratio-based concentrability coefficient}]
\label{def:concentrability}
Let $\{\pi_{\theta_i}\}_{i \geq 1}$ denote the sequence of student policies generated by DistIL, and let $\pi_T$ denote the teacher policy. The concentrability coefficient is
\begin{align*}
    C := \sup_{i \geq 1} \sup_{s \in \mathcal{S},\, a \in \mathcal{A}}
    \frac{\pi_T(a \mid s)}{\pi_{\theta_i}(a \mid s)}.
\end{align*}
\end{definition}

\begin{definition}[\textbf{Initial teacher--student KL concentrability coefficient}]
\label{def:kl_concentrability}
Let $\pi_{\theta_1}$ denote the initial student policy and let $\pi_T$ denote the teacher policy. The initial KL concentrability coefficient is
\begin{align*}
C_0 := \sup_{s \in \mathcal{S}}
D_{\mathrm{KL}}\big(\pi_T(\cdot \mid s) || \pi_{\theta_1}(\cdot \mid s)\big).
\end{align*}
\end{definition}

Assuming finite concentrability coefficients $C, C_0 < \infty$ is natural in our setting. The teacher is typically a feedback-conditioned variant of the student policy, often obtained from the initial student or from an exponential-moving-average copy of the student \citep{hubotter2026reinforcement}. Thus, the teacher and student are expected to have substantial support overlap at the start of training. Moreover, natural policy gradient and trust-region methods such as PPO constrain successive policy changes and help preserve this overlap across iterates.

We next introduce two problem-dependent quantities, following \citet{foster2024is}, that capture the difficulty of imitating a stochastic teacher.

\begin{definition}[\textbf{Worst-case teacher variance}]
\label{def:teacher_variance}
For any policy $\pi$, let $P^{\pi \circ_h \pi_T}$ denote the trajectory distribution that follows $\pi$ up to time $h-1$ and then follows $\pi_T$ from time $h$ onward. Let $Q_h^{\pi_T}(s,a)$ be the teacher reward-to-go after taking action $a$ at state $s$ and then following $\pi_T$, and let $V_h^{\pi_T}(s)=\mathbb{E}_{a \sim \pi_T(\cdot \mid s)}[Q_h^{\pi_T}(s,a)]$. The teacher variance of $\pi_T$ under prefixes induced by $\pi$ is
\begin{align*}
\bar{\sigma}_{\pi_T \mid \pi}^2
:=
\sum_{h=1}^{H}
\mathbb{E}_{(s_h,a_h) \sim P^{\pi \circ_h \pi_T}}
\left[
\left(
Q_h^{\pi_T}(s_h,a_h)
-
V_h^{\pi_T}(s_h)
\right)^2
\right].
\end{align*}
The worst-case teacher variance is $\bar{\sigma}_{\pi_T}^2 := \sup_{\pi} \bar{\sigma}_{\pi_T \mid \pi}^2$.
\end{definition}

\begin{definition}[\textbf{Signed teacher recoverability parameter}]
\label{def:teacher_recoverability}
The teacher recoverability coefficient is
\begin{align*}
\mu_T
:=
\sup_{h \in [H]} \sup_{s \in \mathcal{S},, a \in \mathcal{A}}
\left|
V_h^{\pi_T}(s)
-
Q_h^{\pi_T}(s,a)
\right|.
\end{align*}
\end{definition}

The quantity $\bar{\sigma}_{\pi_T}^2$ characterizes the stochasticity of the teacher policy, and in particular, $\bar{\sigma}_{\pi_T}^2 = 0$ when $\pi_T$ is deterministic. The constant $\mu_T$ measures recoverability: the largest value loss caused by taking an arbitrary action at a state and then returning to the teacher thereafter. Thus, small $\mu_T$ corresponds to forgiving environments where deviations from the teacher can be corrected, while large $\mu_T$ captures settings where a single early mistake can have substantial long-term consequences. The recoverability parameter satisfies $0 \le \mu_T \le R$ when rewards are bounded by $R$. Together, these quantities describe the difficulty of imitation: the learner must cope both with stochastic teacher guidance and with the cost of deviating from it. Moreover, the two quantities are related via the bound $\bar{\sigma}_{\pi_T}^2 \le \mu_T^2 H$, which implies that the stochasticity of the teacher is controlled by the recoverability of the environment.

\subsubsection{Proof of Theorem \ref{thm:suboptimality_d_dagger}: DistIL regret guarantee}\label{app:regret_proof}

We prove the regret bound in three steps. First, we establish an auxiliary result (Theorem~\ref{thm:cross_entropy_loss_difference}) below, which bounds the averaged cross-entropy suboptimality of the NPG-DistIL iterates via a mirror descent analysis. Next, we convert this cross-entropy bound into an averaged Hellinger error using the standard KL-to-Hellinger inequality. Finally, we invoke Proposition~C.1 of \citep{foster2024is} to lift the Hellinger error into a policy suboptimality bound in terms of the teacher variance and recoverability parameters. We begin by stating the key auxiliary result and provide its detailed analysis in Appendix~\ref{app:cross_entropy_loss_difference_proof}.

\begin{theorem}[\textbf{Averaged cross-entropy suboptimality of NPG-DistIL}]
\label{thm:cross_entropy_loss_difference}
Assume finite ratio-based concentrability coefficient $C < \infty$ (Definition~\ref{def:concentrability}) and finite KL concentrability coefficient $C_0 < \infty$ (Definition~\ref{def:kl_concentrability}). Let $\{\pi_{\theta_i}\}_{i=1}^{n+1}$ be the iterates generated by Eq. \ref{eqn:mirror_descent_policy_update} in NPG-DistIL (Algorithm~\ref{alg:distil_npg}). Defining $\ell_i(\pi) = \sum_{t=1}^H \mathbb{E}_{s_t \sim d_t^{\pi_{\theta_i}}}[\ell_{s_t}(\pi)]$, where $\ell_s(\pi)$ is as given in Algorithm~\ref{alg:distil_npg}, the following bound holds upon choosing $
\eta = \sqrt{\frac{2C_0}{C^2 n}}$:
\[
\frac{1}{n}\sum_{i=1}^n \bigl(\ell_i(\pi_{\theta_i})-\ell_i(\pi_T)\bigr) \;\le\; HC\sqrt{\frac{2C_0}{n}}.
\]
\end{theorem}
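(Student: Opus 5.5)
We will prove this as a standard online mirror descent regret bound, carried out state by state. The losses $\ell_i$ change with $i$ only through the state distribution $d^{\pi_{\theta_i}}$, so the aim is to bound the per-state regret uniformly over states. Integrating against $d_t^{\pi_{\theta_i}}$ then handles the moving distribution for free.

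\textbf{Step 1: per-state three-point inequality.} Fix a state $s$ and write $p_i=\pi_{\theta_i}(\cdot\mid s)$ and $g_i=\nabla\ell_s(p_i)$, so that $g_i(a)=-\pi_T(a\mid s)/p_i(a)$. The update \eqref{eqn:mirror_descent_policy_update} is an entropic mirror descent step, and its closed form is $p_{i+1}(a)\propto p_i(a)\exp(-\eta g_i(a))$. Because $q\mapsto -\sum_a\pi_T(a)\log q(a)$ is convex on the simplex, we have
\[
\ell_s(p_i)-\ell_s(\pi_T)\le\langle g_i,p_i-\pi_T\rangle .
\]
The standard mirror descent three-point lemma then gives
\[
\langle g_i,p_i-\pi_T\rangle\le \frac{1}{\eta}\Bigl(D_{\mathrm{KL}}(\pi_T\|p_i)-D_{\mathrm{KL}}(\pi_T\|p_{i+1})\Bigr)+\frac{\eta}{2}\|g_i\|_\infty^2 .
\]
The second term comes from Pinsker's inequality, since negative entropy is $1$-strongly convex with respect to $\|\cdot\|_1$. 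By Definition~\ref{def:concentrability}, $\|g_i\|_\infty\le C$ for every $i$, so each local gradient term is at most $\eta C^2/2$.

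\textbf{Step 2: telescoping.} If every state were updated in every round, summing over $i$ would telescope to $C_0/\eta+\eta C^2 n/2$, using $D_{\mathrm{KL}}(\pi_T\|p_1)\le C_0$ from Definition~\ref{def:kl_concentrability}. Taking $\eta=\sqrt{2C_0/(C^2n)}$ would balance the two terms to $C\sqrt{2C_0 n}$. In the algorithm, however, only visited states are updated. For unvisited states, $p_{i+1}=p_i$ and the KL term is unchanged. I would take one of two routes here. The first is to regard the update as applied at every state (its population version), which makes the telescoping valid pointwise. The second is to argue that the KL potential $D_{\mathrm{KL}}(\pi_T\|p_i)$ is nonincreasing in $i$ at every state. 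This is the ``KL contracts'' recurrence mentioned in the paper's remark, and it follows from the three-point inequality whenever $\eta C^2/2$ is dominated by the linear decrease. With either route, the per-state bound $\sum_i[\ell_s(p_i)-\ell_s(\pi_T)]\le C\sqrt{2C_0n}$ holds uniformly in $s$.

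\textbf{Step 3: aggregation.} Write
\[
\ell_i(\pi_{\theta_i})-\ell_i(\pi_T)=\sum_{t=1}^H\mathbb{E}_{s\sim d_t^{\pi_{\theta_i}}}\bigl[\ell_s(p_i)-\ell_s(\pi_T)\bigr].
\]
The sticking point is that the sampling distribution depends on $i$, so the telescoping cannot be done inside a single expectation. I would avoid this by using the uniform per-state bound: define $\Phi_i(s)=D_{\mathrm{KL}}(\pi_T\|p_i)\ge 0$. Then each summand is at most $\frac{1}{\eta}\sup_s[\Phi_i(s)-\Phi_{i+1}(s)]+\eta C^2/2$, and by the monotonicity from Step 2 these suprema telescope to at most $\sup_s\Phi_1(s)\le C_0$. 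Summing over the $H$ timesteps multiplies everything by $H$, and dividing by $n$ gives $HC\sqrt{2C_0/n}$.

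I expect Step 2/3 to be the main obstacle: justifying that the KL potential contracts at every state despite the moving visitation distribution. A small but real technical point is that exchanging $\sup_s$ with the telescoping sum is valid only because of this monotonicity.
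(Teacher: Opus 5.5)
Your Step 1 is sound and is the same per-state mirror-descent inequality the paper derives (convexity, three-point identity, Young plus Pinsker, $\|g_i\|_\infty\le C$). Your route (i), applying the update at every state, is also what the paper implicitly assumes.

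\textbf{The gap is in Step 3.} You assert that, by monotonicity, $\sum_{i=1}^n \sup_s[\Phi_i(s)-\Phi_{i+1}(s)]\le \sup_s\Phi_1(s)$. Pointwise monotonicity only gives $\sup_s\sum_i[\Phi_i(s)-\Phi_{i+1}(s)]\le\sup_s\Phi_1(s)$, which has the sup and the sum in the opposite order. A counterexample: take $n$ states with $\Phi_1\equiv C_0$, and in round $i$ let only state $s_i$ drop to $0$. Then $\Phi$ is nonincreasing everywhere, yet every supremum equals $C_0$ and the sum is $nC_0$. Your bound would then be of order $HC_0/\eta\sim HC_0\sqrt n$ after averaging, not $HC\sqrt{2C_0/n}$. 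The monotonicity itself is also not established: the three-point inequality yields $\Phi_{i+1}(s)\le\Phi_i(s)$ only when $\Phi_i(s)\ge\eta C^2/2$. So a cumulative per-state regret bound that is uniform in $s$ does not transfer to $\sum_i\mathbb{E}_{s\sim d_t^{\pi_{\theta_i}}}[\cdot]$ when the visitation distribution moves.

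\textbf{The missing idea} is the identity $\ell_s(p)-\ell_s(\pi_T)=D_{\mathrm{KL}}(\pi_T(\cdot\mid s)\,\|\,p)$: the per-round loss gap \emph{is} the mirror-descent potential $\Phi_i(s)$.
\begin{itemize}
\item Substituting this into your Step 1 inequality gives $\eta\Phi_i(s)\le\Phi_i(s)-\Phi_{i+1}(s)+\eta^2C^2/2$.
\item This is a contraction at every state: $\Phi_{i+1}(s)\le(1-\eta)\Phi_i(s)+\eta^2C^2/2$.
\item Unrolling it gives $\Phi_i(s)\le(1-\eta)^{i-1}C_0+\eta C^2/2$, uniformly in $s$ and for each $i$.
\end{itemize}
Because this bounds the \emph{instantaneous} loss gap pointwise, its expectation under $d_t^{\pi_{\theta_i}}$ satisfies the same bound however the distribution shifts. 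Summing over $t$ and over $i$, using $\sum_i(1-\eta)^{i-1}\le1/\eta$ (which needs $\eta<1$, i.e.\ $n$ large enough), gives $H(C_0/\eta+n\eta C^2/2)$. The stated $\eta$ then yields $HC\sqrt{2C_0/n}$. This is the paper's argument, and it needs neither monotonicity nor any exchange of sup and sum.
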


 We now prove Theorem~\ref{thm:suboptimality_d_dagger} using the suboptimality established in Theorem~\ref{thm:cross_entropy_loss_difference}. For a fixed state $s_t$, using the definition of $\ell_{s_t}(\pi)$ from Algorithm \ref{alg:distil_npg}, a direct computation gives
\[
\ell_{s_t}(\pi) - \ell_{s_t}(\pi_T)
=
D_{\mathrm{KL}}\!\bigl(\pi_T(\cdot \mid s_t) \,\|\, \pi(\cdot \mid s_t)\bigr).
\]
Summing over the horizon, taking expectations over $s_t \sim d_t^{\pi_{\theta_i}}$,
and averaging over $i = 1, \ldots, n$ gives
\[
\epsilon_n
:=
\frac{1}{n}\sum_{i=1}^n \bigl(\ell_i(\pi_{\theta_i}) - \ell_i(\pi_T)\bigr)
=
\frac{1}{n}\sum_{i=1}^n \sum_{t=1}^H
\mathbb{E}_{s_t \sim d_t^{\pi_{\theta_i}}}\!\Bigl[
D_{\mathrm{KL}}\!\bigl(\pi_T(\cdot \mid s_t) \,\|\, \pi_{\theta_i}(\cdot \mid s_t)\bigr)
\Bigr].
\]

Next, for any pair of distributions $(p, q)$, the standard inequality $D_{\mathrm{H}}^2(p,q) \le \frac{1}{2} D_{\mathrm{KL}}(p \| q)$ holds. Applying this point-wise with $p = \pi_T(\cdot \mid s_t)$ and
$q = \pi_{\theta_i}(\cdot \mid s_t)$, then again taking expectations, summing over $t$,
and averaging over $i$ gives
\[
\mathcal{E}_n
:=
\frac{1}{n}\sum_{i=1}^n \sum_{t=1}^H
\mathbb{E}_{s_t \sim d_t^{\pi_{\theta_i}}}\!\Bigl[
D_{\mathrm{H}}^2\!\bigl(\pi_T(\cdot \mid s_t),\, \pi_{\theta_i}(\cdot \mid s_t)\bigr)
\Bigr]
\;\le\;
\frac{\epsilon_n}{2}.
\]

Now, we apply Proposition~C.1 of \citet{foster2024is}, which is valid in our online
setting and bounds regret by a variance term and a recoverability
term
\[
\begin{split}
J(\pi_T) - J(\hat{\pi})
&\;\lesssim\;
\sqrt{\bar{\sigma}_{\pi_T}^2\,\mathcal{E}_n}
+
\mu_T\,\mathcal{E}_n \\
&\;\leq\;
\sqrt{\frac{\bar{\sigma}_{\pi_T}^2}{2}\,\epsilon_n}
+
\frac{\mu_T}{2}\,\epsilon_n,
\qquad
\left(\because\; \mathcal{E}_n \le \frac{\epsilon_n}{2}\right).
\end{split}
\]

Finally, substituting the bound $\epsilon_n \le HC\sqrt{2C_0/n}$ from
Theorem~\ref{thm:cross_entropy_loss_difference} with
$\eta = \sqrt{2C_0/(C^2 n)}$ gives
\[
J(\pi_T) - J(\hat{\pi})
\;\lesssim\;
\sqrt{\bar{\sigma}_{\pi_T}^2\frac{HC}{2}}
\left(\frac{2C_0}{n}\right)^{1/4}
+
\frac{\mu_T HC}{2}\sqrt{\frac{2C_0}{n}}.
\]

\subsubsection{Proof of Theorem \ref{thm:cross_entropy_loss_difference}: Averaged cross-entropy suboptimality of NPG-DistIL}\label{app:cross_entropy_loss_difference_proof}
Let us consider the loss difference
\[
\ell_{s}(\pi)-\ell_{s}(\pi_T)
=
-\sum_{a\in\mathcal A}\pi_T(a\mid s)\log \pi(a\mid s)
+
\sum_{a\in\mathcal A}\pi_T(a\mid s)\log \pi_T(a\mid s) =
D_{\mathrm{KL}}\!\big(\pi_T(\cdot\mid s)\,\|\,\pi(\cdot\mid s)\big).
\]
Thus, 
\begin{equation}
\ell_i(\pi)-\ell_i(\pi_T)
=\sum_{t=1}^H \mathbb{E}_{s_t \sim d_t^{\pi_{\theta_i}}}
\Big[
D_{\mathrm{KL}}\!\big(\pi_T(\cdot\mid s_t)\,\|\,\pi(\cdot\mid s_t)\big)
\Big].
\label{eqn:loss_diff}
\end{equation}
We use shorthand $\pi_{\theta_i}(s) := \pi_{\theta_i}(\cdot \mid s)$ and $D_i(s)
:=
D_{\mathrm{KL}}\!\big(\pi_T(\cdot\mid s)\,\|\,\pi_{\theta_i}(\cdot\mid s)\big)$.
Let us now consider the mirror map to be the negative entropy, so that the associated Bregman divergence is the KL divergence. We run the state-conditioned mirror descent update:
\[
\pi_{\theta_{i+1}}(s)
\in
\arg\min_{q\in\Delta(\mathcal A)}
\left\{
\langle \nabla \ell_{s}(\pi_{\theta_i}), q \rangle
+
\frac{1}{\eta} D_{\mathrm{KL}}(q \| \pi_{\theta_i}(s))
\right\}.
\]
From Lemma \ref{lemma:mirror_descent_update}, we have
\[
\langle \eta \nabla \ell_{s}(\pi_{\theta_i}) + \nabla \Phi(\pi_{\theta_{i+1}}(s)) - \nabla \Phi(\pi_{\theta_i}(s)),\, \pi_T(s) - \pi_{\theta_{i+1}}(s) \rangle \ge 0.
\]
Upon rearranging, we have
\[
\eta \langle \nabla \ell_{s}(\pi_{\theta_i}), \pi_{\theta_{i+1}}(s) - \pi_T(s) \rangle
\le
\langle \nabla \Phi(\pi_{\theta_{i+1}}(s)) - \nabla \Phi(\pi_{\theta_i}(s)),\, \pi_T(s) - \pi_{\theta_{i+1}}(s) \rangle.
\]
Next, we use Lemma \ref{lemma:bregman_three_point} to obtain
\begin{equation}
\begin{aligned}
\eta \langle \nabla \ell_{s}(\pi_{\theta_i}), \pi_{\theta_{i+1}}(s) - \pi_T(s) \rangle
&\le
D_i(s)
-
D_{i+1}(s)
-
D_{\mathrm{KL}}(\pi_{\theta_{i+1}}(s)\|\pi_{\theta_i}(s)).
\end{aligned}
\label{eqn:three_point_new}
\end{equation}
Now, from assumption, we have,
\[
\left|
\frac{\partial \ell_{s}(\pi_{\theta_i})}{\partial \pi(a\mid s)}\right| = \left|\frac{\pi_T(a|s)}{\pi_{\theta_i}(a|s)}\right|\leq C.
\]
Next, we define
\[
\Delta^i(s) := \pi_{\theta_i}(s) - \pi_{\theta_{i+1}}(s).
\]
Then
\[
\eta \langle \nabla \ell_{s}(\pi_{\theta_i}), \pi_{\theta_i} - \pi_{\theta_{i+1}} \rangle
\le
\eta C \|\Delta^i(s)\|_1.
\]
Upon applying Young’s inequality $(ab\le \frac{a^2}{2}+\frac{b^2}{2})$, with $a = \eta C$ and $b=\Delta^i(s)$, we get
\[
\eta \langle \nabla \ell_{s}(\pi_{\theta_i}), \pi_{\theta_i} - \pi_{\theta_{i+1}} \rangle
\le
\frac{\eta^2 C^2}{2}
+
\frac{1}{2}\|\Delta^i(s)\|_1^2.
\]
Using Pinsker’s inequality,
\[
\frac{1}{2}\|\Delta^i(s)\|_1^2
\le
D_{\mathrm{KL}}(\pi_{\theta_{i+1}}(s)\|\pi_{\theta_i}(s)).
\]
Thus,
\begin{equation}
\eta \langle \nabla \ell_{s}(\pi_{\theta_i}), \pi_{\theta_i}(s) - \pi_{\theta_{i+1}}(s) \rangle
\le
\frac{\eta^2 C^2}{2}
+
D_{\mathrm{KL}}(\pi_{\theta_{i+1}}(s)\|\pi_{\theta_i}(s)).
\label{eqn:intermediate_kl_bound_new}
\end{equation}
Now, from convexity of \(\ell_s\), we have,
\[
\eta(\ell_s(\pi_{\theta_i})-\ell_s(\pi_T))
\le
\eta \langle \nabla \ell_s(\pi_{\theta_i}),\, \pi_{\theta_i}(s) - \pi_{\theta_{i+1}}(s) \rangle
+
\eta \langle \nabla \ell_s(\pi_{\theta_i}),\, \pi_{\theta_{i+1}}(s) - \pi_T(s) \rangle.
\]
Upon combining Equations \ref{eqn:three_point_new} and \ref{eqn:intermediate_kl_bound_new}, we obtain
\[
\eta(\ell_{s}(\pi_{\theta_i})-\ell_{s}(\pi_T))
\le
\frac{\eta^2 C^2}{2}
+
D_i(s) - D_{i+1}(s).
\]
Now, we reuse the definition of $D_i(s) = \ell_s(\pi_{\theta_i})-\ell_s(\pi_T)$, this gives
\[
\eta D_i(s)
\le
\frac{\eta^2 C^2}{2}
+
D_i(s) - D_{i+1}(s).
\]
Thus,
\begin{equation}
D_{i+1}(s)
\le
(1-\eta)D_i(s)
+
\frac{\eta^2 C^2}{2}.
\label{eqn:intermediate_recurssion}
\end{equation}
Next, we simplify this recursion. In particular, for a given state $s$, let us define
\[
a_i := D_i(s), \quad b := \frac{\eta^2 C^2}{2}.
\]
Therefore, Equation \ref{eqn:intermediate_recurssion} can be re-written as
\[
a_{i+1} \le (1-\eta)a_i + b.
\]

Using Lemma \ref{lem:recurrence_relation}, we get
\[
D_i(s) \le (1-\eta)^{i-1} D_1(s) + \frac{\eta^2 C^2}{2}\sum_{k=0}^{i-2}(1-\eta)^k, \quad \forall i \geq 1.
\]

Next, we bound the geometric sum:
\[
\sum_{k=0}^{i-2}(1-\eta)^k
\le
\sum_{k=0}^{\infty}(1-\eta)^k
=
\frac{1}{\eta}.
\]
Therefore,
\[
D_i(s)
\le
(1-\eta)^{i-1} D_1(s)
+
\frac{\eta^2 C^2}{2} \cdot \frac{1}{\eta}
=
(1-\eta)^{i-1} D_1(s)
+
\frac{\eta C^2}{2}.
\]
Using $D_1(s) \le C_0$, we obtain
\[
D_i(s)
\le
(1-\eta)^{i-1} C_0
+
\frac{\eta C^2}{2}.
\]
Now, considering Eq. \ref{eqn:loss_diff}, we have 
\[
\ell_i(\pi_{\theta_i})-\ell_i(\pi_T)
=
 \sum_{t=1}^H \mathbb{E}_{s_t \sim d_t^{\pi_{\theta_i}}}[D_i(s_t)]
\le
H(1-\eta)^{i-1}C_0 + H\frac{\eta C^2}{2}.
\]
Upon summing from $i=1$ to $n$ gives
\begin{align*}
\sum_{i=1}^{n}(\ell_i(\pi_{\theta_i})-\ell_i(\pi_T)) &\leq H\sum_{i=1}^{n} \left((1-\eta)^{i-1}C_0 + \frac{\eta C^2}{2}\right)\\
&=\frac{n\eta HC^2}{2} + HC_0\sum_{i=1}^{n}(1-\eta)^{i-1}\\
&\leq \frac{n\eta HC^2}{2} + \frac{HC_0}{\eta}\quad \quad \left(\because \sum_{i=1}^{n}(1-\eta)^{i-1} \leq \frac{1}{\eta}\right)
\end{align*} 
Upon dividing by $n$ gives
\begin{equation*}
    \frac{\sum_{i=1}^{n}(\ell_i(\pi_{\theta_i})-\ell_i(\pi_T))}{n} \leq \frac{\eta HC^2}{2} + \frac{HC_0}{\eta n}
\end{equation*}
Finally, choosing $
\eta = \sqrt{\frac{2C_0}{C^2 n}}$ completes the proof.

\appsubsection{Connections with Max RL}\label{subsec:app_max_rl_connection}

\subsubsection{Proof of Proposition \ref{prop:weighted_connection}: DistIL objective is a lower bound on the teacher-weighted likelihood of success}
\label{subsec:distil_max_rl_connection}

We begin with the definition of forward cross-entropy:
\[
H^{\times}\!\big(\pi_T(\cdot \mid x), \pi_\theta(\cdot \mid x)\big)
=
\sum_y \pi_T(y \mid x)\big[-\log \pi_\theta(y \mid x)\big].
\]
Next, we partition the sum into correct and incorrect responses:
\[
H^{\times}\!\big(\pi_T(\cdot \mid x), \pi_\theta(\cdot \mid x)\big)
=
\sum_{y: r(x,y)=1} \pi_T(y \mid x)\big[-\log \pi_\theta(y \mid x)\big]
+
\sum_{y: r(x,y)=0} \pi_T(y \mid x)\big[-\log \pi_\theta(y \mid x)\big].
\]

For any $y$ such that $r(x,y)=1$, we can write
\[
\pi_\theta(y \mid x)
=
p_{\pi_\theta}(x)\,\pi_\theta(y \mid x, r=1),
\]
where
\[
p_{\pi_\theta}(x)
=
\mathbb{P}_{y \sim \pi_\theta(\cdot \mid x)}[r(x,y)=1].
\]
Thus,
\[
-\log \pi_\theta(y \mid x)
=
-\log p_{\pi_\theta}(x)
-
\log \pi_\theta(y \mid x, r=1).
\]

Substituting this into the first term, we get
\begin{align*}
H^{\times}\!\big(\pi_T(\cdot \mid x), \pi_\theta(\cdot \mid x)\big)
=
- p_{\pi_T}(x)\log p_{\pi_\theta}(x)
&+
p_{\pi_T}(x)\,H^{\times}\!\big(\pi_T(\cdot \mid x,r=1), \pi_\theta(\cdot \mid x,r=1)\big)\\
&+
(1-p_{\pi_T}(x))\,\mathbb{E}_{y \sim \pi_T(\cdot \mid x,r=0)}[-\log \pi_\theta(y \mid x)].
\end{align*}

Next, we define
\[
\mathcal{R}(x)
:=
p_{\pi_T}(x)\,H^{\times}\!\big(\pi_T(\cdot \mid x,r=1), \pi_\theta(\cdot \mid x,r=1)\big)
+
(1-p_{\pi_T}(x))\,\mathbb{E}_{y \sim \pi_T(\cdot \mid x,r=0)}[-\log \pi_\theta(y \mid x)].
\]
Since both terms are nonnegative, we have $\mathcal{R}(x)\ge 0$. Therefore,
\[
H^{\times}\!\big(\pi_T(\cdot \mid x), \pi_\theta(\cdot \mid x)\big)
=
- p_{\pi_T}(x)\log p_{\pi_\theta}(x) + \mathcal{R}(x)
\ge
- p_{\pi_T}(x)\log p_{\pi_\theta}(x).
\]
Rearranging gives
\[
- H^{\times}\!\big(\pi_T(\cdot \mid x), \pi_\theta(\cdot \mid x)\big)
\le
p_{\pi_T}(x)\log p_{\pi_\theta}(x).
\]


\subsubsection{Reverse KL need not lower bound 
teacher-weighted likelihood of success}
\label{subsec:sdpo_max_rl_non_connection}

In Proposition~\ref{prop:weighted_connection}, we derived that DistIL objective is a lower bound on the teacher-weighted likelihood of success and therefore achieves better Pass@N. Next, we show that this need not hold true for SDPO. Proposition \ref{prop:reverse_kl_weighted_non_connection} formalizes this distinction by providing a counterexample in which the reverse-KL objective fails to provide such a lower bound. The construction considers a two-action setting in which the teacher assigns twice as much probability mass ($2\delta$) to the correct response as the student ($\delta$). As the teacher and student success probabilities simultaneously approach zero, the reverse-KL divergence scales as $O(\delta)$, whereas the magnitude of the teacher-weighted log-likelihood term,
$\bigl|p_{\pi_T}(x)\log p_{\pi_\theta}(x)\bigr|$, scales as $\Theta \bigl(\delta \log(1/\delta)\bigr)$. Since $\delta \log(1/\delta)$ dominates $\delta$ as $\delta \to 0$, the reverse-KL term becomes asymptotically negligible relative to the teacher-weighted log-likelihood term. Consequently, the reverse-KL objective need not provide a lower bound on the teacher-weighted log-likelihood of success.

\begin{proposition}[\textbf{Reverse KL need not lower bound 
teacher-weighted likelihood of success}]
\label{prop:reverse_kl_weighted_non_connection}
There exists a prompt $x$, a teacher policy $\pi_T$, and a student policy $\pi_\theta$ with $p_{\pi_T}(x) > p_{\pi_\theta}(x)$, i.e., the teacher is strictly better than the student, such that
\[
-D_{\mathrm{KL}}\!\big(\pi_\theta(\cdot\mid x)\,\|\,\pi_T(\cdot\mid x)\big)
\;>\;
p_{\pi_T}(x)\log p_{\pi_\theta}(x),
\]
and hence minimizing the reverse KL does not maximize a lower bound on the teacher-weighted log-likelihood of success.
\end{proposition}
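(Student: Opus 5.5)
We will prove Proposition~\ref{prop:reverse_kl_weighted_non_connection} with an explicit two-action counterexample, following the construction sketched just before the statement. Fix a single prompt $x$ with two responses: a correct one $y_1$ with $r(x,y_1)=1$ and an incorrect one $y_0$ with $r(x,y_0)=0$. For a parameter $\delta\in(0,1/2)$ set
\[
\pi_\theta(y_1\mid x)=\delta,\quad \pi_\theta(y_0\mid x)=1-\delta,\qquad
\pi_T(y_1\mid x)=2\delta,\quad \pi_T(y_0\mid x)=1-2\delta .
\]
Then $p_{\pi_\theta}(x)=\delta$ and $p_{\pi_T}(x)=2\delta$. Hence $p_{\pi_T}(x)>p_{\pi_\theta}(x)$, so the teacher is strictly better, as the statement requires.

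The next step is to compute both sides. The reverse KL is
\[
D_{\mathrm{KL}}(\pi_\theta\|\pi_T)=\delta\log\tfrac12+(1-\delta)\log\frac{1-\delta}{1-2\delta}.
\]
Expanding $\log\frac{1-\delta}{1-2\delta}=\delta+O(\delta^2)$ shows this equals $(1-\log 2)\delta+O(\delta^2)$, so it is $O(\delta)$. The right-hand side is $p_{\pi_T}(x)\log p_{\pi_\theta}(x)=2\delta\log\delta=-2\delta\log(1/\delta)$, which has magnitude $\Theta(\delta\log(1/\delta))$. The desired inequality $-D_{\mathrm{KL}}>2\delta\log\delta$ is equivalent to
\[
D_{\mathrm{KL}}(\pi_\theta\|\pi_T)<2\delta\log(1/\delta).
\]
This must hold for all small $\delta$, because $\delta\log(1/\delta)/\delta\to\infty$.

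To avoid relying on asymptotics, we will simply fix a concrete value, say $\delta=1/10$, and verify the inequality numerically. With this choice,
\[
D_{\mathrm{KL}}=\tfrac1{10}\log\tfrac12+\tfrac9{10}\log\tfrac98\approx-0.0693+0.1060\approx0.037,
\]
so $-D_{\mathrm{KL}}\approx-0.037$. On the other side, $2\delta\log\delta=0.2\log 0.1\approx-0.461$. Since $-0.037>-0.461$, the strict inequality holds.

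The only potential obstacle is ensuring the numerical bound is rigorous. The required margin is large, so crude bounds suffice:
\begin{itemize}
\item $\log(9/8)\le 1/8$, which gives $D_{\mathrm{KL}}\le 0.9/8=0.1125$;
\item $\log 10\ge 2$, which gives $0.2\log 10\ge 0.4$.
\end{itemize}
Together these yield $-D_{\mathrm{KL}}\ge-0.1125>-0.4\ge 2\delta\log\delta$.

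Finally, we conclude that minimizing $D_{\mathrm{KL}}(\pi_\theta\|\pi_T)$ is not maximizing a lower bound on $p_{\pi_T}(x)\log p_{\pi_\theta}(x)$. This contrasts with Proposition~\ref{prop:weighted_connection}.
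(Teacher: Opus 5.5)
Your proof is correct and uses the same two-response counterexample as the paper, namely $\pi_\theta(y^+\mid x)=\delta$ and $\pi_T(y^+\mid x)=2\delta$, contrasting the $O(\delta)$ reverse KL with the $\Theta(\delta\log(1/\delta))$ teacher-weighted term. The only difference is that you fix $\delta=1/10$ and verify the inequality with the explicit bounds $\log(9/8)\le 1/8$ and $\log 10\ge 2$; this makes fully rigorous the step the paper leaves as ``for all sufficiently small $\delta$.''
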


Now, we provide a detailed analysis of Proposition \ref{prop:reverse_kl_weighted_non_connection}. Consider a two-response setting $\mathcal{Y}=\{y^+,y^-\}$ with $r(x,y^+)=1$ and $r(x,y^-)=0$. For $\delta\in\left(0,\tfrac{1}{2}\right)$, set
\[
\pi_T(y^+\mid x)=2\delta, \qquad \pi_\theta(y^+\mid x)=\delta,
\]
so that $p_{\pi_T}(x)=2\delta > \delta = p_{\pi_\theta}(x)$. The reverse KL divergence evaluates to
\[
D_{\mathrm{KL}}(\pi_\theta\|\pi_T)
=
\delta\log\frac{\delta}{2\delta}
+
(1-\delta)\log\frac{1-\delta}{1-2\delta}
=
-\delta\log 2
+
(1-\delta)\log\frac{1-\delta}{1-2\delta}
\;\approx\;
\delta(1-\log 2)
\quad\text{as }\delta\to 0,
\]
where we used the Taylor expansion $(1-\delta)\log\frac{1-\delta}{1-2\delta}\approx\delta$ 
as $\delta\to 0$. Meanwhile, the teacher-weighted log-likelihood of success satisfies
\[
p_{\pi_T}(x)\log p_{\pi_\theta}(x)
=
2\delta\log\delta,
\]
so $-p_{\pi_T}(x)\log p_{\pi_\theta}(x) = -2\delta\log\delta$. Since 
$-2\delta\log\delta \gg \delta(1-\log 2)$ as $\delta\to 0$, we conclude that for all sufficiently small $\delta>0$,
\[
D_{\mathrm{KL}}\!\big(\pi_\theta(\cdot\mid x)\,\|\,\pi_T(\cdot\mid x)\big)
\;<\;
-p_{\pi_T}(x)\log p_{\pi_\theta}(x),
\]
which is equivalent to
\[
-D_{\mathrm{KL}}\!\big(\pi_\theta(\cdot\mid x)\,\|\,\pi_T(\cdot\mid x)\big)
\;>\;
p_{\pi_T}(x)\log p_{\pi_\theta}(x).
\]
Thus, unlike the forward cross-entropy of DistIL, the negative reverse KL cannot serve as a lower bound on the teacher-weighted log-likelihood of success, even when the teacher is strictly better than the student.

\appsubsection{Technical Lemmas}

\begin{lemma}[\textbf{Fisher identity for softmax policies}]
\label{lem:fisher_identity}
Let $\pi_\theta$ be a softmax policy over a finite action set $\mathcal{A}$, with
\[
\pi_\theta(a) = \frac{e^{\theta_a}}{\sum_{b\in\mathcal{A}} e^{\theta_b}}.
\]
Define the score vector $s_a := \nabla_\theta \log \pi_\theta(a)$,
and the Fisher information matrix $F(\theta)
=
\mathbb{E}_{a\sim\pi_\theta}\!\left[s_a s_a^\top\right]$. Then, for any $a,b\in\mathcal{A}$,
\[
s_a^\top F(\theta)^{-1} s_b
=
\frac{\mathbf{1}\{a=b\}}{\pi_\theta(b)} - 1.
\]
Equivalently, $
\pi_\theta(a)\, s_a^\top F(\theta)^{-1} s_b
=
\mathbf{1}\{a=b\} - \pi_\theta(a)$.
\end{lemma}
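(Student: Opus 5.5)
The approach is a direct computation in the softmax coordinates. The one subtlety is that $F(\theta)$ is singular, so $F(\theta)^{-1}$ has to be read as the Moore--Penrose pseudoinverse (equivalently, the inverse of $F$ restricted to the tangent subspace). That is also the reading under which the natural-gradient step in the proof of Proposition~\ref{prop:f_divergence_non_monotonic} makes sense.

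\emph{Step 1: scores and Fisher matrix.} Write $\pi\in\mathbb{R}^{|\mathcal A|}$ for the vector $(\pi_\theta(a))_a$ and $e_a$ for the standard basis vectors. Differentiating $\log\pi_\theta(a)=\theta_a-\log\sum_b e^{\theta_b}$ gives $s_a=e_a-\pi$. Then
\[
F(\theta)=\sum_a \pi_a (e_a-\pi)(e_a-\pi)^\top=\mathrm{diag}(\pi)-\pi\pi^\top .
\]
This gives two facts. First, $F\mathbf 1=\pi-\pi=0$, so $\ker F=\mathrm{span}(\mathbf 1)$; the kernel is exactly one-dimensional because $\pi_a>0$ for all $a$ and $\mathrm{diag}(\pi)$ minus a rank-one term has rank $|\mathcal A|-1$. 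Second, $\mathbf 1^\top s_a=1-1=0$, so every score lies in $\mathbf 1^\perp=\mathrm{range}(F)$.

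\emph{Step 2: solve $Fw=s_b$.} I will check the guess $w_b:=e_b/\pi_b$:
\[
F w_b=\mathrm{diag}(\pi)e_b/\pi_b-\pi\,\pi_b/\pi_b=e_b-\pi=s_b .
\]
Since $s_b\in\mathrm{range}(F)$, the pseudoinverse solution is $F^{+}s_b=w_b-c\mathbf 1$ for some scalar $c$, namely the projection of $w_b$ onto $\mathbf 1^\perp$.

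\emph{Step 3: the bilinear form.} Because $s_a\perp\mathbf 1$, the ambiguity $c\mathbf 1$ drops out of the inner product:
\[
s_a^\top F^{+}s_b=s_a^\top w_b=(e_a-\pi)^\top e_b/\pi_b=\frac{\mathbf 1\{a=b\}}{\pi_b}-1 .
\]
Multiplying by $\pi_a$ and using $\pi_a\mathbf 1\{a=b\}/\pi_b=\mathbf 1\{a=b\}$ gives the equivalent form $\pi_\theta(a)\,s_a^\top F^{-1}s_b=\mathbf 1\{a=b\}-\pi_\theta(a)$.

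The main obstacle is interpreting $F^{-1}$. I will state explicitly that the identity holds for the pseudoinverse, and for any generalized inverse applied to vectors in $\mathrm{range}(F)$, which is all that the NPG steps in the paper require. If one prefers a genuinely invertible Fisher matrix, an alternative is to fix one logit to $0$ (a minimal parameterization). The identity then follows by the same computation, since the scores and Fisher matrix transform covariantly and $s_a^\top F^{-1}s_b$ is invariant under reparameterization.
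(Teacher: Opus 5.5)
Your proof is correct and follows the same computation as the paper: you derive $s_a = e_a - \pi$ and $F(\theta) = \mathrm{diag}(\pi) - \pi\pi^\top$, exhibit an explicit preimage of $s_b$ (the paper uses $e_b/\pi_b - \mathbf{1}$ and you use $e_b/\pi_b$; these differ by an element of $\ker F$), and evaluate the bilinear form using $s_a^\top \mathbf{1} = 0$. The one difference is in your favor: the paper simply assumes $F(\theta)^{-1}$ exists, even though $F(\theta)\mathbf{1} = 0$ makes it singular for the full softmax parameterization, whereas you read $F^{-1}$ as the pseudoinverse (or any generalized inverse on $\mathrm{range}(F) = \mathbf{1}^\perp$) and show that the value does not depend on which preimage is chosen.
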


\begin{proof}
Let $m:=|\mathcal{A}|$ and we write $p_a := \pi_\theta(a)$. Let $e_a\in\mathbb{R}^m$ denote the $a$-th standard basis vector, and let $\mathbf{1}\in\mathbb{R}^m$ denote the all-ones vector. Then, for the softmax parameterization, we have
\[
\frac{\partial \log \pi_\theta(a)}{\partial \theta_c}
=
\mathbf{1}\{a=c\} - p_c.
\]
Hence, $s_a = e_a - p$, where $p=(p_1,\dots,p_m)^\top$.

Now, by definition, $F(\theta)
=
\mathbb{E}_{a\sim p}\!\left[(e_a-p)(e_a-p)^\top\right]
$. Upon expanding, we get $F(\theta)
=
\mathbb{E}_{a\sim p}[e_a e_a^\top] - p p^\top$. Since $
\mathbb{E}_{a\sim p}[e_a e_a^\top] = \mathrm{diag}(p)$, we obtain
\[
F(\theta)=\mathrm{diag}(p)-pp^\top.
\]

Next, we fix $b\in\mathcal{A}$, and define $v := \frac{e_b}{p_b} - \mathbf{1}$. We claim that $F(\theta)v=s_b$. For this, we note that
$
p^\top v
=
\sum_{a\in\mathcal{A}} p_a\left(\frac{\mathbf{1}\{a=b\}}{p_b}-1\right)
=
1-1
=
0$.
Therefore,
\[
F(\theta)v
=
(\mathrm{diag}(p)-pp^\top)v
=
\mathrm{diag}(p)v - p(p^\top v)
=
\mathrm{diag}(p)v.
\]
Now,
\[
\mathrm{diag}(p)v
=
\mathrm{diag}(p)\left(\frac{e_b}{p_b}-\mathbf{1}\right)
=
e_b-p
=
s_b.
\]
Thus, \(F(\theta)v=s_b\). Now, assuming that inverse of $F(\theta)$ exists, we have
\[
F(\theta)^{-1} s_b = v = \frac{e_b}{p_b}-\mathbf{1}.
\]

Finally, using $(s_a=e_a-p)$, we have $
s_a^\top F(\theta)^{-1} s_b
=
s_a^\top\left(\frac{e_b}{p_b}-\mathbf{1}\right)$. Since \(s_a^\top \mathbf{1} = 0\), this simplifies to
\[
s_a^\top F(\theta)^{-1} s_b
=
\frac{s_a^\top e_b}{p_b}
=
\frac{\mathbf{1}\{a=b\}-p_b}{p_b}
=
\frac{\mathbf{1}\{a=b\}}{p_b} - 1.
\]
Moreover, multiplying both sides by $(p_a)$ gives the equivalent form
\[
p_a\, s_a^\top F(\theta)^{-1} s_b
=
\mathbf{1}\{a=b\} - p_a.
\]
\end{proof}

\begin{lemma}[\textbf{Trajectory score-function identity}]
\label{lem:trajectory_score_identity}
Let $\tau \sim p_\theta(\tau)$ be a differentiable trajectory distribution, and let $f(\tau,\theta)$ be any differentiable function. Then
\[
\nabla_\theta \mathbb{E}_{\tau \sim p_\theta}[f(\tau,\theta)]
=
\mathbb{E}_{\tau \sim p_\theta}
\big[
f(\tau,\theta)\nabla_\theta \log p_\theta(\tau)
+
\nabla_\theta f(\tau,\theta)
\big].
\]
In particular, if $f$ does not depend on $\theta$, then $
\nabla_\theta \mathbb{E}_{\tau \sim p_\theta}[f(\tau)]
=
\mathbb{E}_{\tau \sim p_\theta}
\big[
f(\tau)\nabla_\theta \log p_\theta(\tau)
\big]$. Moreover,
\[
\mathbb{E}_{\tau \sim p_\theta}\big[\nabla_\theta \log p_\theta(\tau)\big]=0.
\]
\end{lemma}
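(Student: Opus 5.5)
The plan is to prove the three claims of Lemma~\ref{lem:trajectory_score_identity} in order, each building on the previous one. Throughout I work with a countable (in the LLM setting, finite) trajectory space, so that $\mathbb{E}_{\tau \sim p_\theta}[f(\tau,\theta)] = \sum_\tau p_\theta(\tau) f(\tau,\theta)$. I assume enough regularity to exchange $\nabla_\theta$ with the sum; this is automatic for finite sums. For the continuous case I would invoke dominated convergence under a standard integrability assumption.

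\textbf{Step 1 (main identity).} Differentiate the sum term by term and apply the product rule:
\[
\nabla_\theta \sum_\tau p_\theta(\tau) f(\tau,\theta) = \sum_\tau \big(\nabla_\theta p_\theta(\tau)\big) f(\tau,\theta) + \sum_\tau p_\theta(\tau)\nabla_\theta f(\tau,\theta).
\]
On trajectories with $p_\theta(\tau)>0$, use the log-derivative trick $\nabla_\theta p_\theta(\tau) = p_\theta(\tau)\nabla_\theta \log p_\theta(\tau)$. This turns both sums into expectations under $p_\theta$ and gives the first displayed formula.

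\textbf{Step 2 (special case).} If $f$ does not depend on $\theta$, the term $\nabla_\theta f$ vanishes, which gives the second formula.

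\textbf{Step 3 (zero-mean score).} Take $f \equiv 1$ in Step 2. The left-hand side is $\nabla_\theta \sum_\tau p_\theta(\tau) = \nabla_\theta 1 = 0$, so $\mathbb{E}_{p_\theta}[\nabla_\theta \log p_\theta(\tau)] = 0$.

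I would also note that the same argument applied to the single-step conditional $\pi_\theta(\cdot \mid s_t)$ gives $\mathbb{E}_{a_t}[\nabla_\theta \log \pi_\theta(a_t \mid s_t)] = 0$. This is the form used in the gradient derivation of Appendix~\ref{subsec:app_grad_derivation}.

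\textbf{Main obstacle.} The only delicate point is support. The log-derivative trick is undefined where $p_\theta(\tau)=0$, so I need those trajectories to contribute nothing to $\sum_\tau \nabla_\theta p_\theta(\tau) f(\tau,\theta)$. For softmax-parameterized autoregressive policies every trajectory has positive probability, so the issue never arises. In general I would assume the support of $p_\theta$ is locally constant in $\theta$; then $\nabla_\theta p_\theta(\tau)=0$ off the support and those terms drop. The interchange of derivative and sum is the other technical condition, and it is trivial in the finite-horizon, finite-vocabulary setting considered here.
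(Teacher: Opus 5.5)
Your proposal is correct and follows the paper's proof essentially line for line: differentiate the sum $\sum_\tau p_\theta(\tau) f(\tau,\theta)$ with the product rule, substitute $\nabla_\theta p_\theta = p_\theta \nabla_\theta \log p_\theta$, drop $\nabla_\theta f$ when $f$ does not depend on $\theta$, and obtain the zero-mean score from $\nabla_\theta \sum_\tau p_\theta(\tau) = \nabla_\theta 1 = 0$. Your $f\equiv 1$ specialization is the same computation, and the support caveat needs no extra assumption: a differentiable nonnegative $p_\theta(\tau)$ attains its minimum wherever it equals zero, so its gradient vanishes there and those trajectories contribute nothing.
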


\begin{proof}
Using the definition of expectation and differentiating it gives,
\[
\nabla_\theta \mathbb{E}_{\tau \sim p_\theta}[f(\tau,\theta)]
=
\nabla_\theta \sum_\tau p_\theta(\tau) f(\tau,\theta)
=
\sum_\tau \nabla_\theta p_\theta(\tau)\,f(\tau,\theta)
+
\sum_\tau p_\theta(\tau)\nabla_\theta f(\tau,\theta).
\]
Next, we use $
\nabla_\theta p_\theta(\tau)=p_\theta(\tau)\nabla_\theta \log p_\theta(\tau)$, to get 
\begin{align*}
\nabla_\theta \mathbb{E}_{\tau \sim p_\theta}[f(\tau,\theta)] &=
\sum_\tau p_\theta(\tau) f(\tau,\theta)\nabla_\theta \log p_\theta(\tau)
+
\sum_\tau p_\theta(\tau)\nabla_\theta f(\tau,\theta)\\
&=\mathbb{E}_{\tau \sim p_\theta}
\big[
f(\tau,\theta)\nabla_\theta \log p_\theta(\tau)
+
\nabla_\theta f(\tau,\theta)
\big]
,
\end{align*}

Now, if \(f\) does not depend on \(\theta\), the second term vanishes. Finally,
\[
\mathbb{E}_{\tau \sim p_\theta}\big[\nabla_\theta \log p_\theta(\tau)\big]
=
\sum_\tau p_\theta(\tau)\nabla_\theta \log p_\theta(\tau)
=
\sum_\tau \nabla_\theta p_\theta(\tau)
=
\nabla_\theta \sum_\tau p_\theta(\tau)
=
\nabla_\theta 1
=
0.
\]
\end{proof}

\begin{lemma} Let \(\Phi:\Pi\to \mathbb{R}\) be a differentiable strictly convex function, and let
\[
D_\Phi(\pi\|\rho)
:=
\Phi(\pi)-\Phi(\rho)-\langle \nabla \Phi(\rho),\,\pi-\rho\rangle
\]
denote the Bregman divergence induced by \(\Phi\). For the following mirror-descent trust-region update:
\[
\pi^{i+1}\in \arg\min_{\pi\in\Pi}
\left\{
\langle \nabla \ell_i(\pi^i),\pi\rangle
+\frac{1}{\eta}D_\Phi(\pi\|\pi^i)
\right\},
\]
we have, for all \(\pi\in\Pi\),
\[
\big\langle \eta\nabla \ell_i(\pi^i)+\nabla \Phi(\pi^{i+1})-\nabla \Phi(\pi^i),\,\pi-\pi^{i+1}\big\rangle \ge 0.
\]
\label{lemma:mirror_descent_update}
\end{lemma}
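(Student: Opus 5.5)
This is the standard first-order optimality condition for a constrained convex problem. The plan is to apply it to the mirror-descent subproblem and rewrite the gradient of the Bregman term.

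Define the objective of the trust-region step as
\[
G(\pi) := \langle \nabla \ell_i(\pi^i),\pi\rangle + \frac{1}{\eta}D_\Phi(\pi\|\pi^i).
\]
The feasible set $\Pi$ is convex; in our application it is the simplex $\Delta(\mathcal A)$. Since $\Phi$ is differentiable and strictly convex, $D_\Phi(\cdot\|\pi^i)$ is differentiable and convex in its first argument. Hence $G$ is differentiable and convex on $\Pi$.

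First, I compute the gradient of $G$. From the definition of the Bregman divergence,
\[
\nabla_\pi D_\Phi(\pi\|\pi^i) = \nabla\Phi(\pi) - \nabla\Phi(\pi^i),
\]
because the terms $\Phi(\pi^i)$ and $\langle\nabla\Phi(\pi^i),\pi^i\rangle$ are constant in $\pi$. Therefore
\[
\nabla G(\pi^{i+1}) = \nabla\ell_i(\pi^i) + \frac{1}{\eta}\bigl(\nabla\Phi(\pi^{i+1})-\nabla\Phi(\pi^i)\bigr).
\]

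Next, I apply the first-order optimality condition. Fix any $\pi\in\Pi$. Since $\pi^{i+1}$ minimizes $G$ over the convex set $\Pi$, the point $\pi^{i+1}+t(\pi-\pi^{i+1})$ is feasible for every $t\in[0,1]$. The function $\phi(t):=G(\pi^{i+1}+t(\pi-\pi^{i+1}))$ therefore attains its minimum on $[0,1]$ at $t=0$. This gives
\[
\phi'(0^+) = \langle \nabla G(\pi^{i+1}),\pi-\pi^{i+1}\rangle \ge 0.
\]
Multiplying this inequality by $\eta>0$ yields exactly
\[
\big\langle \eta\nabla \ell_i(\pi^i)+\nabla \Phi(\pi^{i+1})-\nabla \Phi(\pi^i),\,\pi-\pi^{i+1}\big\rangle \ge 0.
\]

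The main obstacle is justifying differentiability of $\Phi$ at the minimizer $\pi^{i+1}$. When $\Phi$ is the negative entropy, $\nabla\Phi$ blows up on the boundary of the simplex, so the directional-derivative argument needs $\pi^{i+1}$ to lie in the relative interior. I would handle this through the closed form of the KL-regularized step, $\pi^{i+1}(a)\propto \pi^i(a)\exp(-\eta\,\partial_a\ell_i(\pi^i))$. This shows that $\pi^{i+1}$ has full support whenever $\pi^i$ does, which holds under the finite concentrability $C$.

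A second point concerns the gradient $\nabla\Phi$ on the simplex, which is defined only up to adding a multiple of the all-ones vector $\mathbf 1$. This ambiguity is harmless, because $\langle \mathbf 1,\pi-\pi^{i+1}\rangle=0$ for any two points of the simplex.
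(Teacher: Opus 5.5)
Your proposal is correct and uses the same argument as the paper. The paper simply invokes the standard first-order optimality condition for the convex subproblem, with $\nabla_\pi D_\Phi(\pi\|\pi^i)=\nabla\Phi(\pi)-\nabla\Phi(\pi^i)$, and multiplies by $\eta$; your directional-derivative argument along the segment just spells that condition out. Your boundary remarks go beyond the paper, which says nothing about them, but one attribution is slightly off: finite $C$ only gives $\mathrm{supp}\,\pi_T\subseteq\mathrm{supp}\,\pi^i$, not full support, so the full support of the iterates should instead come from a full-support (e.g.\ softmax) initialization that the multiplicative closed form preserves.
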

\begin{proof}
The objective is convex in \(\pi\), and \(\pi^{i+1}\) is a minimizer over the convex set \(\Pi\). Therefore, the standard first-order optimality condition gives
\[
\big\langle \nabla \ell_i(\pi^i)+\frac{1}{\eta}\big(\nabla \Phi(\pi^{i+1})-\nabla \Phi(\pi^i)\big),\,\pi-\pi^{i+1}\big\rangle \ge 0,
\]
for all \(\pi\in\Pi\). Multiplying by \(\eta\) gives the final inequality.
\end{proof}

\begin{lemma} For any differentiable strictly convex \(\Phi\) and any \(u,v,w\) in its domain,
\[
\big\langle \nabla \Phi(u)-\nabla \Phi(v),\, w-u \big\rangle
=
D_\Phi(w,v)-D_\Phi(w,u)-D_\Phi(u,v).
\]
\label{lemma:bregman_three_point}
\end{lemma}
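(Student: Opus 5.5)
The identity is the standard three-point identity for Bregman divergences. I will verify it by expanding each of the three Bregman divergences on the right-hand side using the definition $D_\Phi(\pi\|\rho)=\Phi(\pi)-\Phi(\rho)-\langle\nabla\Phi(\rho),\pi-\rho\rangle$ from Lemma~\ref{lemma:mirror_descent_update}. Throughout, I read $D_\Phi(w,v)$ as $D_\Phi(w\|v)$. No convexity is actually needed for the identity itself; strict convexity only ensures the divergences are nonnegative, which matters when the lemma is used in the regret proof.

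Concretely, I write
\begin{align*}
D_\Phi(w,v)&=\Phi(w)-\Phi(v)-\langle\nabla\Phi(v),w-v\rangle,\\
D_\Phi(w,u)&=\Phi(w)-\Phi(u)-\langle\nabla\Phi(u),w-u\rangle,\\
D_\Phi(u,v)&=\Phi(u)-\Phi(v)-\langle\nabla\Phi(v),u-v\rangle.
\end{align*}
Forming $D_\Phi(w,v)-D_\Phi(w,u)-D_\Phi(u,v)$, the scalar terms cancel completely, since $\Phi(w)-\Phi(v)-\Phi(w)+\Phi(u)-\Phi(u)+\Phi(v)=0$. The remaining inner-product terms are
\[
-\langle\nabla\Phi(v),w-v\rangle+\langle\nabla\Phi(u),w-u\rangle+\langle\nabla\Phi(v),u-v\rangle .
\]
The two $\nabla\Phi(v)$ terms combine, by linearity, to $-\langle\nabla\Phi(v),(w-v)-(u-v)\rangle=-\langle\nabla\Phi(v),w-u\rangle$. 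Adding $\langle\nabla\Phi(u),w-u\rangle$ then gives exactly $\langle\nabla\Phi(u)-\nabla\Phi(v),w-u\rangle$, which is the left-hand side.

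There is no real obstacle here; the argument is pure bookkeeping. The only point requiring care is the argument order and sign convention, so that the result matches how the lemma is invoked in the proof of Theorem~\ref{thm:cross_entropy_loss_difference}. There it is used with $u=\pi_{\theta_{i+1}}(s)$, $v=\pi_{\theta_i}(s)$, $w=\pi_T(s)$, and negative entropy as $\Phi$, so that each $D_\Phi$ becomes a KL divergence.

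I would also note one further subtlety. On the simplex, the gradient of negative entropy is defined only up to an additive constant vector, but such a constant contributes nothing to inner products with differences of probability vectors. The identity therefore holds unambiguously in that application.
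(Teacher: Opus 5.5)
Your proof is correct and is the same as the paper's: both expand the three Bregman divergences from the definition, note that the $\Phi$-values cancel, and combine the two $\nabla\Phi(v)$ inner products by linearity to get $\langle \nabla\Phi(u)-\nabla\Phi(v),\, w-u\rangle$. Your side remarks are accurate but go beyond what the lemma needs: convexity plays no role in the identity, and the additive-constant ambiguity of the negative-entropy gradient on the simplex is harmless.
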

\begin{proof}
By definition,
\[
D_\Phi(x,y)=\Phi(x)-\Phi(y)-\langle \nabla \Phi(y),x-y\rangle.
\]
Upon expanding
\(D_\Phi(w,v)-D_\Phi(w,u)-D_\Phi(u,v)\)
and rearranging terms gives
\[
\Phi(w)-\Phi(v)-\langle \nabla\Phi(v),w-v\rangle
-\Phi(w)+\Phi(u)+\langle \nabla\Phi(u),w-u\rangle
-\Phi(u)+\Phi(v)+\langle \nabla\Phi(v),u-v\rangle,
\]
which simplifies to
\[
\langle \nabla \Phi(u)-\nabla \Phi(v),\, w-u \rangle.
\]
\end{proof}

\begin{lemma}
Let $\{a_i\}_{i\ge1}$ be a sequence satisfying $a_{i+1} \le (1-\eta)a_i + b$, where $\eta \in (0,1)$. Then, for all $i \ge 1$, 
\[
a_i \le (1-\eta)^{i-1} a_1 + b \sum_{k=0}^{i-2}(1-\eta)^k.
\]
\label{lem:recurrence_relation}
\end{lemma}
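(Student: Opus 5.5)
The plan is induction on $i$, unrolling the one-step inequality $a_{i+1}\le(1-\eta)a_i+b$. The base case $i=1$ is immediate, because the sum $\sum_{k=0}^{-1}(1-\eta)^k$ is empty and the claim reduces to $a_1\le a_1$.

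For the inductive step, assume
\[
a_i \le (1-\eta)^{i-1}a_1 + b\sum_{k=0}^{i-2}(1-\eta)^k .
\]
Since $\eta\in(0,1)$, the factor $1-\eta$ is positive, so multiplying this bound by $1-\eta$ preserves the inequality. Substituting into the hypothesis gives
\[
a_{i+1}\le (1-\eta)^{i}a_1 + b\sum_{k=0}^{i-2}(1-\eta)^{k+1} + b .
\]
Reindexing the sum with $k\mapsto k+1$ turns it into $\sum_{k=1}^{i-1}(1-\eta)^k$. The additive $b$ is exactly the missing $k=0$ term, so the right-hand side equals
\[
(1-\eta)^{i}a_1 + b\sum_{k=0}^{i-1}(1-\eta)^k ,
\]
which is the claim at $i+1$.

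The only step that needs care is the positivity of $1-\eta$: if $1-\eta$ were negative, multiplying the inductive bound would reverse it. The restriction $\eta\in(0,1)$ rules this out. Note also that the argument needs no sign assumption on $a_i$ or on $b$.
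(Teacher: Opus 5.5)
Your proposal is correct and follows essentially the same route as the paper: induction on $i$ with the trivial base case $a_1 \le a_1$, then multiplying the inductive bound by $1-\eta$, adding $b$, and reindexing the geometric sum so that $b$ supplies the $k=0$ term. The only difference is that you state explicitly that $1-\eta>0$ is what preserves the inequality, which the paper uses without comment.
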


\begin{proof}
We prove by induction on $i$. For $i=1$, the bound reduces to $a_1 \le a_1$, which holds trivially. Next, assume the statement holds for some $i \ge 1$. Using the recursion and the induction hypothesis,
\begin{align*}
a_{i+1}
&\le
(1-\eta)\left[(1-\eta)^{i-1} a_1 + b \sum_{k=0}^{i-2}(1-\eta)^k\right] + b\\
&= (1-\eta)^i a_1
+
b \sum_{k=0}^{i-2}(1-\eta)^{k+1}
+
b\\
&=(1-\eta)^i a_1
+  b \sum_{k=0}^{i-1}(1-\eta)^k.
\end{align*}
This establishes the claim for $i+1$ and completes the proof.
\end{proof}

\appsection{Experiment Details}
\label{app:exp_detail}

We use $4$ H200 GPUs for all our experiments. For prompts and implementation, we use SDPO's codebase \citep{sdpo_github} on Science and Coding benchmarks whereas OPSD's codebase \citep{zhao2026opsd} on Mathematics benchmarks.

For DistIL, we observe performance degradation for longer sequences $y \sim \pi_{\theta}(\cdot \mid x)$, as the cumulative future loss $\sum_{i>t}\mathrm{H}^{\times}(\cdot)$ grows with sequence length, leading to disproportionately large gradient magnitudes. To mitigate this effect, we apply length normalization, similar to \citet{gu2024minillm}. Specifically, we replace the cumulative loss in future-credit assignment in Eq. \eqref{eqn:grad_expression} with
\[
\tilde{L}_{t+1}
=
\frac{1}{T - t - 1}
\sum_{i > t}
H^{\times}\big(
\text{stopgrad}\big(\pi_{\theta}(\cdot \mid x, y_{1:i-1}, f)\big)
\;\|\;
\pi_\theta(\cdot \mid s_i)
\big).
\]

\appsubsection{Hyperparameters for Science and Coding Benchmark}
For Science and Coding benchmarks, we follow the implementation of SDPO and use their reported hyperparameters for baselines. The hyperparameters for SDPO, DistIL (Ours), and GRPO with its off-policy and on-policy variants are provided in Table \ref{tab:sdpo_hyperparameters_sci_coding}, \ref{tab:distil_hyperparameters_sci_coding}, and \ref{tab:grpo_hyperparameters_sci_coding} respectively.

\appsubsection{Hyperparameters for Maths task}
\label{app:eval_hyper_math}
For Mathematics benchmarks, we follow the implementation of OPSD and use their reported hyperparameters for baselines. The hyperparameters for SDPO, OPSD, DistIL (Ours), and GRPO are provided in Table \ref{tab:maths_train_hyperparameters} and that for SFT in Table \ref{tab:sft_hyperparameters}. Evaluation hyperparameters are provided in Table \ref{tab:eval_hyperparameters_qwen4b} for Qwen3-4B-2507-Instruct model and Table \ref{tab:eval_hyperparameters_qwen8b} for Qwen3-8B model. The only difference between OPSD and SDPO in this scenario is that OPSD uses Forward-KL divergence whereas SDPO uses reverse-KL divergence in these mathematics experiments.

\begin{figure}[t]
    \centering
    \includegraphics[width=0.24\linewidth]{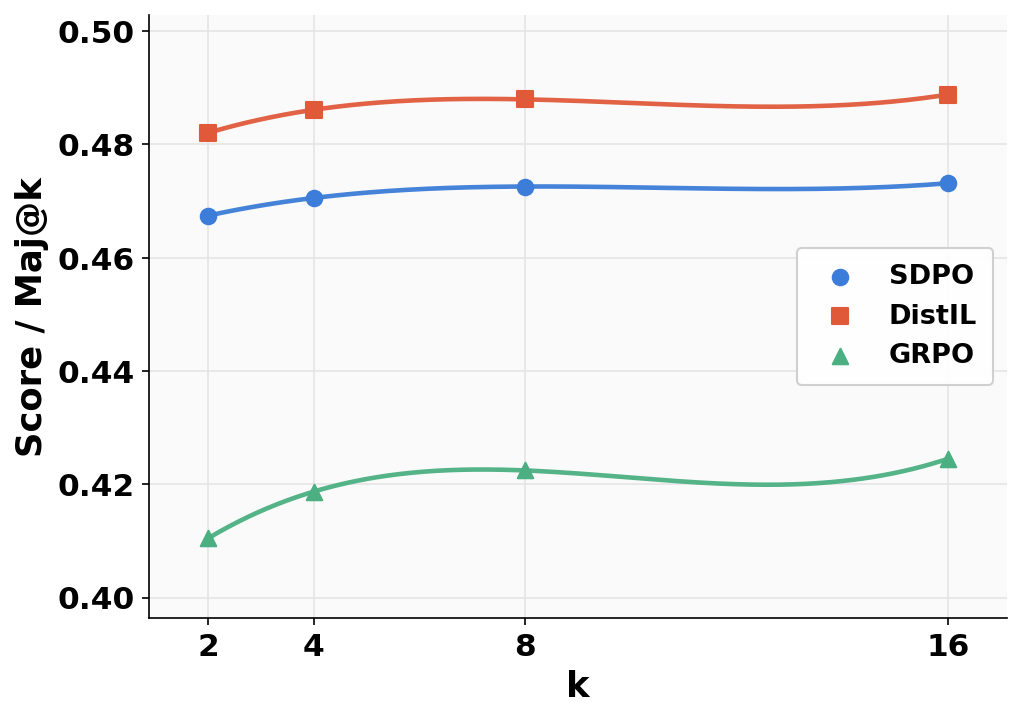}
    \includegraphics[width=0.24\linewidth]{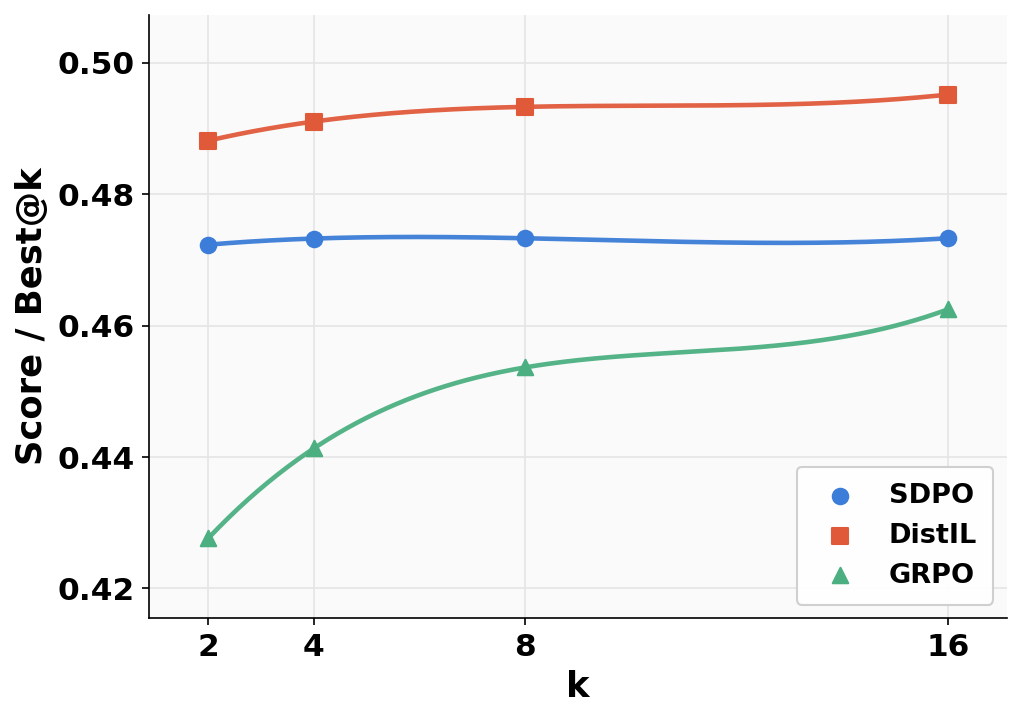}
    \includegraphics[width=0.24\linewidth]{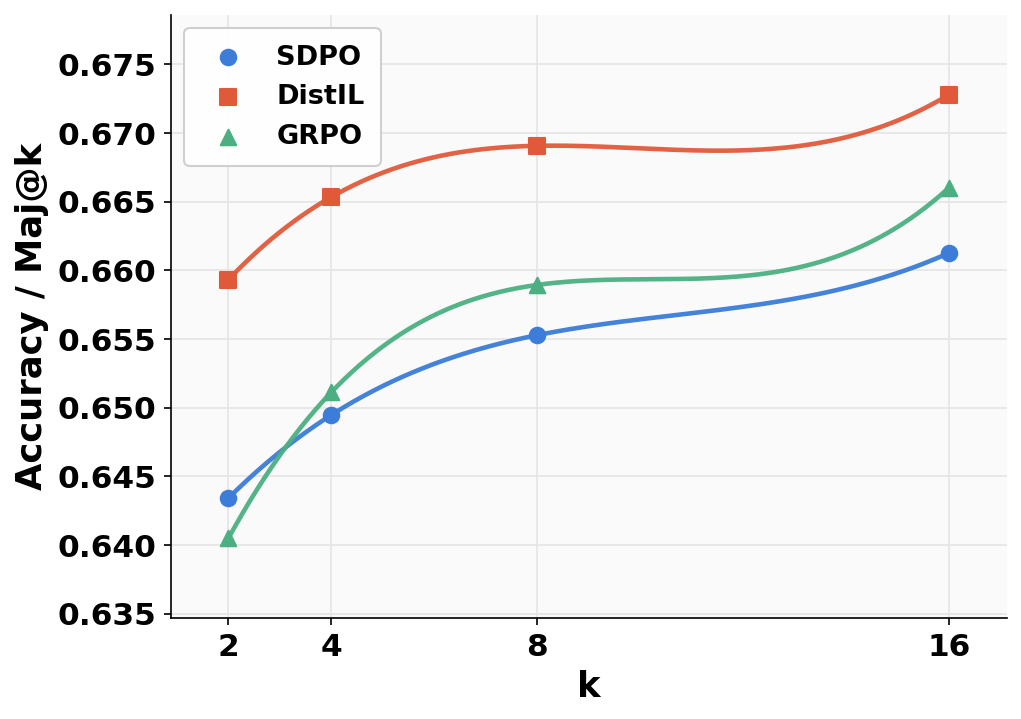}
    \includegraphics[width=0.24\linewidth]{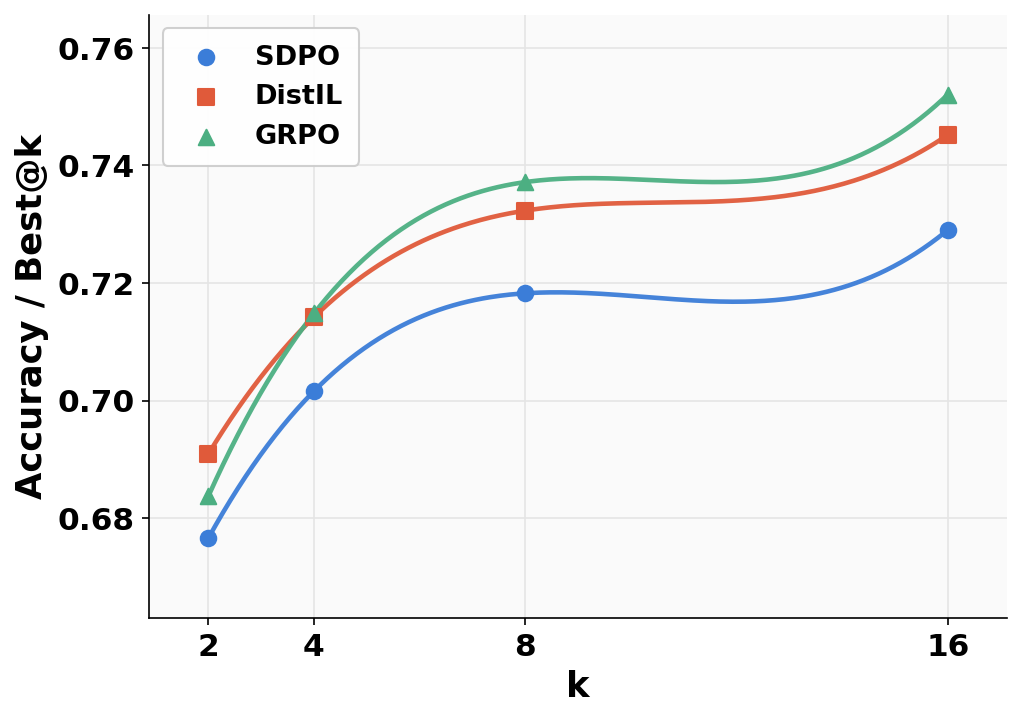}
    \caption{LCBv6 evaluation at $\tau{=}0.2$ (checkpoint-step 80), reporting Score and Accuracy at Best@$k$ and Maj@$k$ for $k \in \{2,4,8,16\}$.}
    \label{fig:lcb_eval_temp_0.6}
\end{figure}

\appsection{Additional Results}

\appsubsection{More results for Coding.}
SDPO evaluated their coding experiment at temperature = 0.6. We include results for that scenario in here. DistIL achieves Acc/Mean@16$\,{=}\,0.660$ and Score/Mean@16$\,{=}\,0.482$, outperforming
SDPO ($0.643$, $0.467$) and GRPO ($0.639$, $0.411$). The large gap over GRPO reflects its
fundamental inability to exploit execution feedback. Figure~\ref{fig:lcb_eval_temp_0.6}  notice that the same trend follows as that for temperature=$0.2$, except that GRPO becomes slightly better on Accuracy/Best@k for k=16.

\begin{table}[t]
\centering
\caption{Hyperparameters used for \textbf{SDPO} for each experimental setup.}
\small
\begin{tabular}{@{}l cc@{}}
\toprule
\textbf{Parameters} & \textbf{Science} & \textbf{Coding} \\
\midrule
\multicolumn{3}{@{}l}{\textbf{General}} \\
Model & \makecell[l]{Qwen/Qwen3-8B \\ allenai/Olmo3-7B-Instruct} & Qwen/Qwen3-8B \\
Thinking & False & False \\
\midrule
\multicolumn{3}{@{}l}{\textbf{Data}} \\
Max.\ prompt length   & 2048 & 2048 \\
Max.\ response length & 8192 & 8192 \\
\midrule
\multicolumn{3}{@{}l}{\textbf{Batching}} \\
Question batch size & 32 & 32 \\
Mini batch size     & 32 & 1  \\
Number of rollouts  & 8  & 8  \\
\midrule
\multicolumn{3}{@{}l}{\textbf{Rollout}} \\
Inference engine & vllm & vllm \\
Temperature      & 1.0  & 1.0  \\
\midrule
\multicolumn{3}{@{}l}{\textbf{Validation}} \\
Number of rollouts & 16   & 4    \\
Temperature        & 0.6  & 0.6  \\
Top-$p$            & 0.95 & 0.95 \\
\midrule
\multicolumn{3}{@{}l}{\textbf{SDPO Loss}} \\
Top-$K$ distillation             & 100             & 20         \\
Distillation divergence          & Jensen--Shannon & Reverse-KL \\
Clip advantages                  & --              & --         \\
Teacher-EMA update rate          & 0.05            & 0.01       \\
Rollout importance sampling clip & 2               & 2          \\
\midrule
\multicolumn{3}{@{}l}{\textbf{Training}} \\
Optimizer          & AdamW & AdamW \\
Learning rate      & $1\times10^{-5}$ (constant) & $1\times10^{-6}$ (constant) \\
Warmup steps       & 10   & 0    \\
Weight decay       & 0.01 & 0.01 \\
Gradient clip norm & 1.0  & 1.0  \\
\bottomrule
\end{tabular}
\label{tab:sdpo_hyperparameters_sci_coding}
\end{table}

\begin{table}[t]
\centering
\caption{Hyperparameters used for \textbf{DistIL} for each experimental setup.}
\small
\begin{tabular}{@{}l cc@{}}
\toprule
\textbf{Parameters} & \textbf{Science} & \textbf{Coding} \\
\midrule
\multicolumn{3}{@{}l}{\textbf{General}} \\
Model & \makecell[l]{Qwen/Qwen3-8B \\ allenai/Olmo3-7B-Instruct} & Qwen/Qwen3-8B \\
Thinking & False & False \\
\midrule
\multicolumn{3}{@{}l}{\textbf{Data}} \\
Max.\ prompt length   & 2048 & 2048 \\
Max.\ response length & 8192 & 8192 \\
\midrule
\multicolumn{3}{@{}l}{\textbf{Batching}} \\
Question batch size & 32 & 32 \\
Mini batch size     & 32 & 1  \\
Number of rollouts  & 8  & 8  \\
\midrule
\multicolumn{3}{@{}l}{\textbf{Rollout}} \\
Inference engine & vllm & vllm \\
Temperature      & 1.0  & 1.0  \\
\midrule
\multicolumn{3}{@{}l}{\textbf{Validation}} \\
Number of rollouts & 16   & 4    \\
Temperature        & 0.6  & 0.6  \\
Top-$p$            & 0.95 & 0.95 \\
\midrule
\multicolumn{3}{@{}l}{\textbf{SDPO Loss}} \\
Top-$K$ distillation             & 100             & 20         \\
Distillation divergence          & Jensen--Shannon & Reverse-KL \\
Clip advantages                  & --              & --         \\
Teacher-EMA update rate          & 0.01            & 0.01       \\
Rollout importance sampling clip & 2               & 2          \\
\midrule
\multicolumn{3}{@{}l}{\textbf{Training}} \\
Optimizer          & AdamW & AdamW \\
Learning rate      & $5\times10^{-5}$ (constant) & $1\times10^{-6}$ (constant) \\
Warmup steps       & 10   & 0    \\
Weight decay       & 0.01 & 0.01 \\
Gradient clip norm & 1.0  & 1.0  \\
\bottomrule
\end{tabular}
\label{tab:distil_hyperparameters_sci_coding}
\end{table}

\begin{table}[t]
\centering
\caption{Hyperparameters used for \textbf{GRPO} (On-Policy) and (Off-Policy) for Science and Coding.}
\small
\begin{tabular}{@{}l l@{}}
\toprule
\textbf{Parameters} & \textbf{Value} \\
\midrule
\multicolumn{2}{@{}l}{\textbf{General}} \\
Model   & \makecell[l]{Qwen/Qwen3-8B \\ allenai/Olmo3-7B-Instruct} \\
Thinking & False \\
\midrule
\multicolumn{2}{@{}l}{\textbf{Data}} \\
Max.\ prompt length   & 2048 \\
Max.\ response length & 8192 \\
\midrule
\multicolumn{2}{@{}l}{\textbf{Batching}} \\
Question batch size & 32 \\
Mini batch size     & 8 (default) / 32 (on-policy) \\
Number of rollouts  & 8 \\
\midrule
\multicolumn{2}{@{}l}{\textbf{Rollout}} \\
Inference engine & vllm \\
Temperature      & 1.0 \\
\midrule
\multicolumn{2}{@{}l}{\textbf{Validation}} \\
Temperature        & 0.6 \\
Top-$p$            & 0.95 \\
Number of rollouts & 16 \\
\midrule
\multicolumn{2}{@{}l}{\textbf{Loss}} \\
$\epsilon$-high                  & 0.28 \\
Rollout importance sampling clip & 2 \\
KL coefficient ($\lambda$)       & 0.0 \\
\midrule
\multicolumn{2}{@{}l}{\textbf{Training}} \\
Optimizer          & AdamW \\
Learning rate      & $1\times10^{-6}$ (default) / $1\times10^{-5}$ (on-policy) \\
Warmup steps       & 10 \\
Weight decay       & 0.01 \\
Gradient clip norm & 1.0 \\
\bottomrule
\end{tabular}
\label{tab:grpo_hyperparameters_sci_coding}
\end{table}

\begin{table}[t]
\centering
\caption{Training configuration for GRPO, OPSD, SDPO, DistIL on Mathematics benchmarks.}
\small
\begin{tabular}{@{}l cc@{}}
\toprule
\textbf{Parameter} & \textbf{GRPO} & \textbf{OPSD/SDPO/DistIL} \\
\midrule
Learning Rate         & $5\times10^{-6}$ & $5\times10^{-6}$ \\
Effective Batch Size  & 32 & 32 \\
\midrule
LoRA Rank ($r$)       & 64 & 64 \\
LoRA Alpha ($\alpha$) & 128 & 128 \\
LoRA Target Modules   & \multicolumn{2}{c}{\texttt{q\_proj, k\_proj, v\_proj, o\_proj, gate\_proj, up\_proj, down\_proj}} \\
\midrule
Max Completion Length & 16{,}000 & 16{,}384 \\
\midrule
Number of Generations per Prompt & 8   & 1   \\
Sampling Temperature             & 0.7 & 0.7 \\
KL Coefficient ($\beta$)         & 0.0 & --  \\
Training Steps                   & 500 & 100 \\
\bottomrule
\end{tabular}
\label{tab:maths_train_hyperparameters}
\end{table}

\begin{table}[t]
\centering
\caption{Training configuration for SFT on Mathematics benchmarks.}
\small
\begin{tabular}{@{}l l@{}}
\toprule
\textbf{Parameter} & \textbf{SFT} \\
\midrule
Learning Rate         & $5\times10^{-6}$ \\
Effective Batch Size  & 32 \\
\midrule
LoRA Rank ($r$)       & 64 \\
LoRA Alpha ($\alpha$) & 128 \\
LoRA Target Modules   & \texttt{q\_proj, k\_proj, v\_proj, o\_proj, gate\_proj, up\_proj, down\_proj} \\
\midrule
Max Sequence Length      & 16{,}000 \\
Number of Training Steps & 100 \\
\bottomrule
\end{tabular}
\label{tab:sft_hyperparameters}
\end{table}

\begin{table}[t]
\centering
\caption{Evaluation parameters for Qwen3-4B-Instruct-2507 Model on Maths Task.}
\small
\begin{tabular}{@{}l l@{}}
\toprule
\textbf{Parameter} & \textbf{Value} \\
\midrule
Max New Tokens       & 16{,}384 \\
Thinking Mode        & Enabled \\
Top-$p$              & 0.95 \\
Top-$k$              & $20$ \\
Min-$p$              & 0.0 \\
Presence Penalty     & 0.0 \\
Samples per Prompt   & 64 \\
Temperature          & 0.7 \\
\bottomrule
\end{tabular}
\label{tab:eval_hyperparameters_qwen4b}
\end{table}

\begin{table}[t]
\centering
\caption{Evaluation parameters for Qwen3-8B Model on Maths Task.}
\small
\begin{tabular}{@{}l l@{}}
\toprule
\textbf{Parameter} & \textbf{Value} \\
\midrule
Max New Tokens       & 38{,}912 \\
Thinking Mode        & Enabled \\
Top-$p$              & 1.0 \\
Top-$k$              & $-1$ \\
Min-$p$              & 0.0 \\
Presence Penalty     & 0.0 \\
Samples per Prompt   & 64 \\
Temperature          & 1.0 \\
\bottomrule
\end{tabular}
\label{tab:eval_hyperparameters_qwen8b}
\end{table}

\end{document}